\documentclass[10pt,twocolumn,letterpaper]{article}

\usepackage{iccv}
\usepackage{times}
\usepackage{epsfig}
\usepackage{graphicx}
\usepackage{amsmath}
\usepackage{amssymb}
\usepackage{amsthm}

% Include other packages here, before hyperref.

% If you comment hyperref and then uncomment it, you should delete
% egpaper.aux before re-running latex.  (Or just hit 'q' on the first latex
% run, let it finish, and you should be clear).
\usepackage[pagebackref=true,breaklinks=true,letterpaper=true,colorlinks,bookmarks=false]{hyperref}

\iccvfinalcopy % *** Uncomment this line for the final submission

 % *** Enter the ICCV Paper ID here

% Pages are numbered in submission mode, and unnumbered in camera-ready
\ificcvfinal\pagestyle{empty}\fi

\usepackage{caption}
\usepackage{amsmath}
\usepackage{amssymb}
\usepackage{graphicx}
\usepackage{amssymb}
\usepackage{subcaption}
\usepackage{multirow}
\usepackage{hyperref}
\usepackage{enumitem}
\setlist{nosep, leftmargin=14pt}
\usepackage[utf8]{inputenc}
\usepackage{bbm}
\usepackage[export]{adjustbox}
\usepackage{xcolor}
\usepackage{multicol}
\usepackage{makecell}
\usepackage{mathtools}
\usepackage{algorithm}
\usepackage{algpseudocode}
\usepackage{listings}
\usepackage{tikz-cd} 
\usepackage{gensymb}
\usepackage{cases}

\newcommand{\discint}[2]{[\![ #1, #2 ]\!]}

\newcommand{\partsof}[1]{\mathcal{P}(#1)}

\newcommand{\card}[1]{|#1|}

\newcommand{\esp}{\mathbb{E}}

\newcommand{\rR}{\mathbb{R}}

\newcommand{\N}{\mathbb{N}}
\newcommand{\zZ}{\mathbb{Z}}
\newcommand{\lL}{\mathcal{L}}

\newcommand{\I}{\mathcal{I}}

\newcommand{\dil}[2]{#1 \oplus #2}
\newcommand{\ero}[2]{#1 \ominus #2}

\newcommand{\compl}[1]{#1^C}

\newcommand{\indicator}[1]{\mathbbm{1}_{#1}}
\newcommand{\conv}[2]{#1 \circledast #2}

\newcommand\abs[1]{\lvert#1\rvert}

\newcommand{\bise}{\chi}

\newcommand{\lui}{\ensuremath{\text{LUI}}}
\newcommand{\bisel}{\phi}
\newcommand{\bimonn}{\ensuremath{\text{BiMoNN}}}
\newcommand{\bimonnmath}{\Gamma}
\newcommand{\floatbimonn}{\bimonnmath_{\mathbb{R}}}

\newcommand{\mnist}{MNIST}

\newcommand{\kbar}{\overline{\mathbb K}}

\newcommand{\argmin}[1]{\underset{#1}{\ensuremath{\text{argmin}}}\text{ }}
\newcommand{\argmax}[1]{\underset{#1}{\ensuremath{\text{argmax}}}\text{ }}

\newcommand{\defeq}{\coloneqq}
\newcommand{\paramweight}{\omega}
\newcommand{\parambias}{\beta}
\newcommand{\minimize}[2]{\ensuremath{\underset{\substack{{#1}}}{\mathrm{minimize}}\;\;#2 }}

\newcommand{\binaryimage}{\{0, 1\}^{\Omega_I}}

\newcommand{\almostbinary}[2][]{\I#1(#2#1)}

\newcommand{\thresh}{\xi}

\newcommand{\biseparamed}{\bise^{\thresh,W,B}_{\paramweight,\parambias,p}}
\newcommand{\abparam}{\delta}
\newcommand{\imdim}{\ensuremath{D}}
\newcommand{\positive}[1]{[#1]_+}
\newcommand{\negative}[1]{[#1]_-}

\newcommand{\diff}[2]{\frac{\mathrm{d}#1}{\mathrm{d}#2}}

\algnewcommand{\LineComment}[1]{\State \(\triangleright\) #1}
\algnewcommand{\Input}{\State \textbf{Input:~}}
\algnewcommand{\Output}{\State \textbf{Output:~}}

%%%%%%%%%%%%%%%%%%%%%%%%%%%%%%%%
% THEOREMS
%%%%%%%%%%%%%%%%%%%%%%%%%%%%%%%%
\theoremstyle{plain}
\newtheorem{theorem}{Theorem}[section]
\newtheorem{proposition}[theorem]{Proposition}
\newtheorem{lemma}[theorem]{Lemma}
\newtheorem{corollary}[theorem]{Corollary}
\theoremstyle{definition}
\newtheorem{definition}[theorem]{Definition}

\theoremstyle{remark}

\begin{document}

%%%%%%%%% TITLE
\title{A foundation for exact binarized morphological neural networks}

\author{Theodore Aouad\\
Universite Paris-Saclay, CentraleSupelec, Inria, CVN\\
3 rue Joliot Curie, Gif-sur-Yvette, France\\
{\tt\small theodore.aouad@centralesupelec.fr}
% For a paper whose authors are all at the same institution,
% omit the following lines up until the closing ``}''.
% Additional authors and addresses can be added with ``\and'',
% just like the second author.
% To save space, use either the email address or home page, not both
\and
Hugues Talbot\\
Universite Paris-Saclay, CentraleSupelec, Inria, CVN\\
3 rue Joliot Curie, Gif-sur-Yvette, France\\
{\tt\small hugues.tablot@centralesupelec.fr}
}

\maketitle
% Remove page # from the first page of camera-ready.
\ificcvfinal\thispagestyle{empty}\fi

%%%%%%%%% ABSTRACT
\begin{abstract}

Training and running deep neural networks (NNs) often demands a lot of computation and energy-intensive specialized hardware (e.g. GPU, TPU...). One way to reduce the computation and power cost is to use binary weight NNs, but these are hard to train because the sign function has a non-smooth gradient. We present a model based on Mathematical Morphology (MM), which can binarize ConvNets without losing performance under certain conditions, but these conditions may not be easy to satisfy in real-world scenarios. To solve this, we propose two new approximation methods and develop a robust theoretical framework for ConvNets binarization using MM. We propose as well regularization losses to improve the optimization. We empirically show that our model can learn a complex morphological network, and explore its performance on a classification task.

\end{abstract}

\section{Introduction}
\label{sec:intro}

Binary weight neural networks (BWNNs) are attractive because they can provide powerful machine learning solutions with much less storage, computation, and energy consumption than conventional networks \cite{rastegari2016xnor}. Several methods, such as BinaryConnect \cite{binaryconnect}, DoReFa-Net \cite{dorefa-net}, and XNOR-Net \cite{rastegari2016xnor}, have shown good to excellent results in a variety of applications. These networks usually use the sign function to binarize the weights in the forward pass. They must use a special gradient function, like the Straight-Through Estimator (STE) \cite{bengio2013estimating}, to overcome the zero gradient problem of the sign function during the backward pass. However, this estimator, while effective in practice, lacks a solid theoretical basis, suggesting the need for a different approach. Some methods avoid using the STE by first training a floating-point neural network and then binarizing it afterwards \cite{kim2016bitwise,he2020proxybnn}. However, this approach may lead to approximate binarization and a drop in performance. In this paper, we present a new approach that uses the concepts of Mathematical Morphology (MM) \cite{serra1982image} to overcome the drawbacks of existing methods. MM, based on modern set theory and complete lattices, offers a non-linear mathematical framework for image processing. Its basic operators, erosion and dilation, are equivalent to thresholded convolutions~\cite{kisacanin1994fast}, underscoring a link between MM and deep learning. Combining these fields can improve the efficiency and results of morphological operations while enhancing our knowledge of deep learning \cite{angulo2021some}. Recent works on morphological neural networks have explored learning operators and structuring elements using various approaches, such as the max-plus definition~\cite{mondal2020image,franchi2020deep} and differentiable approximations~\cite{shen2019deep,masci2013learning,hermary2022learning}. However, these methods primarily focus on learning gray-scale MM operators and have not focused on NN binarization.

The Binary Morphological Neural Network (BiMoNN) \cite{aouad2022binary} proposed a well-defined mathematical method that can binarize NN weights without performance loss under certain conditions. However, it is limited to binary inputs and can learn only one filter per layer, limiting the design of modern architectures. In this paper, we improve the BiMoNN framework to overcome these limitations and introduce a new model that can learn any sequence of morphological operators while achieving complete binarization in all cases. We also introduce novel regularization techniques to encourage our model to become morphological. By combining MM concepts with deep learning, our work establishes the basis for a more robust and theoretically sound framework for NN binarization.
Our contributions in this work are as follows:
\begin{itemize}
\item In \S \ref{sec:method}, we refine the BiMoNN theoretical framework and generalize it to any kind of gray-scale / RGB inputs. We introduce two new layers analogous to the dense and convolutional layers, enabling the transposition of modern architectures.
\item In \S \ref{sec:binarization}, we present a well-defined mathematical binarization method based on MM that works seamlessly with standard frameworks and tools. We also propose two new approximate binarization techniques to deal with the cases where exact binarization is not possible.
\item In \S \ref{sec:regularization}, we introduce three applicable regularization losses, and a fourth whose value is mostly theoretical due to its long computation time.
\item In \S \ref{sec:experiments}, we evaluate the capacity of the binarized BiMoNN to learn complex morphological pipelines without performance loss. Additionally, we investigate its behavior on the \mnist{} \cite{lecun1998mnist} classification task, and the behavior of the introduced regularization techniques.
\end{itemize}

Our code is publicly available at \url{https://github.com/TheodoreAouad/LBQNN2023}.

\section{General Binary Morphological Neural Network}
\label{sec:method}

We build on the BiMoNN framework \cite{aouad2022binary} and propose a new Binary Structuring Element (BiSE) neuron. We show how it is morphologically equivalent to binary images by using the notion of almost binary images. We also propose the BiSE Layer (BiSEL) that can learn multiple filters per layer, which is similar to a convolutional layer; and the DenseLUI, a dense layer that can be binarized. In \S \ref{subsec:bise} and \S \ref{subsec:morpho-equi}, we consider binary inputs only. We generalize to any types of inputs in \S \ref{subsec:bimonn}.

\subsection{Binary Structuring Element neuron} \label{subsec:bise}

Let $\imdim$ be the dimension of the image (usually $\imdim=2$ or $3$). We denote $\Omega_I \subset \zZ^\imdim$ the support of the images, and $\Omega$ the support of the weights kernel. For the remainder of the paper, we assume $S\subset \Omega$ and $S \ne \emptyset$. For a set $X \subset \zZ^\imdim$, we denote its indicator function $\indicator{X}: \zZ^\imdim \mapsto \rR$ such that $\indicator{X}(i) = 1$ if $i \in X$, else $\indicator{X}(i) = 0$. We denote the set of its subsets $\partsof{X}$. We denote the convolution by $\conv{}{}$. We denote $\positive{\cdot} \defeq \max(\cdot, 0)$ and $\negative{\cdot} \defeq \min(\cdot, 0)$.

\begin{definition}[BiSE neuron]
Let $(W, B)$ be a set of reparametrization functions and $(\paramweight, \parambias, p)$ a set of learnable parameters. Let $\thresh$ be a smooth threshold activation (e.g. normalized $\tanh$: $\frac 1 2\tanh(\cdot) + \frac 1 2$). Then the \textit{Binary Structuring Element neuron} (BiSE) is:
\begin{equation}
    \bise: \mathbf{x} \in [0, 1]^{\Omega_I} \mapsto \thresh\bigg(p \Big[\conv{\mathbf{x}}{W(\paramweight)} - B(\parambias)\Big]\bigg)  \in [0, 1]^{\Omega_I}
\end{equation}
\end{definition}

The reparametrization functions $(W, B)$ are hyperparameters. The BiSE neuron is a convolution operator with weights and biases that are reparametrized (see \S \ref{subsec:reparam}), a smooth threshold activation and a scaling factor $p$ that can invert the output if negative. We introduce the following \textit{almost binary} image representation, to handle images that are not exactly binary and to be able to apply gradient descent optimization.

\begin{definition}[Almost Binary Image]  The set of almost binary images of parameter $\delta$ is denoted $\almostbinary{\delta}$. We say an image $\mathbf{I} \in [0, 1]^{\Omega_I}$ is \textit{almost binary} and define $X_{\mathbf I}$ its \textit{associated binary image} if:
\begin{align}
    \exists \abparam \in \Big]0, \frac{1}{2}\Big] &~,~ \mathbf{I}(\Omega_I) \cap \Big]\frac{1}{2} -\abparam, \frac{1}{2} + \abparam\Big[ = \emptyset\\
    X_{\mathbf I} &\defeq \Big(\mathbf{I} > \frac{1}{2}\Big)
\end{align}

\end{definition}

\subsection{Morphological Equivalence} \label{subsec:morpho-equi}

We now express the conditions under which a BiSE neuron can be seen as a morphological operator.

\begin{theorem}[Dilation - Erosion Equivalence] \label{thm:check-dila}

    For a given structuring element (SE) $S \subset \Omega$, and an almost binary parameter $\delta \in ]0, \frac{1}{2}]$, a set of reparametrized weights $\mathbf W \in \rR^{\Omega}$ and bias $b \in \rR$, we define:
    
    \begin{align}
        L_{\oplus}(\mathbf W, S) &\defeq \sum_{k \in \Omega \setminus S}{\positive{\mathbf W_k}} + \Big(\frac{1}{2} - \abparam\Big)\sum_{k \in S}{\positive{\mathbf W_k}}
        \\%
        U_{\oplus}(\mathbf W, S) &\defeq \Big(\frac{1}{2} + \abparam\Big) \min_{k \in S}{\mathbf W_k} + \sum_{k \in \Omega}\negative{\mathbf W_k}
        \\%
        U_{\ominus}(\mathbf W, S) &\defeq \sum_{k \in \Omega}{\mathbf W_k} - L_{\oplus}(\mathbf W, S) \\
        \\%
        L_{\ominus}(\mathbf W, S) &\defeq \sum_{k \in \Omega}{\mathbf W_k} - U_{\oplus}(\mathbf W, S)
    \end{align}

    Let $\psi \in \{\oplus, \ominus\}$ be a dilation or erosion. Then:
    \begin{multline}
        \forall I \in \almostbinary{\abparam} ~,~ \psi_S\Big(\mathbf I > \frac 12\Big) = \Big(\conv{\mathbf I}{W} > b\Big) \\
        \Leftrightarrow L_{\psi}(\mathbf W, S) \leq b < U_{\psi}(\mathbf W, S)
    \end{multline}

    In this case, $\forall s \in S, \mathbf W_s \geq 0$ and $b \geq 0$ and we say that a BiSE $\bise$ with weights $W(\paramweight) = \mathbf W$ and $B(\parambias) = b$ is \textbf{activated}. If $\psi = \oplus$, then $B(\parambias) \leq \frac{1}{2}\sum_{k \in \Omega}W(\paramweight)_k$. If $\psi = \ominus$, then $B(\parambias) \geq \frac{1}{2}\sum_{k \in \Omega}W(\paramweight)_k$
    
    For any almost binary image $\mathbf I \in \almostbinary{\abparam}$,  $\bise(\mathbf I) \in \almostbinary{\delta_{out}}$ is almost binary with known parameter $\delta_{out}$. Finally
    \begin{equation}
        \forall \mathbf I \in \almostbinary{\abparam}, \Big(\bise(\mathbf I) > \frac{1}{2}\Big) = \psi_S\Big(\mathbf I > \frac{1}{2}\Big) \label{eq:almost-output}
    \end{equation}
    
\end{theorem}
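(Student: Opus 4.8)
The plan is to verify the set identity pixel by pixel and reduce it to a single one‑point linear program. Fix an output location and write $v=(v_k)_{k\in\Omega}$ for the windowed input values there, so that $\conv{\mathbf I}{W}$ at that location is the linear form $\langle\mathbf W,v\rangle$, while $\psi_S(\mathbf I>\tfrac12)$ there is a Boolean function of $(v_k)_{k\in S}$ alone: a disjunction of the events $\{v_k\geq\tfrac12+\abparam\}$, $k\in S$, if $\psi=\oplus$, and the corresponding conjunction if $\psi=\ominus$ --- here almost‑binariness is exactly what lets us replace ``$v_k>\tfrac12$'' by ``$v_k\geq\tfrac12+\abparam$'' and ``$v_k\leq\tfrac12$'' by ``$v_k\leq\tfrac12-\abparam$''. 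Since an almost‑binary image realizes every profile $v$ in the ``box with a hole'' $A^\Omega$, $A\defeq[0,\tfrac12-\abparam]\cup[\tfrac12+\abparam,1]$, at an interior location (assuming, as is standard, that $\Omega_I$ contains a translate of $\Omega$ in its interior; zero padding at boundary locations only weakens the requirement), the claimed identity holds for all almost‑binary $\mathbf I$ iff $b$ is sandwiched, $M_\psi\leq b<m_\psi$, with $M_\psi\defeq\max\{\langle\mathbf W,v\rangle:v\in A^\Omega,\ \psi\text{ inactive}\}$ and $m_\psi\defeq\min\{\langle\mathbf W,v\rangle:v\in A^\Omega,\ \psi\text{ active}\}$ --- both extrema attained, $A^\Omega$ being compact and the active/inactive sets closed.

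Next I would evaluate $M_\psi,m_\psi$. Each is a linear functional on $A^\Omega$ under one disjunction/conjunction constraint, so it decouples coordinatewise by $\operatorname{sign}(\mathbf W_k)$ and by whether $k\in S$. For $\psi=\oplus$ a short case check gives $M_\oplus=\sum_{k\in\Omega\setminus S}\positive{\mathbf W_k}+(\tfrac12-\abparam)\sum_{k\in S}\positive{\mathbf W_k}=L_\oplus(\mathbf W,S)$ and $m_\oplus=(\tfrac12+\abparam)\min_{k\in S}\mathbf W_k+\sum_{k\in\Omega}\negative{\mathbf W_k}=U_\oplus(\mathbf W,S)$ \emph{provided $\mathbf W_s\geq0$ for all $s\in S$} --- otherwise, taking a coordinate $s$ with $\mathbf W_s<0$ to be $1$ (still ``active'') drops the minimum to $\sum_{k\in\Omega}\negative{\mathbf W_k}<U_\oplus$. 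The erosion extrema follow either by repeating this bookkeeping (which yields the exact half‑open endpoints) or, more conceptually, from the complementation $\mathbf I\mapsto\mathbf 1-\mathbf I$, which is again almost binary with the same $\abparam$, sends $X_{\mathbf I}$ to its complement and (at interior points) $\langle\mathbf W,v\rangle$ to $\sum_k\mathbf W_k-\langle\mathbf W,v\rangle$, turning a dilation by $S$ into the dual erosion --- exactly the reflection behind $U_\ominus=\sum_k\mathbf W_k-L_\oplus$ and $L_\ominus=\sum_k\mathbf W_k-U_\oplus$ --- and giving $M_\ominus=L_\ominus$, $m_\ominus=U_\ominus$ under the same sign hypothesis.

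The main obstacle, which I would isolate as its own lemma, is establishing $\mathbf W_s\geq0$ on $S$, since it simultaneously rules out the degenerate case and forces $[L_\psi,U_\psi)=[M_\psi,m_\psi)$. The key point is that if some $\mathbf W_s<0$, $s\in S$, then $M_\psi\geq m_\psi$ (for $\oplus$: $M_\oplus=L_\oplus\geq0\geq\sum_k\negative{\mathbf W_k}=m_\oplus$; for $\ominus$: $M_\ominus-m_\ominus=\sum_k\abs{\mathbf W_k}-(\tfrac12+\abparam)\sum_{k\in S}\positive{\mathbf W_k}\geq0$ using $\tfrac12+\abparam\leq1$), so $[M_\psi,m_\psi)$ is empty and no $b$ can work; conversely, the hypothesis $L_\psi\leq b<U_\psi$ makes $[L_\psi,U_\psi)$ nonempty, which (using $L_\oplus<U_\oplus\Leftrightarrow L_\ominus<U_\ominus$, and the fact that $L_\oplus<U_\oplus$ fails as soon as some $\mathbf W_s<0$, because then $U_\oplus<0\leq L_\oplus$) again forces $\mathbf W_s\geq0$ on $S$. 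Hence the two intervals coincide, and the biconditional of the theorem is precisely the equivalence ``$b\in[M_\psi,m_\psi)\Leftrightarrow$ the identity holds'' from the first paragraph.

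The remaining assertions are substitutions. From $b\geq L_\psi$ we get $b\geq0$: $L_\oplus\geq0$ is trivial, and $L_\ominus=\sum_k\positive{\mathbf W_k}-(\tfrac12+\abparam)\min_{k\in S}\mathbf W_k\geq0$ because $S\neq\emptyset$ forces $\sum_{k\in S}\positive{\mathbf W_k}\geq\min_{k\in S}\mathbf W_k\geq(\tfrac12+\abparam)\min_{k\in S}\mathbf W_k$. The location of $b$ relative to $\tfrac12\sum_k W(\paramweight)_k$ follows by comparing the active endpoint $U_\oplus$ (resp. the inactive endpoint $L_\ominus$) with $\tfrac12\sum_k\mathbf W_k$ termwise, using $\sum_k\negative{\mathbf W_k}\leq\tfrac12\sum_k\mathbf W_k$ and $(\tfrac12+\abparam)\min_{k\in S}\mathbf W_k\leq\tfrac12\sum_{k\in S}\mathbf W_k$ (the latter by $\abparam\leq\tfrac12$). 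Finally, with $p>0$ in the activated regime, $\bise(\mathbf I)=\thresh\!\big(p(\conv{\mathbf I}{W}-b)\big)>\tfrac12\iff\conv{\mathbf I}{W}>b$, which is $\psi_S(\mathbf I>\tfrac12)$ --- this is \eqref{eq:almost-output}; and since $\thresh$ is strictly increasing with $\thresh(0)=\tfrac12$, the pre‑activation stays at distance $\geq p\min(U_\psi-b,\,b-L_\psi)$ from $0$ on each side, so $\bise(\mathbf I)$ is almost binary with $\delta_{out}=\min\!\big(\thresh(p(U_\psi-b))-\tfrac12,\ \tfrac12-\thresh(-p(b-L_\psi))\big)$ (strictly positive whenever $L_\psi<b<U_\psi$).
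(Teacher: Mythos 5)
Your route is essentially the paper's: reduce the set identity to a single-pixel extremal problem over the ``box with a hole'' $A^{\Omega}$ (the paper's Lemma \ref{lemma:check-dila}), evaluate the two extrema coordinatewise to get $L_\oplus,U_\oplus$ under $\mathbf W_s\ge 0$ on $S$ (the paper's four equalities in Proposition \ref{prop:proof-of-dila}), obtain the erosion case by the complementation duality $\mathbf I\mapsto \mathbf 1-\mathbf I$ (Corollary \ref{cor:bise-duality}), force $\mathbf W_s\ge 0$ and $b\ge 0$ from non-emptiness of the interval, and deduce the almost-binary output and \eqref{eq:almost-output} from the fact that the pre-activation avoids $[L_\psi-b,\,U_\psi-b]\ni 0$ together with monotonicity of $\thresh$ (the paper's Lemma \ref{lemma:conv-deadland}); your $\delta_{out}$ matches the paper's, and you correctly flag the strictness caveat at $b=L_\psi$ that the paper also has to assume.

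The one genuine gap is your last-paragraph justification of the bias-position claims. The inequality $(\tfrac12+\abparam)\min_{k\in S}\mathbf W_k\le\tfrac12\sum_{k\in S}\mathbf W_k$ does \emph{not} follow from $\abparam\le\tfrac12$ alone: with nonnegative weights on $S$ it needs $\card{S}\ge 1+2\abparam$, so it fails for $\card{S}=1$, and in that case the claim itself is false. Concretely, take $\Omega=S=\{s\}$, $\mathbf W_s=1$, $\abparam=\tfrac12$: then $L_\oplus=0$, $U_\oplus=1$, so $b=0.9$ is activated for dilation, yet $b>\tfrac12\sum_k\mathbf W_k=\tfrac12$. (Also, your two auxiliary inequalities cannot simply be added ``termwise,'' since $\tfrac12\sum_{k\in\Omega}\mathbf W_k$ would then be counted together with $\tfrac12\sum_{k\in S}\mathbf W_k$; the clean argument is $U_\oplus\le\tfrac12\sum_\Omega\mathbf W_k \Leftarrow (\tfrac12+\abparam)\min_S\mathbf W\le\tfrac12\sum_\Omega\positive{\mathbf W_k}$, which holds as soon as $\card{S}\ge 2$, and dually for $L_\ominus$.) The paper itself only proves these two inequalities under an extra cardinality hypothesis in the appendix, so your proof should either add such a restriction ($\card{S}\ge 2$, say) or drop the unqualified claim; the rest of the argument stands.
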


\[\begin{tikzcd}
    \mathbf I \arrow{r}{\chi} \arrow[swap]{d}{\cdot > \frac 12} & \chi(\mathbf I) \arrow{d}{\cdot > \frac 12} \\
X_{\mathbf I} \arrow{r}{\psi_S} & X_{\chi(\mathbf I)}
\end{tikzcd}
\] 

This theorem states that if a BiSE is activated, it transforms an almost binary inputs into an almost binary outputs with known $\delta_{out}$. Moreover, if we threshold the input and output with respect to $\frac{1}{2}$, it is equivalent to applying the corresponding morphological operation, exhibiting a weak commutativity property between thresholding and convolution. Further, equation~\eqref{eq:almost-output} shows that if a BiSE is activated for operation $\psi$, performing the BiSE operation in $\almostbinary{\delta}$ is equivalent to performing the binary morphological operation $\psi$ in the binary space $\partsof{S}$. This presents a natural framework for binarization (see \S \ref{sec:binarization}).

\subsection{Binary Morphological Neural Network} \label{subsec:bimonn}
Our objective is to build a binarizable neural network. We now define binarizable neural layers based on the BiSE neuron. By combining these layers, we can create flexible architectures tailored to the desired task.

 As mentioned earlier, the BiSE neuron resembles a convolution operation. However, a single convolution is insufficient to create a morphological layer. In this context, we explain how we handle multiple channels. We observe that the BiSE neuron can be used to define a layer that learns the intersection or union of multiple binary images $\mathbf x_1, ..., \mathbf x_n \in \almostbinary{\abparam}$. For example, their union can be expressed as the dilation of the 3D image $\mathbf x \defeq (\mathbf x_1, ..., \mathbf x_n) \in \almostbinary{\abparam} \subset \big([0, 1]^{\Omega_I}\big)^n$ with a tubular SE applied solely across the dimension of depth. Therefore, we define the Layer Union Intersection (LUI) as a special case of the BiSE layer, with weights restricted to deep-wise shape. It is analogous to a $1\times1$ convolution unit. A $\lui$ layer can learn any intersection or union of any number of almost binary inputs. By combining BiSE neurons and $\lui$ layers, we can learn morphological operators and aggregate them as unions or intersections.

\begin{definition}[BiSEL]
A BiSEL (BiSE Layer) is the combination of multiple BiSE and multiple $\lui$.
Let $(\bise_{n, k})_{n, k}$ be $N*K$ BiSE and $(\lui_k)_k$ be $K$ $\lui$. Then we define a BiSEL as:

\begin{equation}
\bisel: \quad \mathbf x \in \big([0, 1]^{\Omega_I}\big)^N \mapsto \bigg(\lui_k\Big[\big(\bise_{n, k}(\mathbf x_n)\big)_n\Big]\bigg)_k
\end{equation}

\end{definition}

The BiSEL mimics a convolutional layer. In conventional ConvNets, to process multiple input channels, a separate filter is applied to each channel, and their outputs are summed to create one output channel. In the case of BiSEL, instead of summing the results of each filter, we perform a union or intersection operation (see Figure \ref{fig:bisel}).

\begin{figure}[h]
    \centering
    \begin{subfigure}{0.45\linewidth}
        \includegraphics[width=\linewidth]{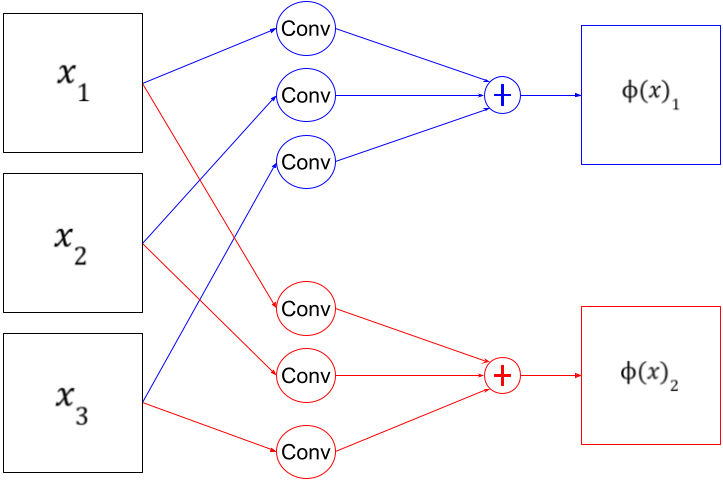} 
        \caption{Classical Convolutional Layer}
    \end{subfigure}
    \hspace{.03\linewidth}
    \begin{subfigure}{0.45\linewidth}
        \includegraphics[width=\linewidth]{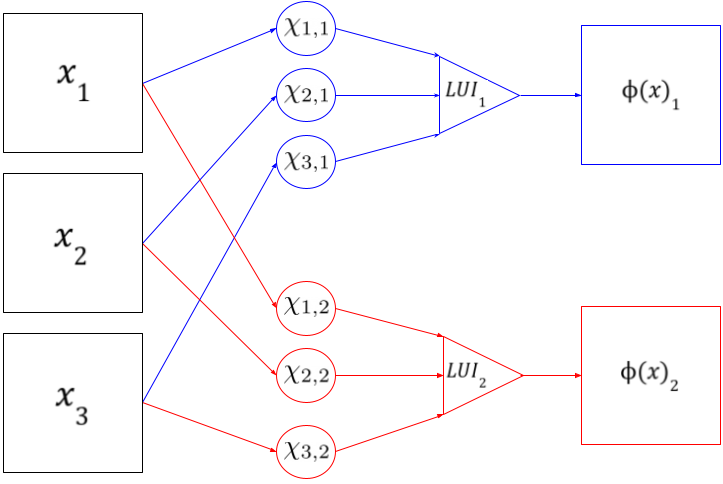} 
        \caption{BiSEL}
    \end{subfigure}
    \caption{BiSEL vs Conv Layer. Input $\mathbf x$ with 3 channels. Output $\phi(\mathbf x)$ with 2 channels.}
    \label{fig:bisel}
\end{figure}

\paragraph{DenseLUI}

the LUI layer is similar to a $1\times1$ convolution, which is equivalent to a fully connected layer. Given an input vector $\mathbf x \in \rR^n$, we can apply the LUI layer to the reshaped input $\hat{\mathbf x} \in \rR^{n \times 1 \times 1}$ treating it as a 2D image with width and length of $1$ and $n$ channels, respectively. Therefore, we can utilize the BiSEL to create binarizable fully connected layers.

\paragraph{Gray-Scale / RGB Inputs} \label{paragraph:levelset}

Up until now, all inputs were assumed binary. We extend these definitions to gray-scale and RGB images. The main idea is to separate an input channel $\mathbf I_c\in \rR^{\Omega_I}$ into its set of upper level-sets to come back to binary inputs:
\begin{equation}
    \{(\mathbf I_c \geq \tau) ~\lvert~\tau \in \rR\}
\end{equation}
Considering all possible values of $\tau$ from a continuous image would result in an excessive number of level-sets to process. Alternatively, we can define a finite set of values for $\tau$ in advance. Subsequently, each channel of an image $\mathbf I \in \mathbb{R}^{c \times w \times l}$ is separated into its corresponding level-sets, and these level-sets are provided as additional channels. If we have $N$ values for level-set, the resulting input is a binary image $I_{\mathbb{B}} \in \{0, 1\}^{(N\cdot c) \times w \times l}$.

\begin{figure}[h]
    \centering
    \includegraphics[width=\linewidth]{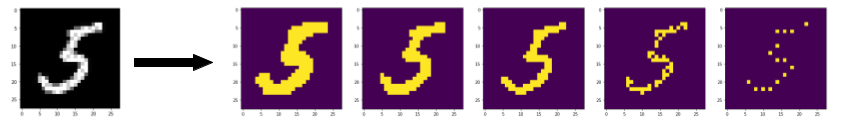}
    \caption{Gray to level-set for 5 different values, generating 5 input channels.}
    \label{fig:levelset}
\end{figure}

We have introduced two types of binarizable layers: the DenseLUI, which is similar to a fully connected layer, and the BiSEL, which resembles a convolutional layer. By combining these layers, we can create a Binary Morphological Neural Network (\bimonn{}), which encompasses various architectures suited for different tasks.

\subsection{Training considerations}\label{subsec:reparam}

The \bimonn{} ($\floatbimonn$)  is fully differentiable. If $\lL$ is a differentiable loss function, given a dataset of $N$ labeled images $\{(\mathbf x_i, \mathbf y_i)\}$, we minimize the error $\frac{1}{N}\sum_{i=1}^N{\lL(\floatbimonn(\mathbf x_i), \mathbf y_i)}$ using a gradient descent algorithm (like Adam~\cite{kingma2017adam}). The gradients are computed with the backpropagation algorithm \cite{Rumelhart:1986we}. The binarization scheme, which is defined in the next section, is applied post-training or during training to measure the model's evolution.

To facilitate the convergence of our networks towards morphological operators, certain constraints are beneficial. In order to avoid dealing with a multitude of constraints, we reparametrize some variables to ensure that the constraints are always satisfied. We introduce two reparametrization functions for the weights, and three for the bias.

\paragraph{Positivity} Our objective is to reach the set of activable weights and bias. Theorem \ref{thm:check-dila} indicates that we only have to look at positive parameters. We can enforce them to be positive by setting $W$ and $B$ as the softplus function:
\begin{equation}
    B_{pos}(\cdot) = W_{pos}(\cdot) \defeq f^+(\cdot) \defeq \log(1 + \exp(\cdot))
\end{equation}

\paragraph{Dual reparametrization}

For the weights, we introduce the \textit{dual} reparametrization as follows:

\begin{equation}
W_{dual}(\paramweight) \defeq \frac{K \cdot f^+(\paramweight)}{\sum_{W(\paramweight)}w}
\end{equation}

with $K \defeq 2 \cdot \thresh^{-1}(0.95)$. We can show that this dual reparametrization ensures that the training process is similar for both erosion and dilation. 

\paragraph{}Other reparametrization for the bias can be defined to keep it into coherent range values. If the bias is smaller than $\min(W)$, then $\forall X \subset \Omega$, $\bise(\indicator X) < 0.5$: no values will be close to $1$. On the other side, if the bias is higher than $\sum_{W}w$, then $\bise(\indicator X) > 0.5$. Therefore, we want $\min(W) < B < \sum_{W}w$. Let $\{W_1 < ... < W_K\}$ be the ordered values taken by the weights $W$. The previous inequality is ensured if $B$ belongs to the closed convex set $C_b \defeq [\frac{W_1 + W_2}{2}, \sum_W{w} - \frac{W_1}{2}] = [l_c(W), u_c(W)]$. There are two ways of ensuring that $B \in C_b$.

\paragraph{Projected}

First, we can project the bias after the gradient iteration: this comes down to a projected gradient algorithm. We call this approach "Projected" reparametrization. We apply $B_p(\parambias) \defeq f^+(\parambias)$, and we project $B_p(\parambias)$ onto $C_b$ after the gradient update iteration.

\paragraph{Projected Reparam}

The second way is to redefine $B$ before the end of the iteration, instead of after. We call this approach "Projected Reparam" reparametrization:
\begin{equation}
    B_{pr}(\parambias) \defeq \left\{
    \begin{array}{ll}
        l_c(W) &\text{ if } f^+(\parambias) < l_c(W) \\
        u_c(W) &\text{ if } f^+(\parambias) > u_c(W)\\
        f^+(\parambias) &\text{ else }
    \end{array}
\right.
\end{equation}

\paragraph{Initialization}

The BiSE layer performs convolutions with a smooth threshold function as the activation, resulting in values in the range of $[0, 1]$. Since we have reparametrized the weights to be positive, the classical initialization proposed in \cite{he2015delving}, which is tailored for ReLU activation and includes negative weights, needs to be adapted. We ensure that the gradients do not vanish during initialization, especially when stacking BiSE neurons in a deep network. Inspired by \cite{he2015delving}, we sample uniform weights with an adjusted distribution that maintains a constant variance. Additional details can be found in Appendix \ref{appendix:init}.

\section{Binarization} \label{sec:binarization}

Binarization of a neural network involves converting the real-valued weights and activations, typically stored as 32-bit floats, into binary variables represented as 1-bit booleans. Most binary NN use variables in $\{-1, 1\}$ \cite{yuan2021comprehensive}, which are not ideal for learning morphological operations. We instead utilize $\{0, 1\}$.

BiMoNNs inherently correspond to binarized networks, making the BiSE neuron a natural framework for binarization when dealing with binary inputs. If a specialized hardware tailored for morphological operations is available, it can offer significant improvements in efficiency and performance, facilitating the binarization process. Alternatively, dilation and erosion can be expressed using binary-weighted thresholded convolutions.

In our approach, binarization occurs after the training phase. We present two types of binarization for the BiSE neuron: the exact method (as introduced in \cite{aouad2022binary}), when the BiSE neuron is activated, and two novel approximated methods. Then, we sequentially binarize the entire network.

\subsection{Exact BiSE Binarization}\label{subsec:exact-bin}

The real-value operations performed by an activated Binary Structuring Element (BiSE) in the almost binary space can be replaced with binary morphological operations on the binary space after thresholding by 0.5, without sacrificing performance (as per Theorem \ref{thm:check-dila}). To determine if a BiSE is activated and which operation it corresponds to, we introduce Proposition \ref{thm:linear-check}, which provides a linear complexity method for extraction.

\begin{proposition}[Linear Check]\label{thm:linear-check}
    Let us assume the BiSE of weights $W(\paramweight)$ and $B(\parambias)$ is activated for $\psi$ with SE $S \subset \Omega$ for almost binary images $\almostbinary{\delta}$. Then $S = (W(\paramweight) > \tau_{\psi})$ with
    \begin{align}
        \tau_{\dil{}{}} &\defeq \frac{1}{\frac{1}{2} + \abparam}\Big(B(\parambias) - \sum_{w \in W(\paramweight)}{\negative{w}}\Big)\\
        \tau_{\ero{}{}} &\defeq \frac{1}{\frac{1}{2} + \abparam}\Big(\sum_{w \in W(\paramweight)}{\positive{w}} - B(\parambias)\Big)
    \end{align}
\end{proposition}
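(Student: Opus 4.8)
The plan is to establish the two inclusions $S \subseteq (W(\paramweight) > \tau_\psi)$ and $(W(\paramweight) > \tau_\psi) \subseteq S$ by elementary manipulation of the activation inequalities of Theorem~\ref{thm:check-dila}, treating $\psi = \dil{}{}$ first and then $\psi = \ero{}{}$ by the same bookkeeping (or, alternatively, by complementation duality). Write $\mathbf W \defeq W(\paramweight)$, $b \defeq B(\parambias)$, $P \defeq \sum_{k\in\Omega}\positive{\mathbf W_k}$ and $M \defeq \sum_{k\in\Omega}\negative{\mathbf W_k} \le 0$, so that $\tau_{\dil{}{}} = (b-M)/(\tfrac12+\delta)$ and $\tau_{\ero{}{}} = (P-b)/(\tfrac12+\delta)$. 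From Theorem~\ref{thm:check-dila}, being activated for $\psi$ with $S$ gives us three facts: $\mathbf W_s \ge 0$ for all $s\in S$, $b \ge 0$, and $L_\psi(\mathbf W,S) \le b < U_\psi(\mathbf W,S)$.

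First I would simplify the four bounds using $\mathbf W_s \ge 0$ on $S$, which replaces $\positive{\mathbf W_s}$ by $\mathbf W_s$ there: this gives $L_{\dil{}{}}(\mathbf W,S) = P - (\tfrac12+\delta)\sum_{k\in S}\mathbf W_k$ and $U_{\dil{}{}}(\mathbf W,S) = (\tfrac12+\delta)\min_{k\in S}\mathbf W_k + M$, and then $L_{\ero{}{}}$, $U_{\ero{}{}}$ follow from the identities defining them together with $\sum_{k\in\Omega}\mathbf W_k = P+M$. For $\psi=\dil{}{}$: rearranging the strict bound $b < U_{\dil{}{}}(\mathbf W,S)$ yields $\tau_{\dil{}{}} < \min_{k\in S}\mathbf W_k$, hence $\mathbf W_k > \tau_{\dil{}{}}$ for every $k\in S$, which is the inclusion $S\subseteq(W(\paramweight)>\tau_{\dil{}{}})$. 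For the reverse inclusion fix $k_0\in\Omega\setminus S$; from $L_{\dil{}{}}(\mathbf W,S)\le b$ one gets $\sum_{k\in S}\mathbf W_k \ge (P-b)/(\tfrac12+\delta)$, so the remaining positive parts satisfy $\sum_{k\in\Omega\setminus S}\positive{\mathbf W_k} = P - \sum_{k\in S}\mathbf W_k \le P-(P-b)/(\tfrac12+\delta)$, and multiplying out shows this is $\le\tau_{\dil{}{}}$ because it reduces to $(\tfrac12-\delta)P \ge M$, which holds since $(\tfrac12-\delta)P \ge 0 \ge M$. Since all the terms in that sum are non-negative, $\positive{\mathbf W_{k_0}} \le \tau_{\dil{}{}}$; and because $\tau_{\dil{}{}} = (b-M)/(\tfrac12+\delta) \ge 0$, this gives $\mathbf W_{k_0}\le\tau_{\dil{}{}}$ regardless of the sign of $\mathbf W_{k_0}$. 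Hence $(W(\paramweight)>\tau_{\dil{}{}}) = S$ (the reverse inclusion being vacuous if $S=\Omega$).

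The erosion case is the mirror image: $L_{\ero{}{}}(\mathbf W,S)\le b$ gives $\min_{k\in S}\mathbf W_k \ge \tau_{\ero{}{}}$, while $b < U_{\ero{}{}}(\mathbf W,S)$ gives $\sum_{k\in S}\mathbf W_k > (b-M)/(\tfrac12+\delta)$, whence $\sum_{k\in\Omega\setminus S}\positive{\mathbf W_k} < \tau_{\ero{}{}}$ by the same one-line computation, so $\mathbf W_{k_0} < \tau_{\ero{}{}}$ for every $k_0\notin S$; one could also obtain this case directly from the dilation case, since complementing an almost binary image turns an activated erosion with $(\mathbf W,b)$ into an activated (reflected) dilation with $(\mathbf W,\sum_{k\in\Omega}\mathbf W_k - b)$, under which $\tau_{\ero{}{}}$ becomes $\tau_{\dil{}{}}$. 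The one delicate point I anticipate is strictness on the $S$-side for erosion: the non-strict inequality $L_{\ero{}{}}\le b$ only yields $\min_{k\in S}\mathbf W_k \ge \tau_{\ero{}{}}$, so one must either appeal to how the endpoints of the activation interval in Theorem~\ref{thm:check-dila} are meant to be read, or argue separately that no weight of $S$ equals $\tau_{\ero{}{}}$. Everything else is routine arithmetic with positive and negative parts, the recurring lemma being simply that an upper bound on a sum of non-negative numbers is an upper bound on each of them.
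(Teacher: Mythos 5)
Your argument is correct for dilation and is, in substance, the paper's: both inclusions are read off from the activation interval $L_{\psi}\leq b<U_{\psi}$ together with the positivity facts ($\mathbf W_s\geq 0$ on $S$, $b\geq 0$) of Theorem~\ref{thm:check-dila}. The differences are organizational. The paper first proves, by feeding test images $(\tfrac12+\abparam)\indicator{\{-j\}}$ through the activation equivalence, that every index with weight $\geq\min_S\mathbf W$ lies in $S$, and then handles $j\notin S$ by the contradiction chain $w_j\geq(\tfrac12+\abparam)w_j+M>b\geq\sum_{\Omega\setminus S}\positive{w_k}\geq w_j$; you skip the test-image step (which is indeed not needed for the stated conclusion) and instead bound the whole tail $\sum_{k\in\Omega\setminus S}\positive{\mathbf W_k}\leq P-(P-b)/(\tfrac12+\abparam)\leq\tau_{\dil{}{}}$ in one line via $M\leq(\tfrac12-\abparam)P$. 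Both computations are valid. For erosion the paper does not recompute: it invokes Corollary~\ref{cor:bise-duality} (bias $\sum_W w-B$ activated for dilation) and substitutes into $\tau_{\dil{}{}}$; your direct mirrored computation is an equally legitimate route, and you correctly identify the duality shortcut as the alternative.

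The ``delicate point'' you flag on the $S$-side of the erosion case is genuine, and the paper does not actually resolve it either. From $L_{\ero{}{}}\leq b$ one only gets $\min_S\mathbf W\geq\tau_{\ero{}{}}$, and at the endpoint $b=L_{\ero{}{}}$ the stated equality can fail: with $\Omega=S=\{0\}$, $\mathbf W_0=w>0$ and $b=(\tfrac12-\abparam)w$, the BiSE is activated for erosion (the equivalence of Theorem~\ref{thm:check-dila} holds, erosion by $\{0\}$ being the identity), yet $\tau_{\ero{}{}}=w$, so $(\mathbf W>\tau_{\ero{}{}})=\emptyset\neq S$; only $S=(\mathbf W\geq\tau_{\ero{}{}})$ survives. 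The paper hides this in Corollary~\ref{cor:bise-duality}, which silently swaps a strict and a non-strict endpoint: erosion activation $L_{\ero{}{}}\leq b<U_{\ero{}{}}$ corresponds to the $\geq$-thresholded dilation variant of Proposition~\ref{prop:other-ineq-checkdila}, i.e.\ $L_{\dil{}{}}<\sum_W w-b\leq U_{\dil{}{}}$, not to plain dilation activation. So away from the boundary case $b=L_{\ero{}{}}$ (equivalently $\min_S\mathbf W=\tau_{\ero{}{}}$) your proof is complete and no less rigorous than the paper's; closing it fully requires either excluding that endpoint or weakening the erosion conclusion to a non-strict threshold, and it is to your credit that you noticed this rather than papering over it.
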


Given a BiSE neuron $\bise$ and an almost binary output in $\almostbinary{\abparam}$, we check if $\bise$ is activated for $S_{\oplus}$ or $S_{\ominus}$, where $S_{\dil{}{}} = (W(\omega) > \tau_{\dil{}{}})$ and $S_{\ero{}{}} = (W(\omega) > \tau_{\ero{}{}})$. If yes, we binarize by replacing $\bise$ with the corresponding morphological operator. If no, we use proposition \ref{thm:linear-check} to confirm that $\bise$ is not activated, and we approximate the binarization using the methods in section \ref{subsec:approx-bin}. The exact method requires only the computation of $L_{\oplus}, U_{\oplus}, L_{\ominus}, U_{\ominus}$ and at most $\mathcal{O}(\card{\Omega_S})$ operations.

\subsection{Approximate BiSE Binarization} \label{subsec:approx-bin}

In practice, BiSE are rarely activated, necessitating an approximate binarization method. Let $(\widehat{\mathbf W}, \widehat b, \widehat p) \defeq (\mathbf W(\hat{\paramweight}), B(\hat{\parambias}), \widehat p)$ be the learned reparametrized parameters.

\subsubsection{Projection onto activable parameters} \label{subsubsec:proj-activable}

To find the closest morphological operation, we minimize the Euclidean distance $d$ for a given $\psi \in \{\oplus, \ominus\}$ and the set $A_{\psi, S}$ of activable parameters:

\begin{align}
    A_{\psi, S}& \defeq \Big\{(\mathbf w, b) \in \rR^{\Omega} \times \rR ~\big\vert~  L_{\psi}(\mathbf w, S) < b < U_{\psi}(\mathbf w, S)\Big\}\label{eq:activable-set}\\
    &\minimize{S \subset \Omega, \psi \in \{\dil{}{}, \ero{}{}\}}d\bigg((\widehat{\mathbf W}, \widehat b), A_{\psi, S}\bigg)
\end{align}

For each set $A_{\psi}(S)$ corresponding to a possible morphological operation, we find the smallest distance for each $(S, \psi)$ and apply complementation if $\hat{p} < 0$. The following proposition allows linear search instead of exponential.

\begin{proposition} \label{prop:thresholded-proj}
    If $S^*, \psi^*$ minimize $d\Big((\widehat{\mathbf W}, \widehat b), A_{\psi, S}\Big)$, then $S^* = (\widehat{\mathbf W} \geq \min_{S^*}\widehat{\mathbf W}_s)$
\end{proposition}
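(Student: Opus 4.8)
The plan is to show that the optimal structuring element $S^*$ must be an "upper level set" of the weights $\widehat{\mathbf W}$, i.e. of the form $(\widehat{\mathbf W} \geq t)$ for the threshold $t = \min_{s \in S^*}\widehat{\mathbf W}_s$. I would argue by contradiction: suppose $(S^*, \psi^*)$ is optimal but $S^*$ is not a level set, meaning there exist indices $i \in S^*$ and $j \in \Omega \setminus S^*$ with $\widehat{\mathbf W}_j > \widehat{\mathbf W}_i$ (otherwise every coordinate outside $S^*$ is $\le$ every coordinate inside, which is exactly the level-set condition). I would then exhibit a competitor $S' \defeq (S^* \setminus \{i\}) \cup \{j\}$ — swapping the "bad" pair — and show $d\big((\widehat{\mathbf W}, \widehat b), A_{\psi^*, S'}\big) \le d\big((\widehat{\mathbf W}, \widehat b), A_{\psi^*, S^*}\big)$, with the inequality strict unless we can already repair $S^*$ into a level set without loss, iterating the swap finitely many times until no bad pair remains.

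The core of the argument is understanding how the bounds $L_\psi(\mathbf w, S)$ and $U_\psi(\mathbf w, S)$ from Theorem~\ref{thm:check-dila} depend on $S$, since the projection distance onto $A_{\psi,S}$ is governed entirely by how far $(\widehat{\mathbf W}, \widehat b)$ sits from the slab $\{L_\psi(\mathbf w, S) < b < U_\psi(\mathbf w, S)\}$. I would compute, for the dilation case, the effect of the swap $i \leftrightarrow j$ on $L_\oplus$: moving an index from $\Omega \setminus S$ into $S$ changes its contribution from $\positive{\mathbf W_k}$ to $(\frac12 - \delta)\positive{\mathbf W_k}$, and conversely for the index leaving $S$; one checks that because $\widehat{\mathbf W}_j > \widehat{\mathbf W}_i \ge 0$ (activable weights on $S$ are nonnegative), the slab for $S'$ is "at least as reachable" from $(\widehat{\mathbf W}, \widehat b)$ as the slab for $S^*$ — intuitively, putting the larger weights inside $S$ relaxes the dilation constraint. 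The erosion case follows by the symmetry $L_\ominus(\mathbf W,S) = \sum_k \mathbf W_k - U_\oplus(\mathbf W, S)$ and $U_\ominus = \sum_k \mathbf W_k - L_\oplus$ already recorded in the theorem, so one only needs to treat $\psi = \oplus$ in detail.

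I would be careful about one subtlety: the projection minimizes over $(\mathbf w, b)$ jointly, so $S'$ being "more reachable" must be phrased in terms of the full Euclidean distance in $\rR^\Omega \times \rR$, not just in the bias coordinate; the cleanest route is to show the constraint region $A_{\psi^*, S'}$ contains a point at least as close to $(\widehat{\mathbf W}, \widehat b)$ as the projection onto $A_{\psi^*, S^*}$, e.g. by taking the projection point of $S^*$ and checking it already lies in (the closure of) $A_{\psi^*, S'}$, or by perturbing it. This containment/monotonicity step is where the real work lies — everything else is the finite swap-iteration bookkeeping and the $\oplus/\ominus$ symmetry reduction. The main obstacle I anticipate is handling the min term $\min_{k \in S}\mathbf W_k$ appearing in $U_\oplus$: the swap can change which index attains the minimum over $S$, so the monotonicity of $U_\oplus$ under the swap needs a short case analysis (is the minimizing index $i$ itself, or some third index?), but in every case replacing $\widehat{\mathbf W}_i$ by the larger $\widehat{\mathbf W}_j$ can only raise or preserve $\min_{k}\widehat{\mathbf W}_k$ over the set, which is the sign we want.
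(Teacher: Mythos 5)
Your overall strategy --- a pairwise swap of a misordered pair $i \in S^*$, $j \in \Omega \setminus S^*$ with $\widehat{\mathbf W}_j > \widehat{\mathbf W}_i$, iterated until no bad pair remains, with erosion reduced to dilation via $L_\ominus = \sum_k \mathbf W_k - U_\oplus$ and $U_\ominus = \sum_k \mathbf W_k - L_\oplus$ --- is exactly the paper's strategy. But the step you yourself flag as ``where the real work lies'' is left without a workable mechanism, and the two routes you sketch for it do not go through. First, reusing the projection point $(\mathbf w^*, b^*)$ of $A_{\psi^*, S^*}$ and checking that it already lies in (the closure of) $A_{\psi^*, S'}$ fails in general: for dilation the constraints for $S'$ include $b \le \mathbf w_j$, and the projection onto $A_{\oplus, S^*}$ typically makes the outside coordinate $\mathbf w^*_j$ small (the paper's explicit formulas even set $\mathbf W^*_k = 0$ for $k \in \mathbb K$), so $b^* \le \mathbf w^*_j$ can be violated. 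Second, your monotonicity argument (``putting the larger weight inside $S$ relaxes the constraint'', the min over $S$ can only increase) concerns $L_\psi$ and $U_\psi$ evaluated at the fixed data point $\widehat{\mathbf W}$; but $\widehat{\mathbf W}$ need not be feasible, and the sets $A_{\psi, S}$ are defined by inequalities at the variable point $(\mathbf w, b)$, so widening the slab at $\widehat{\mathbf W}$ gives neither an inclusion $A_{\psi^*, S^*} \subset A_{\psi^*, S'}$ nor, by itself, the Euclidean distance comparison in $\rR^{\Omega} \times \rR$ that you need.

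The missing idea, which is the heart of the paper's proof, is to swap the coordinates of the projection point together with the set: define $\mathbf w_2$ by exchanging $\mathbf w^*_i$ and $\mathbf w^*_j$ and keep $b^*$. Since $L_\psi$ and $U_\psi$ depend only on the multiset of weight values inside and outside $S$, one gets $L_{\psi^*}(\mathbf w_2, S') = L_{\psi^*}(\mathbf w^*, S^*)$ and $U_{\psi^*}(\mathbf w_2, S') = U_{\psi^*}(\mathbf w^*, S^*)$, so $(\mathbf w_2, b^*)$ is automatically feasible for $S'$ and the case analysis on which index attains $\min_{k \in S}\mathbf w_k$ disappears. The distance comparison then reduces to the elementary rearrangement inequality $(x_1-x_2)^2 + (y_1-y_2)^2 \le (x_1-y_2)^2 + (y_1-x_2)^2$ for $x_1 < y_1$, $x_2 \le y_2$ (Lemma \ref{lemma:ineq}), applied with $(x_1, y_1) = (\widehat{\mathbf W}_i, \widehat{\mathbf W}_j)$ and $(x_2, y_2) = (\mathbf w^*_i, \mathbf w^*_j)$; this needs the auxiliary fact $\mathbf w^*_i \le \mathbf w^*_j$, which the paper also establishes by a swap-plus-lemma contradiction and which your sketch omits. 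Finally, because the swap only yields a non-strict inequality (e.g.\ when $\mathbf w^*_i = \mathbf w^*_j$), the argument proves the weaker formulation the paper actually establishes --- some minimizer is a thresholded set --- rather than a contradiction showing the original $S^*$ itself must be one; your ``iterate until no bad pair remains'' handles this the same way the paper does, but the conclusion should be stated accordingly.
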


Proposition \ref{thm:linear-check} guarantees that the SE is a set of thresholded weights when the weights are activated. Proposition \ref{prop:thresholded-proj} ensures that this property is preserved for the optimal SE even when the weights are not activated. Thus, we only need to compute the distance to all possible sets of thresholded weights, denoted as $S_k \defeq (\widehat{\mathbf W} \geq \widehat{\mathbf W}_k)$, and select the smallest distance. To compute the distance for $\psi$ and $S$ fixed, we solve the following optimization problem.
\begin{multline} \label{prob:initial}
    \minimize{(\mathbf w, b) \in \rR_+^{\Omega} \times \rR}\frac 1 2 \sum_{i \in \Omega}(\mathbf \mathbf W_i - \widehat{\mathbf W}_i)^2 + \frac 1 2 (b - \widehat b)^2 \\
    \text{subject to } \begin{cases}
        L_{\psi}(\mathbf w, S) - b \leq 0 \\
        b - U_{\psi}(\mathbf w, S) \leq 0
    \end{cases}
\end{multline}

When the initial weights $\widehat{\mathbf W}$ are positive, the projected weights are also positive. Consequently, the constraints can be rewritten for dilation and erosion as follows:
\begin{numcases}{\oplus}
    \sum_{k \in \Omega \setminus S}\mathbf W_k - b \leq 0 \label{eq:lambda0_constraint}\\
    \forall s \in S, b \leq \mathbf W_s\label{eq:dil-pos-constraint1}\\
    \forall k \in \Omega \setminus S, -\mathbf W_k \leq 0\label{eq:dil-pos-constraint2}
\end{numcases}
\begin{numcases}{\ominus}
    b-\sum_{s \in S}\mathbf W_s \leq 0 \label{eq:dil-ero-constraint1}\\
    \forall s \in S, \sum_{i \in \Omega}\mathbf W_i - \mathbf W_s \leq b \label{eq:dil-ero-constraint2}\\
    \forall k \in \Omega \setminus S, -\mathbf W_k \leq 0.\label{eq:dil-ero-constraint3}
\end{numcases}

These new constraints are then linear. By utilizing the positive reparametrization technique (as described in \S \ref{subsec:reparam}), we can compute the distance to any activable parameter set $A_{\psi, S}$ by solving a QP problem, using the OSQP solver \cite{osqp}. This needs to be done for all possible sets of thresholds $S_k$, of which there are $\card{\Omega}$. For a single layer with 4096 input and output neurons, the computation would take up to 28 days (on a Intel(R) Xeon(R) W-2265 CPU, 3.50GHz). Future work may involve finding an analytically computable form for efficient distributed computing. Alternatively, we now introduce  an efficient approximation technique.

\subsubsection{Projection onto constant weights} \label{subsubsec-proj-cst}

We use a similar technique to \cite{li2016ternary}. First, we define $\widetilde A_S$  the set of constant weights over $S$, and replace ${d\big((\widehat{\mathbf W}, \widehat b), A_{\psi, S}\big)}$ with $d(\widehat{\mathbf W}, \widetilde A_S)$. 

\begin{align}
    \widetilde A_S &\defeq \{\theta\cdot \indicator{S}~\lvert~ \theta > 0\} \label{eq:constant_set}\\
    \label{eq:dist_constant_set} d(\widehat{\mathbf W}, \widetilde A_S) &= \sum_{i \in \Omega}\widehat{\mathbf W}_i^2 - \frac{1}{\card{S}}\Big(\sum_{s \in S}\widehat{\mathbf W}_s\Big)^2
\end{align}

Similarly to proposition \ref{prop:thresholded-proj}, we can prove that the optimal $S^*$ is a set of thresholded weights. The analytical form in \ref{eq:dist_constant_set} allows for efficient computation of $S^*$ in distributed systems, significantly faster than the first method. For a layer with 500 input and output channels, this step takes less than 3 seconds. Once $S^*$ is obtained, the bias term helps determine $\psi \in \{\oplus, \ominus\}$, based on Theorem \ref{thm:check-dila}. If $\widehat b > \sum_{\widehat{\mathbf W}}w /2$, the operation is an erosion; otherwise, it is a dilation.

\subsection{BiMoNN binarization} \label{subsec:succ-bin}

The core of the successive binarization is Theorem \ref{thm:check-dila}. To simplify, we suppose that a BiMoNN is a succession of BiSE.  If the first BiSE is activated, then with a binary input (e.g. almost binary with $\delta_0 = \frac 1 2$), the output is almost binary of known parameter $\delta_1$. If the next BiSE is activated for the parameter $\delta_1$, its output is also activated for parameter $\delta_2$, and so on. Hence, every BiSE operation on the almost binary space is equivalent to the morphological operation on the binary space. In case one BiSE is not activated, its output is not on the almost binary space, breaking the exact equivalence between BiSE and morphology. We apply an approximate method (projection \S \ref{subsubsec:proj-activable} if small network, otherwise fast projection \S \ref{subsubsec-proj-cst}), and suppose that the input for the next BiSE is binary.

\section{Morphological Regularization} \label{sec:regularization}

The binarization schema, as described in \S\ref{sec:binarization}, is separate from the training process. During standard loss optimization with classical gradient descent, there is no explicit constraint that makes the network behave morphologically. Consequently, the network may not tend to a morphological operation: BiSE operators may not be activated and the weights may stay far from their respective projection space, resulting in potential errors in the approximate binarization process compared to the floating-point operator. To encourage the network to exhibit a more morphological behavior, we propose the inclusion of a regularization term in the loss, denoted as $\mathcal L_{morpho}$. The loss function now becomes:
\begin{equation}
    \mathcal L = \mathcal L_{\text{data}} + c\cdot \mathcal L_{\text{morpho}}
\end{equation}

$\mathcal L_{\text{data}}$ represents the loss used for the data-driven task (e.g. Cross-Entropy Loss for classification), and $c$ is the hyperparameter controlling the strength of the regularization term. To define the regularization term, we reuse the two approximate binarization techniques defined in section \ref{subsec:approx-bin}. 

\subsection{Regularization onto activable parameters}
A first idea is to reduce the distance to the closest set $A_{\psi, S}$ defined in  \eqref{eq:activable-set} for each BiSE neuron. Let $(\mathbf W ,b) \defeq (W(\paramweight), B(\parambias))$ be the weights in a given iteration. Then:
\begin{equation}
    \mathcal L_{\text{morpho}} = \mathcal L_{acti} \defeq \min_{S, \psi}d\bigg((\mathbf W, b), A_{\psi, S}\bigg)
\end{equation}

To do this, we must compute this distance in a differentiable fashion. We proceed in two steps: first, we compute the optimal $(S^*, \psi^*)$ as in \S\ref{subsec:exact-bin}, by checking all possible thresholded set of weights and solving the corresponding QP with OSQP, with the Lagrangian dual method. This yields the best $(\mathbf W^*, b^*)$ as well as the Lagrangian dual values, from which we deduce the differentiable form of the distance. If $\psi^* = \oplus$, let $\lambda^*$ be the dual value for the constraint \eqref{eq:lambda0_constraint} and
\begin{align}
    \mathbb T &\defeq \{t \in S^* ~|~\mathbf W_t \leq b^*\} \\
    \mathbb K &\defeq \{k \in \Omega \setminus S^* ~|~ \mathbf W_k \leq \lambda^*\} \\
    \kbar &\defeq (\Omega \setminus S^*) \setminus \mathbb K \\
    D &\defeq \card{\kbar}(\card{\mathbb T} + 1) + 1,
\end{align}
then we can show that we obtain the following differentiable expressions for $(\mathbf W^*, b^*)$.
\begin{align}
    b^* &= \frac{1}{D}\bigg(\sum_{j \in \kbar}\mathbf W_j + \card{\kbar}\Big(\sum_{t \in \mathbb T}\mathbf W_t + b\Big)\bigg)\\
    \forall j \in \kbar, \mathbf w^*_j &= \mathbf \mathbf W_j + \frac{1}{D}\Bigg(\sum_{t \in\mathbb T}\mathbf \mathbf W_t + b - (\card{\mathbb T} + 1)\sum_{i \in \kbar}\mathbf \mathbf W_i\Bigg)\\
    \forall k \in \mathbb K~,~ &\mathbf W^*_k = 0\\
    \forall t \in \mathbb T~,~ &\mathbf W^*_t = b^*\\
    \forall s \in S^* \setminus &\mathbb T~,~ \mathbf W^*_s = \mathbf W_s.
\end{align}

Then, we have a differentiable expression for the loss.
\begin{equation}
    \mathcal L_{acti} = \sum_{i \in \Omega}(\mathbf \mathbf W_i - \mathbf w^*_i)^2 - (b - b^*)^2
\end{equation}

However, the computational burden described in \S\ref{subsec:exact-bin} persists: the first step of computing $S^*$ is too long in practice. 

\subsection{Regularization onto constant set}
Instead of trying to enforce a fully morphological network, we can encourage the weights to stay in the set of constant weights $\widetilde A_S$ defined in \eqref{eq:constant_set}. We proceed in two steps: we find the best $S^*$ in the same way as in \S\ref{subsubsec-proj-cst}, i.e by computing the distance $d(\widetilde A_S, \mathbf W)$ defined in $\eqref{eq:dist_constant_set}$ for all set of thresholded weights, and selecting the smallest distance. Then, a differentiable expression for the distance is:
\begin{equation}
    \mathcal L_{\text{morpho}} = \mathcal L_{\text{exact}} \defeq \sum_{i \in \Omega}\mathbf W_i - \frac{1}{\card{S^*}}\Big(\sum_{s \in S^*}\mathbf W_s\Big)^2
\end{equation}

Computing the distance for all set of thresholded weights still takes significant time, and in our experimentation, this slows the training by up to 80$\times$. Instead, we propose to use the technique introduced in \cite{li2016ternary}. If we assume that the weights follow a uniform or normal distribution, we can approximate the optimal $S^* \simeq (\mathbf W > \tau)$ with the following thresholds:
\begin{align}
    \text{Uniform } \tau_u &\defeq \frac{2}{3}\Big(\frac 1{\card{\Omega}}\sum_{i \in \Omega}\mathbf W_i\Big) \quad S_u \defeq (\mathbf W > \tau_u)\\ 
    \text{Normal } \tau_n &\defeq \frac3{4}\Big(\frac 1{\card{\Omega}}\sum_{i \in \Omega}\mathbf W_i\Big) \quad S_n \defeq (\mathbf W > \tau_n)
\end{align}

From this, we can define two regularization loss which can be computed without slowing the training down.
\begin{align}
    \mathcal L_{\text{morpho}} = \mathcal L_{\text{unif}} &\defeq \sum_{i \in \Omega}\mathbf W_i - \frac{1}{S_u}\Big(\sum_{s \in S_u}\mathbf W_s\Big)^2 \\
    \mathcal L_{\text{morpho}} = \mathcal L_{\text{normal}} &\defeq \sum_{i \in \Omega}\mathbf W_i - \frac{1}{S_n}\Big(\sum_{s \in S_n}\mathbf W_s\Big)^2
\end{align}

\section{Experiments} \label{sec:experiments}

In this section, we empirically validate the capabilities of BiMoNNs in learning a binarized morphological pipeline through a denoising task, without the need for regularization. We also evaluate the model and regularization techniques on the MNIST classification task.

\subsection{Binary Denoising}

We generate a second dataset to evaluate the denoising capacity of \bimonn{}s. The target images in this dataset consist of randomly-oriented segments with width $h$, with added Bernoulli noise. To filter these images, an MM expert would use a union of opening operations, where the SEs are segments with width $1$ and angle one of $(0\degree, 90\degree, -45\degree)$. The SEs should be longer than the noise and shorter than the smallest stick in the target image (usually a length of $5$). Examples are given in figure \ref{fig:noisti-ex}.

Our architecture uses two consecutive BiSEL layer of kernel size 5 and 3 hidden channels (see figure \ref{fig:binarized-noisy-closest}). We optimize the MSE Loss, with an initial learning rate of $0.01$. We train over 6000 iterations and halve the learning rate after 700 iterations with non-diminishing loss. We stop once the loss does not decrease after 2100 iterations. We employ a positive reparametrization for the bias and a dual reparametrization for the weights, and binarize with the projection onto activable parameters.

The network achieves excellent denoising performance, with a DICE score of 97.5\%. The remaining 2.5\% discrepancy is due to artifacts between the sticks that cannot be denoised using an opening operation. The binarized network (Figure \ref{fig:binarized-noisy-closest}) accurately learns the intersection of three openings (which remains an opening) the same way a human expert would combine such operators to achieve the denoising task optimally. Additionally, 4 out of the 6 BiSE are activated during the process. This experiment shows that our network can learn accurate and interpretable composition of morphological operators.

\begin{figure}[h]
    \centering
    \begin{subfigure}{.6\linewidth}
        \includegraphics[width=.49\linewidth]{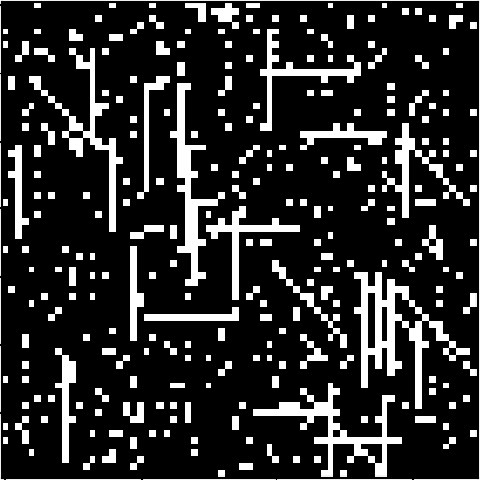}
        \includegraphics[width=.49\linewidth]{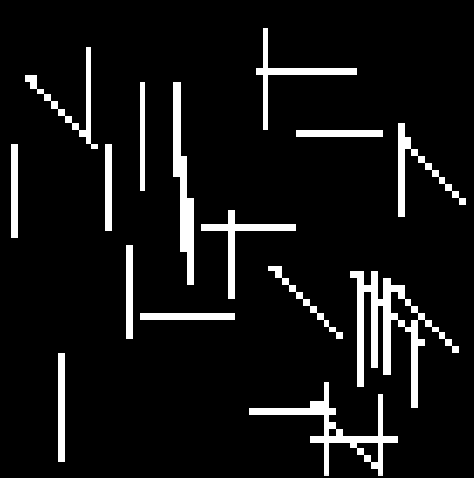}
        \caption{Input-output example.}
        \label{fig:noisti-ex}
    \end{subfigure}
    \begin{subfigure}{\linewidth}
        \includegraphics[width=\linewidth]{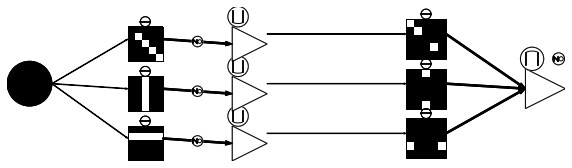}
        \caption{Binarization of the network}
        \label{fig:binarized-noisy-closest}
    \end{subfigure}
\end{figure}

\subsection{Classification}

We conduct classification experiments on the MNIST dataset~\cite{lecun1998mnist}. All images are thresholded at 128. Our BiMoNN model comprises one hidden DenseLUI layer with 4096 neurons. To handle the large number of parameters, we adopt the fast projection defined in \S \ref{subsec:approx-bin}. We compare the classification accuracy of our float and binarized models against the SOTA and baseline models with fully connected layers and 1D batch normalization~\cite{ioffe2015batch}, employing $\frac{1}{2}(\tanh (\cdot) + 1)$ as the activation layer. The accuracy results are summarized in Table \ref{tab:mnist-classif}.

In our framework, binarizing the weights also entails binarizing the activations. Consequently, binarizing the last layer would yield binary decision outputs for each output neuron, possibly leading to multiple labels with a score of $1$. To overcome this issue, we refrain from binarizing the last layer, thus retaining the real-valued activations. This decision affects a negligible proportion of parameters ($\simeq$0.1\%). 

In traditional classification neural networks, the softmax activation is commonly used at the end of the last layer to produce the final probability distribution over the classes. However, in the BiMoNN architecture, we utilize the same activation function as the hidden layers, which is the normalized $\tanh$. Additionally, we compare the performance of our BiMoNN model when replacing the last normalized $\tanh$ activation with a softmax layer. When using the normalized $\tanh$, $\mathcal L_{data}$ is the Binary Cross-Entropy loss. When using the softmax, we use the Cross-Entropy loss.
\begin{align}
    \mathcal L_{BCE}(\hat{\mathbf y_i}, \mathbf y^*_i) &\defeq \sum_{c = 1}^{10}\mathbf y^*_i\log(\hat{\mathbf y_i}) + (1 - \mathbf y^*_i)\log(1 - \hat{\mathbf y_i})\\
    \mathcal L_{CE}(\hat{\mathbf y_i}, \mathbf y^*_i) &\defeq \sum_{c = 1}^{10}\mathbf y^*_i\log(\hat{\mathbf y_i})
\end{align}
We conduct a comprehensive random search to identify the optimal hyperparameter configuration for the Binary Morphological Neural Network (BiMoNN). The hyperparameters explored include the learning rate, last activation function (Softmax layer vs. normalized $\tanh$), positive vs no reparametrization for weights, and several bias reparametrization schemas (no reparametrization, positive, positive reparam, and positive projected). Additionally, we investigate regularization losses, such as no regularization, $\mathcal{L}_{exact}$, $\mathcal{L}_{uni}$, and $\mathcal{L}_{nor}$. If regularization is applied, only positive weight reparametrization is considered. We vary the coefficient $c$ in the regularization loss and explore different batch value starting time for when we start applying regularization during training.  For each regularization schema, we select the model with the best binary validation accuracy, and the corresponding results are displayed in Table \ref{tab:mnist-classif}. Detailed hyperparameter configurations and hyperparameters study are provided in appendix \ref{appendix:experiments}.

\begin{table}[h]
    \centering
    \caption{Accuracy error on test set for \mnist{} classification, with float error $\rR$ and binarized error $\mathbb B$.}
    \begin{tabular}{llllll}
        \hline
        & Architecture & Params & $\mathbb R$ & $\mathbb B$\\
        \hline
        \multirow{4}{*}{Ours} & DLUI $(W = \text{Id})$ & 3.3 M & \textbf{2.2\%} & 90.2\%  \\
        & DLUI (No Regu) & 3.3 M & 4.6\% & 10.1\%  \\
        & DLUI $\mathcal L_{\text{exact}}$ & 3.3 M & 4.0\% & 7.3\%  \\
        & DLUI $\mathcal L_{\text{unif}}$ & 3.3 M & 3.6\% & \textbf{4.5\%}  \\
        & DLUI $\mathcal L_{\text{normal}}$ & 3.3 M & 2.8\% & 4.6\%  \\
        % DenseBimonn 1fc (415) & 3.3 M & 99.9\% & 4.1\% & 34.4\% \\
        % DenseBimonn 3fc (587, 2048, 2048) & 10 M & 99.9\% & 3.8\% & 67.8\% \\
        \hline
        \multirow{3}{*}{SOTA}&
        EP~1fc \cite{layde2021} & 3.3 M & - & 2.8\% \\
        &BinConnect \cite{binaryconnect} & 10 M  &  - & \textbf{1.3\%}     \\
        &BNN \cite{hubara2016binarized} & 10 M &   - & 1.4\%  \\
        \hline
        \multirow{2}{*}{Float} & FC (4096) & 3.3 M & 1.5 \% & - \\
        & FC (2048x3)  \cite{hubara2016binarized} & 10 M  & \textbf{1.3\%} & -
    \end{tabular}
    \label{tab:mnist-classif}
\end{table}

Applying the softplus reparametrization to the weights led to a slight increase in the floating-point error (2.2\% vs. 2.8\%). Similar findings were observed in \cite{neacsu2020accuracy} for non-negative neural networks in a different task. Generally, positive neural networks exhibit lower accuracy but offer enhanced robustness and interpretability \cite{chorowski2014learning}. In our case, it significantly improved the binarized results, along with a substantial increase in the rate of activated BiSE neurons, rising from a median of 1.5\% to 10\% with softplus. Without imposing positivity, the binarized network performed randomly.

We analyze the impact of regularization on the performance of the binarized model, which improves as expected. The float accuracy also increases, given that we select the model with the best binary accuracy on validation. Surprisingly, $\mathcal{L}_{\text{unif}}$ and $\mathcal{L}_{\text{normal}}$ outperform $\mathcal{L}_{\text{exact}},$ despite being designed as approximations. This discrepancy might be due to the number of searches performed: $\mathcal{L}_{\text{exact}}$ performed only 42 searches, while other configurations went through 100 searches. However, we have not yet achieved parity with the baseline for the float model or reached the state-of-the-art for the binarized model. With $4.5\%$ error compared to $2.8\%$ for the same number of parameters, this emphasizes the need for improved architecture, better regularization techniques, or exploration of alternative optimization methods.

\subsection{Discussion}

The state-of-the-art BWNN methods commonly rely on the XNOR operator to emulate multiplications. However, this algebraic framework proves unsuitable for morphological operators, as it contradicts the set-theoretical principles of morphological operations. In our experiments, we observed that the BNN operator \cite{hubara2016binarized} failed to learn even the simplest dilation operator. Furthermore, the performance of the state-of-the-art method on the denoising task was unsatisfactory, with a DICE coefficient of only approximately 0.3, indicating the need for improved approaches in handling morphological operations.

In contrast, our findings reveal that the float BiMoNN exhibits enhanced binarization capabilities when trained to closely approximate a set of morphological operators. As a result, the float BiMoNN naturally acquires morphological properties, leading to a more effective subsequent binarization process. However, when applied to the classification of the \mnist{} dataset, the resulting float BiMoNN does not retain morphological characteristics, causing a noticeable performance degradation after binarization. To address this issue, we emphasize the importance of positive reparametrization and applying morphological regularization. By incorporating these techniques, we significantly improved the model's overall performance and mitigate the accuracy loss upon binarization. This shows the potential of our proposed BiMoNN framework in leveraging morphological properties and offers insights into the development of more effective BiMoNN models for many and diverse tasks.

\section{Conclusion}
\label{sec:print}

In this paper, we have presented a novel, mathematically justified approach to binarize neural networks using the Binary Morphological Neural Network (BiMoNN), achieved by leveraging mathematical morphology. Our proposed method establishes a direct link between deep learning and mathematical morphology, enabling binarization of a wider set of architectures, without performance loss under specific activation conditions and providing an approximate solution when these conditions are not met. 

Through our experiments, we have demonstrated the effectiveness of our approach in learning morphological operations and achieving high accuracy in denoising tasks, surpassing state-of-the-art techniques that rely on the straight-through estimator (STE). Furthermore, we proposed and evaluated three practical regularization techniques that aid in converging to a morphological network, showcasing their efficacy in a classification task. Additionally, we introduced a fourth regularization technique that, though promising in theory, currently faces computational challenges. We will adress these shortcoming in future work.

Despite promising results, there is still room for enhancing both the floating point and binary modes of our network. In future work, diverse architectures, such as incorporating convolution layers, could be explored to further improve the performance and applicability of BiMoNN. Overall, our research lays the foundation for advancing the field of binarized neural networks with a morphological perspective, offering valuable insights into developing more powerful and efficient models for a wide range of tasks.

\clearpage
{\small
\bibliographystyle{ieee_fullname}
\bibliography{mybibliography}
}

\clearpage

\appendix
\onecolumn

\section{Proofs} \label{appendix:proof}

\subsection{Proof of theorem \ref{thm:check-dila}} \label{proof:check-dila}

\begin{lemma} \label{lemma:check-dila}

Let 
\begin{align}
    R(\abparam) &\defeq \bigg[0, \frac{1}{2} - \abparam\bigg]\bigcup\bigg[\frac{1}{2} + \abparam, 1\bigg] \\
    L_1 &\defeq \max_{(a_k)_{k\in\Omega_S \setminus S} \in R(\abparam)}{\sum_{k \in \Omega_S \setminus S}{a_kw_k}} \\
    L_2 &\defeq \max_{(c_k)_{k \in S}\in [0, \frac{1}{2} - \abparam]}{\sum_{k \in S}{c_k w_k}} \\
    U_1 &\defeq \min_{(a_k)_{k \in \Omega_S \setminus S}\in R(\abparam)}{\sum_{k \in \Omega_S \setminus S}{a_kw_k}} \\
    U_2 &\defeq \min_{(c_k)_{k \in S}\in R(\abparam), \exists j \in S,c_j \geq \frac{1}{2} + \abparam}{\sum_{k \in S}{a_cw_k}}
\end{align}

The two following propositions are equivalent:

\begin{equation} \label{prop:cd1}
    \forall I \in \almostbinary{\abparam} ~,~ \Big(\dil{X_I}{S} = \{i \in \zZ^d ~\lvert~ \conv{I}{W}(i) > b\}\Big)
\end{equation}
\begin{equation} \label{prop:cd2}
    L_1 + L_2 \leq b < U_1 + U_2
\end{equation}
\end{lemma}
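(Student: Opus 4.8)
The plan is to turn the image-level statement \eqref{prop:cd1} into a statement about the finitely many weighted window sums a BiSE can produce, and then exploit the fact that a universally quantified biconditional splits into exactly the two worst-case bounds of \eqref{prop:cd2}.

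\emph{Reduction to window configurations.} By definition of $\almostbinary{\abparam}$, an image $I$ lies in $\almostbinary{\abparam}$ precisely when every pixel value lies in $R(\abparam)=[0,\tfrac12-\abparam]\cup[\tfrac12+\abparam,1]$, and then $X_I=(I>\tfrac12)$ consists of exactly the pixels whose value is $\geq\tfrac12+\abparam$. Fix a center $i$; write $c_k$ for the value of $I$ at the window position of $i$ indexed by $k\in S$ and $a_k$ for the one indexed by $k\in\Omega_S\setminus S$ (zero-padding outside $\Omega_I$ is harmless since $0\in R(\abparam)$). Then $i\in\dil{X_I}{S}$ iff some $c_s\geq\tfrac12+\abparam$, and $i$ lies in the thresholded convolution set iff $\sum_{k\in S}c_kW_k+\sum_{k\in\Omega_S\setminus S}a_kW_k>b$. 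Hence \eqref{prop:cd1} is equivalent to the assertion that, for all $(c_k)\in R(\abparam)^S$ and all $(a_k)\in R(\abparam)^{\Omega_S\setminus S}$, one has ``$\exists s\in S,\ c_s\geq\tfrac12+\abparam$'' iff ``$\sum_{k\in S}c_kW_k+\sum_{k\in\Omega_S\setminus S}a_kW_k>b$''. The direction \eqref{prop:cd2}$\Rightarrow$\eqref{prop:cd1} of this reduction is immediate — substitute any $I$ and $i$ into the bounds; for \eqref{prop:cd1}$\Rightarrow$\eqref{prop:cd2} one realizes each prescribed window by an almost binary image on a sufficiently large $\Omega_I$.

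\emph{Splitting the biconditional.} A biconditional holds on all configurations iff both the forward implication (``some $c_s\geq\tfrac12+\abparam$'' $\Rightarrow$ ``sum $>b$'') and the contrapositive of the converse (``every $c_s\leq\tfrac12-\abparam$'' $\Rightarrow$ ``sum $\leq b$'') hold on all configurations. The objective $\sum_{k\in S}c_kW_k+\sum_{k\in\Omega_S\setminus S}a_kW_k$ is linear and separates over the two index blocks, and the feasible sets are compact (each is a finite product of the compact set $R(\abparam)$, further intersected with the closed condition ``$\exists j,\ c_j\geq\tfrac12+\abparam$'' in one case, or restricted to $[0,\tfrac12-\abparam]^S$ in the other). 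Hence the relevant infimum and supremum are attained and factor as $U_1+U_2$ and $L_1+L_2$; the forward implication becomes $b<U_1+U_2$ and the converse becomes $L_1+L_2\leq b$. Combining, \eqref{prop:cd1} $\Leftrightarrow$ ($L_1+L_2\leq b$ and $b<U_1+U_2$), which is exactly \eqref{prop:cd2}.

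The parts I expect to be routine are the elementary optimization facts — a linear functional over a box-type domain (here a union of two closed intervals per coordinate) attains its extrema at explicit corners, and the objective separates over $S$ and $\Omega_S\setminus S$. The point that will need care is the reduction step: checking that quantifying over all $I\in\almostbinary{\abparam}$ and all centers $i\in\zZ^d$ genuinely sweeps out every window configuration in $R(\abparam)$ — boundary (zero-padded) centers only impose a subset of the constraints, while each extremal configuration defining $L_1,L_2,U_1,U_2$ must be exhibited at an interior center of some almost binary image, which is where the implicit hypothesis that $\Omega_I$ contains a translate of $\Omega_S$ is used.
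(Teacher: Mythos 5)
Your proposal is correct and follows essentially the same route as the paper: reduce \eqref{prop:cd1} to a statement about window configurations with values in $R(\abparam)$ (the paper realizes an arbitrary configuration by an explicit almost binary image and evaluates the convolution at the origin), then split the pointwise biconditional into the two one-sided worst-case bounds, giving $L_1+L_2\leq b$ and $b<U_1+U_2$ via attainment of the extrema. Your explicit handling of the quantifier splitting, compactness, and boundary/zero-padding realizability is slightly more careful than the paper's write-up, but it is the same argument.
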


\begin{proof}

First we show that {$\ref{prop:cd1} \Rightarrow \ref{prop:cd2}$}.
\\

We suppose \ref{prop:cd1}.
\\\linebreak
\textbf{Right hand side}

Let $(a_k)_{k \in \Omega_S\setminus S} \in ([0, \frac{1}{2} - \abparam] \cup [\frac{1}{2}+\abparam, 1])^{\Omega_S\setminus S}$  and $(c_k)_{k \in S}\in ([0, \frac{1}{2} - \abparam] \cup [\frac{1}{2}+\abparam, 1])^{S}$ such that $\exists j \in S ~,~c_j \geq \frac{1}{2}+\abparam$.

Let $I \in [0, 1]^{-\Omega_I}$ be such that $I(-k) = a_k$ if $k \in \Omega_S\setminus S$ and $I(-k) = c_k$ if $k \in S$. Then $I \in \almostbinary{\abparam}$.

Let $j \in S ~,~ c_j \geq \frac{1}{2}+\abparam$. Then $I(-j) \geq \frac{1}{2}+\abparam$ , therefore $-j \in X_I$, and $0 = j - j \in \dil{X_I}{S}$.  Therefore,

\begin{equation}
    \conv{I}{W}(0) = \sum_{k \in \Omega_S}{I(-k)w_k} = \sum_{k \in \Omega_S\setminus S}{a_kw_k} + \sum_{k \in S}{c_kw_k} > b
\end{equation}

\textbf{Left hand side}

We reason by contradiction. Let us suppose that $\exists (a_k)_{k \in \Omega_S\setminus S} \in R(\abparam)^{\Omega_S\setminus S}$  and $\exists (c_k)_{k \in S}\in [0, \frac{1}{2} - \abparam]^{S}$ , such that $\sum_{k \in \Omega_S\setminus S}{a_kw_k} + \sum_{k \in S}{a_kw_k} > b$.
Let $I \in [0, 1]^{-\Omega_I}$ be such that $I(-k) = a_k$ if $k \in \Omega_S\setminus S$ and $I(-k) = c_k$ if $k \in S$. Then $I \in \almostbinary{\abparam}$. As above, $\conv{I}{W}(0) > b$. But $\forall j \in S, I(-j) \leq \frac{1}{2} - \abparam$ and $-j \notin S$. Therefore, $0 \notin \dil{X_I}{S}$. This contradicts $(CD1)$.

\paragraph{$\ref{prop:cd2} \Rightarrow \ref{prop:cd1}$}

Let us suppose $\ref{prop:cd2}$. Let $I \in \almostbinary{\abparam}$.

\begin{itemize}
    \item $\dil{X_I}{S} \subset (\conv{I}{W} > b)$
\end{itemize}

Let $j \in \dil{X_I}{S}$. Then 

\begin{align}
    \conv{I}{W}(j) &= \sum_{k \in \Omega_S}{I(j-k)w_k}\\
    &= \sum_{k \in \Omega_S\setminus S}{I(j-k)w_k} + \sum_{k \in S}{I(j-k)w_k}
\end{align}

Per definition of dilation, $\exists s \in S, \exists x \in X_I ~,~ j = s + x$, or in other words, $\exists s \in S ~,~ j - s \in X_I$, therefore $\exists s \in S ~,~ I(j-s) \geq \frac{1}{2} + \abparam$. Using the right hand side of $(CD1)$, we conclude.

\begin{itemize}
    \item $(\conv{I}{W} > b) \subset \dil{X_I}{S}$
\end{itemize}

Let $j \in \conv{I}{W} > b$. By contradiction, we suppose that $\forall s \in S, j-s \notin X_I$. Then $\forall s \in S ~,~ I(j-s) \leq 0$. Therefore, using the left hand side of $(CD1)$, $\conv{I}{W}(j) = \sum_{k \in \Omega_S\setminus S}{I(j-k)w_k} + \sum_{k \in S}{I(j-k)w_k} \leq b$ which contradicts our hypothesis.

\end{proof}

Now we proof the Theorem \ref{thm:check-dila} for dilation:

\begin{proposition} \label{prop:proof-of-dila}

The two following propositions are equivalent:

\begin{equation}\label{prop:cd3}
    (\forall I \in \almostbinary{\abparam}) ~,~ (\dil{X_I}{S} = \{i \in \zZ^d ~\lvert~ \conv{I}{W}(i) > b\})
\end{equation}

\begin{equation}\label{prop:cd4}
    L_1 + L_2 = L_{\oplus} \leq b < U_{\oplus} = U_1 + U_2
\end{equation}

If one of them is respected, then $b \geq 0$ and $\forall k \in S, w_k \geq 0$.

\end{proposition}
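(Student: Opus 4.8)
The plan is to derive Proposition~\ref{prop:proof-of-dila} from Lemma~\ref{lemma:check-dila} by computing the four optimization constants $L_1, L_2, U_1, U_2$ explicitly and showing they collapse to the closed-form expressions $L_{\oplus}(\mathbf W, S)$ and $U_{\oplus}(\mathbf W, S)$ given in the theorem statement. Since Lemma~\ref{lemma:check-dila} already establishes the equivalence \eqref{prop:cd1}~$\Leftrightarrow$~\eqref{prop:cd2} with the constants defined as optima of linear functions over the box-with-gap set $R(\delta)$, the bulk of the work is the linear-programming bookkeeping: each $w_k$ enters linearly, so each coordinate can be optimized independently, and the optimum of $a \mapsto a w_k$ over $[0,\frac12-\delta]\cup[\frac12+\delta,1]$ is attained at an endpoint depending on the sign of $w_k$.

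\textbf{Key steps.} First I would compute $L_1 = \max \sum_{k \in \Omega_S \setminus S} a_k w_k$: for $w_k \geq 0$ the best choice is $a_k = 1$, contributing $\positive{\mathbf W_k}$; for $w_k < 0$ the best choice is $a_k = 0$, contributing nothing. Hence $L_1 = \sum_{k \in \Omega \setminus S}\positive{\mathbf W_k}$. Second, $L_2 = \max \sum_{k \in S} c_k w_k$ over $c_k \in [0, \frac12 - \delta]$: again $c_k = \frac12 - \delta$ if $w_k \geq 0$ and $c_k = 0$ otherwise, giving $L_2 = (\frac12 - \delta)\sum_{k \in S}\positive{\mathbf W_k}$, so $L_1 + L_2 = L_{\oplus}(\mathbf W, S)$ as claimed. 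Third, $U_1 = \min \sum_{k \in \Omega_S \setminus S} a_k w_k$: now $a_k = 0$ if $w_k \geq 0$ and $a_k = 1$ if $w_k < 0$, so $U_1 = \sum_{k \in \Omega \setminus S}\negative{\mathbf W_k}$. Fourth, and the only delicate one, $U_2 = \min \sum_{k \in S} c_k w_k$ subject to $c_k \in R(\delta)$ and the side constraint $\exists j \in S, c_j \geq \frac12 + \delta$: here I would argue that at the minimum exactly one index $j$ is forced up to $\frac12 + \delta$ (choose the $j$ minimizing $\mathbf W_j$ over $S$, using $\mathbf W_j \geq 0$ on $S$ — see below), while every other $k \in S$ is driven to $0$ if $w_k \geq 0$ or to $1$ if $w_k < 0$. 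This yields $U_2 = (\frac12 + \delta)\min_{k \in S}\mathbf W_k + \sum_{k \in S \setminus \{j^*\}}\negative{\mathbf W_k}$; combining with $U_1$ and noting $\sum_{k \in \Omega}\negative{\mathbf W_k}$ absorbs all the negative terms (including $\negative{\mathbf W_{j^*}} = 0$ once we know $\mathbf W_{j^*} \geq 0$) gives $U_1 + U_2 = U_{\oplus}(\mathbf W, S)$.

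\textbf{Sign conclusions.} Finally I would establish the last sentence: if \eqref{prop:cd4} holds then $L_{\oplus} \leq b < U_{\oplus}$, and in particular $L_{\oplus} < U_{\oplus}$. To get $\mathbf W_s \geq 0$ for $s \in S$, I would test the constant image $I \equiv \frac12 - \delta$ (giving $X_I = \emptyset$, so $\conv{I}{W}(j) \leq b$ everywhere) against $I$ equal to $\frac12 + \delta$ only at position $-s$ and $\frac12 - \delta$ elsewhere (then $0 \in \dil{X_I}{S}$, forcing $\conv{I}{W}(0) > b$); subtracting the two inequalities isolates $2\delta\, \mathbf W_s > 0$. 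A similar comparison — the all-$\frac12+\delta$ image versus itself perturbed down at a single coordinate of $S$ — together with the already-derived $\mathbf W_s \geq 0$ yields $b \geq 0$. Alternatively, $b \geq 0$ follows directly since $0 = \conv{I}{W}$ for $I \equiv 0$ (empty dilation) must satisfy $\leq b$.

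\textbf{Main obstacle.} The main obstacle is the computation of $U_2$: the side constraint "$\exists j \in S$ with $c_j \geq \frac12 + \delta$" breaks the coordinatewise separability, so I need a short case/exchange argument showing the optimum activates this constraint at precisely one index and at the tight value $\frac12 + \delta$ — and that this index is the $\argmin$ of $\mathbf W$ over $S$. One must also handle carefully the circular-looking dependency: the formula $U_{\oplus}$ uses $\min_{k\in S}\mathbf W_k$ without a positive part, which is only legitimate once we know the weights are nonnegative on $S$; I would resolve this by first proving the sign facts under hypothesis \eqref{prop:cd4} (equivalently directly from \eqref{prop:cd3} as sketched above), and only then identifying $U_2$ with the stated closed form. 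Everything else is routine endpoint optimization of linear functionals.
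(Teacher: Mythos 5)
Your proposal is correct and follows essentially the same route as the paper: reduce to Lemma \ref{lemma:check-dila}, evaluate $L_1,L_2,U_1,U_2$ by coordinatewise endpoint optimization, treat the constrained minimum $U_2$ by an exchange argument that is only legitimate once $\mathbf W_s\geq 0$ on $S$, and supply the sign facts separately in each direction (from \eqref{prop:cd3} via test images, or read off the closed-form bounds under \eqref{prop:cd4}). The only cosmetic difference is that you obtain $\mathbf W_s\geq 0$ by comparing two explicit almost binary images and subtracting, whereas the paper argues by contradiction on the minimizing configuration of $U_2$; both work.
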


\begin{proof}

First we show 4 propositions.
\\

\textbf{First equality}
\begin{equation} \label{eq:pr1}
    \max_{(a_k)_{k \in \Omega_S\setminus S} \in R(\abparam)^{\Omega_S\setminus S}}{\sum_{k \in \Omega_S\setminus S}}{a_kw_k} = \sum_{i \in \Omega_S\setminus S ~,~ w_i \geq 0}{w_i}
\end{equation}
Let $(a_k)_{k \in \Omega_S\setminus S} = \argmax{(a_k)_{k \in \Omega_S\setminus S} \in R(\abparam)^{\Omega_S\setminus S}}{\sum_{k \in \Omega_S\setminus S}}{a_kw_k}$. Let $k \in \Omega_S\setminus S$. If $w_k < 0$ , then $a_k = 0$, otherwise replacing only $a_k$ by $0$ in the sequence $(a_k)_k$ would result in a higher sum. We apply the same reasoning to show that if $w_k > 0$, then $a_k = 1$. Therefore, $\sum_{k \in \Omega_S\setminus S}{a_k w_k} = \sum_{k \in \Omega_S\setminus S ~,~ w_k \geq 0}{a_k w_k}$.
\\\linebreak

\textbf{Second equality} \begin{equation}\label{eq:pr2}
    \min_{(a_k)_{k \in \Omega_S\setminus S} \in R(\abparam)^{\Omega_S\setminus S}}{\sum_{k \in \Omega_S\setminus S}}{a_kw_k} = \sum_{i \in \Omega_S\setminus S ~,~ w_i \leq 0}{w_i}
\end{equation}
We apply equation \ref{eq:pr1} to $\max_{(a_k)_{k \in \Omega_S\setminus S} \in R(\abparam)^{\Omega_S\setminus S}}{\sum_{k \in \Omega_S\setminus S}}{(-a_k)w_k}$.
\\\linebreak

\textbf{Third equality} \begin{equation}\label{eq:pr3}
    \max_{(a_k)_{k \in S} \in [0, \frac{1}{2} - \abparam]}{\sum_{k \in S}{a_kw_k}} = (\frac{1}{2} - \abparam)\sum_{k \in S ~,~ w_k \geq 0}{w_k}
\end{equation}
Let $(a_k)_{k \in S} = \argmax{(a_k)_{k \in S}\in [0, \frac{1}{2} - \abparam]^S}{\sum_{k \in S}{a_kw_k}}$. Let $i \in S$. 
Then $\sum_{k \in S}{a_kw_k} = \sum_{k \in S\{i\}}{a_kw_k} + a_iw_i \geq \sum_{k \in S\{i\}}{a_kw_k}$, therefore $a_iw_i \geq 0$. Therefore, if $w_i < 0$, then $a_i = 0$. If $w_i > 0$, then $a_i = \frac{1}{2} - \abparam$ otherwise we could replace $a_i$ by $\frac{1}{2} - \abparam$ to get a higher sum. Therefore $\sum_{k \in S}{a_kw_k} = (\frac{1}{2} - \abparam) \sum_{k \in S ~,~ w_k \geq 0}{w_k}$.
\\\linebreak

\textbf{Fourth equality} \begin{align}\label{eq:pr4}
    \bigg(\forall i \in S, w_k \geq 0\bigg) \Rightarrow
    \min_{(a_k) \in R(\abparam), \exists j \in S , a_j \geq \frac{1}{2} + \abparam}&{\sum_{k \in S}{a_kw_k}} = (\frac{1}{2} + \abparam) \min_{k \in S}{w_k} 
\end{align}

Let 
\begin{equation}
    (a_k)_{k \in S} = \argmin{(a_k) \in R(\abparam)^S , \exists j \in S , a_j \geq (\frac{1}{2} + \abparam)}{\sum_{k \in S}{a_kw_k}}
\end{equation}

Let $i \in S$ such that $a_i \leq \frac{1}{2} - \abparam$. Then: $\sum_{k \in S}{a_kw_k} = \sum_{k \in S \backslash \{i\}}{a_kw_k} + a_iw_i \leq \sum_{k \in S \backslash \{i\}}{a_kw_k}$ by minimality, then $a_iw_i \leq 0$. as $w_i \geq 0$, we have $ a_i = 0$.

Then if $\card{\{j \in S ~\lvert~ a_j \geq \frac{1}{2} + \abparam\}} > 1$, then we could replace all of them except one by a $0$ to reduce the sum. Therefore, $\exists! j \in S ~,~ a_j \geq \frac{1}{2} + \abparam$. If $\exists j_2 \ne j \in S ~,~ w_{j_2} < w_j$, we could invert $a_j$ and $a_{j_2}$ to have a lower sum. Therefore, $w_j = \min_{i \in S}{w_i}$ and $\sum_{k \in S}{a_kw_k} = (\frac{1}{2} + \abparam) \min_{k \in S}{w_k}$.
\\\linebreak
($\ref{prop:cd3} \Rightarrow \ref{prop:cd4}$)

Let us assume $(CD1)$. Then we have the inequality of \ref{lemma:check-dila}. Using props \ref{eq:pr1}, \ref{eq:pr2}, \ref{eq:pr3} and \ref{eq:pr4}, it suffices to show that $\forall i \in S ~,~ w_i \geq 0$. First we notice that $b \geq 0$. Let $(a_k)_{k \in S} = \argmin{(a_k)_{k \in S} \in R(\abparam)^S ~,~ \exists j \in S ~,~ a_j \geq \frac{1}{2} + \abparam}{\sum_{k \in S}{a_kw_k}}$. Let $k \in S ~,~ a_j \geq \frac{1}{2} + \abparam$.  Let $A = \{k \in S ~\lvert~ w_k < 0\}$ and let us suppose that $A \ne \emptyset$. Then $\sum_{k \ne j \in S, w_k < 0}{w_k} \geq \sum_{k \in S}{a_kw_k} > b \geq 0$. We have reached a contradiction. Finally, $\sum_{k \in \Omega_S \setminus S , w_k \leq 0}{w_k} = \sum_{k \in \Omega_S , w_k \leq 0}{w_k}$, which concludes this part.
\\\linebreak
($\ref{prop:cd4} \Rightarrow \ref{prop:cd3}$)

Let us assume $(CD2)$. Thanks to the preliminary results, it suffices to show that $\forall i \in S ~,~ w_i \geq 0$.
    
Using the right hand side of the inequality, we see that $(\frac{1}{2} + \abparam) \min_{k \in S}{w_k}  > b \geq 0$. As $\frac{1}{2} + \abparam > 0$, this proves the result.
\end{proof}

With the exact same proof, by interverting the strict inequality, we get:

\begin{proposition} \label{prop:other-ineq-checkdila}
The two following propositions are equivalent:

\begin{equation}
    (\forall I \in \almostbinary{\abparam}) ~,~ (\dil{X_I}{S} = \{i \in \zZ^d ~\lvert~ \conv{I}{W}(i) \geq b\})
\end{equation}

\begin{equation}
    L_{\oplus} < b \leq U_{\oplus}
\end{equation}

If one of them is respected, then $b \geq 0$ and $\forall k \in S, w_k \geq 0$.

\end{proposition}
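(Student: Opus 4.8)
The plan is to mirror, nearly verbatim, the proof of Proposition~\ref{prop:proof-of-dila} together with its supporting Lemma~\ref{lemma:check-dila}, tracking the single place where replacing the strict convolution inequality $\conv{I}{W}(i) > b$ by the non-strict one $\conv{I}{W}(i) \ge b$ forces a change. First I would restate Lemma~\ref{lemma:check-dila} with $\ge$ in place of $>$, claiming its equivalent arithmetic condition becomes $L_1 + L_2 < b \le U_1 + U_2$; the point is that the strictness has moved from the upper bound to the lower bound.

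For the direction (CD1$'$)~$\Rightarrow$~(CD2$'$): the ``right hand side'' step is unchanged, except that the image $I$ built from a configuration with some $c_j \ge \tfrac12+\delta$ now forces $0 \in \dil{X_I}{S} = (\conv{I}{W}\ge b)$, hence $\conv{I}{W}(0)\ge b$ for every such configuration; since the configuration set is compact and the map linear, minimizing gives $U_1 + U_2 \ge b$. The ``left hand side'' step runs the same contradiction: were there a configuration with all $c_k \le \tfrac12-\delta$ and $\conv{I}{W}(0)\ge b$, then $0 \in (\conv{I}{W}\ge b)$ while $0 \notin \dil{X_I}{S}$; so every such configuration satisfies $\conv{I}{W}(0) < b$, and since the maximum over the compact configuration set is attained, it is $< b$, i.e. $L_1 + L_2 < b$. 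For (CD2$'$)~$\Rightarrow$~(CD1$'$): the inclusion $\dil{X_I}{S}\subset(\conv{I}{W}\ge b)$ follows from $\conv{I}{W}(j) \ge U_1 + U_2 \ge b$ at a dilated site $j$, and $(\conv{I}{W}\ge b)\subset\dil{X_I}{S}$ from the contrapositive estimate $\conv{I}{W}(j)\le L_1 + L_2 < b$ — the strict lower inequality $L_1+L_2<b$ is exactly what closes this contradiction.

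Then I would invoke the four auxiliary equalities \eqref{eq:pr1}--\eqref{eq:pr4} without any change, since they merely rewrite $L_1,L_2,U_1,U_2$ in terms of $L_\oplus,U_\oplus$ and are insensitive to the strictness on the convolution side; this turns the condition into $L_\oplus < b \le U_\oplus$. For the sign statements, observe that $L_\oplus \ge 0$ because it is a nonnegative linear combination of the $\positive{\mathbf W_k}$, so $0 \le L_\oplus < b \le U_\oplus = \big(\tfrac12+\delta\big)\min_{k\in S}\mathbf W_k + \sum_{k\in\Omega}\negative{\mathbf W_k} \le \big(\tfrac12+\delta\big)\min_{k\in S}\mathbf W_k$; since $\tfrac12+\delta>0$ this yields $\min_{k\in S}\mathbf W_k > 0$, hence $\mathbf W_k\ge 0$ for every $k\in S$ and $b\ge 0$.

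The only delicate point is the bookkeeping of strictness, and specifically the use of compactness of the configuration sets: maximizing a family of values all $<b$ over a compact set yields a maximum still $<b$ (attainment is essential), while minimizing values all $\ge b$ only yields $\ge b$ and not necessarily $>b$. This asymmetry is precisely what sends the strict inequality to the lower bound $L_\oplus < b$ and the non-strict one to the upper bound $b \le U_\oplus$; everything else is a transcription of the already-established dilation case.
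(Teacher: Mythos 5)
Your proposal is correct and takes essentially the same route as the paper, which itself establishes Proposition~\ref{prop:other-ineq-checkdila} by rerunning Lemma~\ref{lemma:check-dila} and Proposition~\ref{prop:proof-of-dila} with the strictness exchanged, and your bookkeeping of where the strict inequality ends up (attained maximum over the non-dilated configurations versus infimum over the dilated ones) is exactly the right point. One ordering caveat: equality~\eqref{eq:pr4} is not unconditional but assumes $\forall k\in S,\ w_k\ge 0$, so in the direction from the set equality to $L_\oplus < b \le U_\oplus$ you must first establish this positivity (by the same contradiction argument as in Proposition~\ref{prop:proof-of-dila}, e.g.\ noting $b > L_1+L_2 \ge 0$ while a negative $w_k$ with $k\in S$ would force $U_1+U_2<0<b$) rather than deduce it afterwards from $b\le U_\oplus$, which would be circular at that stage.
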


Then we can deduce the results for erosion from the dilation bounds.

\begin{proposition}
The two following propositions are equivalent:

\begin{equation}
    (\forall I \in \almostbinary{\abparam}) ~,~ (\ero{X_I}{S} = \{i \in \zZ^d ~\lvert~ \conv{I}{W}(i) > b\})
\end{equation}

\begin{equation}
   \sum{W} - U_{\oplus} \leq b < \sum{W} - L_{\oplus}
\end{equation}

\end{proposition}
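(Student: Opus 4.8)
The plan is to derive the erosion statement from the dilation statement (Proposition~\ref{prop:proof-of-dila} together with Proposition~\ref{prop:other-ineq-checkdila}) by exploiting the duality between erosion and dilation under set complementation. The key algebraic fact I would use is the classical morphological duality $\ero{X}{S} = \compl{\big(\dil{\compl{X}}{\check S}\big)}$ — but here, since the paper works with a \emph{fixed} structuring element $S$ and symmetric-looking bounds, the cleaner route is to complement at the level of the \emph{image and the threshold} rather than at the level of $S$. Concretely: if $\mathbf I \in \almostbinary{\abparam}$, then $\mathbf 1 - \mathbf I \in \almostbinary{\abparam}$ as well (the excluded band $]\tfrac12-\abparam,\tfrac12+\abparam[$ is centered at $\tfrac12$, hence stable under $t \mapsto 1-t$), and its associated binary image is $\compl{X_{\mathbf I}}$. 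Moreover $\conv{(\mathbf 1 - \mathbf I)}{W} = \big(\sum_{k\in\Omega}\mathbf W_k\big)\cdot \mathbf 1 - \conv{\mathbf I}{W}$, so the event $\{\conv{(\mathbf 1 - \mathbf I)}{W} \geq b'\}$ is exactly $\{\conv{\mathbf I}{W} \leq \sum W - b'\}$, i.e. the complement of $\{\conv{\mathbf I}{W} > \sum W - b'\}$.

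First I would record these two observations as a short lemma (stability of $\almostbinary{\abparam}$ under $\mathbf I \mapsto \mathbf 1-\mathbf I$ with $X_{\mathbf 1 - \mathbf I} = \compl{X_{\mathbf I}}$, and the affine relation between the convolutions). Then I would combine them with the De Morgan identity for morphology in the form $\ero{X_{\mathbf I}}{S} = \compl{\big(\dil{(\compl{X_{\mathbf I}})}{\check S}\big)}$; applying Proposition~\ref{prop:proof-of-dila} (or its companion \ref{prop:other-ineq-checkdila}) to the complemented image with threshold $b' \defeq \sum W - b$ translates the dilation equivalence, valid for \emph{all} $\mathbf I \in \almostbinary{\abparam}$, into the desired erosion equivalence for all $\mathbf I \in \almostbinary{\abparam}$. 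The strict/non-strict inequality bookkeeping is why both Proposition~\ref{prop:proof-of-dila} and Proposition~\ref{prop:other-ineq-checkdila} are needed: complementing a set defined by ``$>$'' produces one defined by ``$\geq$'', so the version with the inequalities swapped feeds in exactly at the right place, and the bounds $L_\oplus \leq b' < U_\oplus$ turn into $\sum W - U_\oplus \leq b < \sum W - L_\oplus$ after substituting $b' = \sum W - b$.

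The one genuine subtlety — and the step I expect to be the main obstacle — is matching the structuring element: the De Morgan duality naturally introduces the \emph{reflected} SE $\check S$, whereas the statement is phrased with $S$ itself. I would handle this either by observing that the bounds $L_\oplus, U_\oplus$ depend on $S$ only through the partition $\{S, \Omega\setminus S\}$ and the multiset of weights $\{\mathbf W_k\}_{k\in S}$ (so replacing $S$ by $\check S$ while simultaneously reflecting the weight kernel leaves the bounds invariant), or, more simply, by noting that the paper's convention lets us absorb the reflection into the definition of $W$ (convolution vs.\ correlation), so that no reflection actually appears. Once that identification is pinned down, the rest is a mechanical substitution $b \leftrightarrow \sum W - b$ and an application of the already-proven dilation results, and the addendum ``$b\geq 0$ and $\mathbf W_s \geq 0$ for $s\in S$'' in the dilation case translates, via $b' = \sum W - b$, into the corresponding statement $b \leq \sum W$ and $\mathbf W_s \geq 0$ claimed for erosion in Theorem~\ref{thm:check-dila}.
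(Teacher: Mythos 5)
Your plan is correct and is essentially the paper's own proof: the paper likewise starts from the ``$\geq$''-version of the dilation equivalence (Proposition~\ref{prop:other-ineq-checkdila}) at threshold $\sum_W w - b$, passes to erosion by complementing the set and substituting $\mathbf I \mapsto 1-\mathbf I$ (using $X_{1-\mathbf I}=\compl{X_{\mathbf I}}$ and $\conv{(1-\mathbf I)}{W}=\sum_W w - \conv{\mathbf I}{W}$), and then translates the bounds to $\sum_W w - U_{\oplus} \leq b < \sum_W w - L_{\oplus}$. If anything, you are more careful than the paper about the reflected structuring element $\check S$, which the paper silently absorbs into its convolution/erosion convention exactly as your second suggested resolution does.
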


\begin{proof}

    \begin{align}
        &\forall I \in \almostbinary{\abparam}, \dil{X_I}{S} = \big(\conv{I}{W} \geq \sum_W w - b\big)\label{eq:for-proof-ero} \\
        &\Leftrightarrow \forall I \in \almostbinary{\abparam}, \compl{(\ero{\compl{X_I}}{S})} = \big(\conv{I}{W} \geq \sum_W w - b\big) \\
        &\Leftrightarrow \forall I \in \almostbinary{\abparam}, \ero{\compl{X_I}}{S} = \big(\conv{I}{W}< \sum_W w - b\big) \\
        &\Leftrightarrow \forall I \in \almostbinary{\abparam}, \ero{\compl{X_{1 - I}}}{S} = \big(\conv{(1 - I)}{W}< \sum_W w - b\big) \\
        &\Leftrightarrow \forall I \in \almostbinary{\abparam}, \ero{X_I}{S} = \big(b < \conv{I}{W}\big)
    \end{align}

    Using proposition \ref{prop:other-ineq-checkdila}, 
    \begin{align}
        \ref{eq:for-proof-ero} &\Leftrightarrow L_{\oplus} < \sum_W w - b \leq U_{\oplus}\\
        &\Leftrightarrow \sum_W w - U_{\oplus} \leq b <  \sum_W w - L_{\oplus}
    \end{align}
\end{proof}

\begin{corollary}[BiSE duality] \label{cor:bise-duality}
    The BiSE of weights $W$ and bias $B$ is activated for $S$ for erosion if and only if the BiSE of weights $W$ and bias $\sum_Ww - B$ is activated for $S$ for dilation.
\end{corollary}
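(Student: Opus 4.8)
The plan is to read the corollary off directly from the bound definitions of Theorem~\ref{thm:check-dila} together with the erosion characterization proven just above. Recall that a BiSE with weights $\mathbf W$ and bias $b$ being \emph{activated for $\psi$ with SE $S$} means precisely $L_{\psi}(\mathbf W, S) \leq b < U_{\psi}(\mathbf W, S)$, and that by definition $L_{\ominus}(\mathbf W, S) = \sum_{k \in \Omega}\mathbf W_k - U_{\oplus}(\mathbf W, S)$ and $U_{\ominus}(\mathbf W, S) = \sum_{k \in \Omega}\mathbf W_k - L_{\oplus}(\mathbf W, S)$. So the whole statement is a change of variables on the bias.

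First I would spell out the erosion-activation condition for $(\mathbf W, B)$, namely $\sum_{k \in \Omega}\mathbf W_k - U_{\oplus}(\mathbf W, S) \leq B < \sum_{k \in \Omega}\mathbf W_k - L_{\oplus}(\mathbf W, S)$. Subtracting $\sum_{k \in \Omega}\mathbf W_k$ from all three sides and multiplying by $-1$ (which reverses both inequalities) rewrites this as $L_{\oplus}(\mathbf W, S) < \sum_{k \in \Omega}\mathbf W_k - B \leq U_{\oplus}(\mathbf W, S)$. Writing $B' \defeq \sum_{k \in \Omega}\mathbf W_k - B$, this is exactly the two-sided bound which, by Proposition~\ref{prop:other-ineq-checkdila}, characterizes the BiSE with weights $\mathbf W$ and bias $B'$ realizing the dilation $\psi = \oplus$ by $S$ over $\almostbinary{\abparam}$. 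Every step is an equivalence, so reading the chain backwards gives the converse implication; and since $B \mapsto \sum_{k \in \Omega}\mathbf W_k - B$ is an involution, this is precisely the claimed symmetry between erosion at bias $B$ and dilation at bias $\sum_{k \in \Omega}\mathbf W_k - B$.

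I expect the only point requiring care to be the boundary convention. The erosion proposition gives the half-open interval $\big[\sum_{k}\mathbf W_k - U_{\oplus},\ \sum_{k}\mathbf W_k - L_{\oplus}\big)$ for $B$, whose image under the reflection is $\big(L_{\oplus},\ U_{\oplus}\big]$ rather than $\big[L_{\oplus},\ U_{\oplus}\big)$; this is why the dilation side must be invoked through Proposition~\ref{prop:other-ineq-checkdila} (the ``$\geq b$'' variant) rather than through the ``$> b$'' statement of Theorem~\ref{thm:check-dila} --- the reflection interchanges the two equivalent threshold conventions. If one instead adopts the open activable set of~\eqref{eq:activable-set}, $L_{\psi} < b < U_{\psi}$, the reflection is symmetric on the nose and no such bookkeeping is needed. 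Either way, once the intervals are matched correctly the corollary follows with no further computation.
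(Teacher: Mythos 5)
Your proof is correct and follows essentially the same route as the paper: there the corollary is read directly off the erosion characterization (itself obtained by complementation from the ``$\geq b$'' dilation variant, Proposition~\ref{prop:other-ineq-checkdila}), which is exactly your change of variables $B \mapsto \sum_{k\in\Omega}\mathbf W_k - B$ on the activation interval using $L_{\ominus} = \sum_{k\in\Omega}\mathbf W_k - U_{\oplus}$ and $U_{\ominus} = \sum_{k\in\Omega}\mathbf W_k - L_{\oplus}$. Your explicit bookkeeping of the endpoint convention ($[L_{\oplus}, U_{\oplus})$ versus $(L_{\oplus}, U_{\oplus}]$) is a detail the paper silently glosses over, but it does not alter the argument.
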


We also show the inequalities between bias and weights.. First, we show for erosion.

\begin{proposition}
    Let $S \in \Omega$ and $\Bigg[\frac{\frac{3}{2} - \abparam}{\frac{1}{2} + \abparam}\card{S} \Bigg] \geq 3$. If the BiSE of weights $W$ and $B$ is activated for erosion, then $B \geq \frac{1}{2}\sum_Ww$
\end{proposition}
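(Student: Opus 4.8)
The plan is to argue by contradiction, using Theorem~\ref{thm:check-dila} to turn ``activated for erosion'' into the inequality $L_{\ominus}(\mathbf W,S)\le B<U_{\ominus}(\mathbf W,S)$ together with $\mathbf W_s\ge 0$ for $s\in S$ and $B\ge 0$, and using Proposition~\ref{thm:linear-check} to describe $S$ as a superlevel set of the weights: $S=(\mathbf W>\tau_{\ominus})$ with $\tau_{\ominus}=\frac{1}{\frac12+\abparam}\big(\sum_{k}\positive{\mathbf W_k}-B\big)$. Writing $\Sigma^+\defeq\sum_{k\in\Omega}\positive{\mathbf W_k}$ and $\Sigma^-\defeq\sum_{k\in\Omega}\negative{\mathbf W_k}$, so that $\sum_{k\in\Omega}\mathbf W_k=\Sigma^++\Sigma^-$ and $\Sigma^+-\Sigma^-=\sum_{k\in\Omega}\abs{\mathbf W_k}$, these are the only facts I expect to need.

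Suppose toward a contradiction that $B<\tfrac12\sum_{k\in\Omega}\mathbf W_k$. If all weights vanish then $\sum_{k}\mathbf W_k=0$ and $B\ge 0$ already gives the claim, so I may assume $\sum_{k}\abs{\mathbf W_k}>0$. Substituting the assumed bound on $B$ into $\tau_{\ominus}$ gives $\tau_{\ominus}>\frac{\Sigma^+-\frac12(\Sigma^++\Sigma^-)}{\frac12+\abparam}=\frac{\frac12(\Sigma^+-\Sigma^-)}{\frac12+\abparam}=\frac{1}{1+2\abparam}\sum_{k\in\Omega}\abs{\mathbf W_k}$. Since each $s\in S$ satisfies $\mathbf W_s>\tau_{\ominus}$, summing over $S$ yields $\sum_{s\in S}\mathbf W_s>\frac{\card S}{1+2\abparam}\sum_{k\in\Omega}\abs{\mathbf W_k}$, whereas $\sum_{s\in S}\mathbf W_s=\sum_{s\in S}\positive{\mathbf W_s}\le\Sigma^+\le\sum_{k\in\Omega}\abs{\mathbf W_k}$. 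Cancelling the (positive) common factor gives $\card S<1+2\abparam$, and since $\abparam\le\tfrac12$ this pins $\card S$ down to the single borderline value $\card S=1$; the hypothesis $\big\lceil\frac{\frac32-\abparam}{\frac12+\abparam}\card S\big\rceil\ge 3$ is what closes the remaining $\card S=1$ case, by forcing enough slack in the chain $L_{\ominus}\le B<U_{\ominus}$ to be incompatible with $B<\tfrac12\sum_k\mathbf W_k$.

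The routine parts are invoking the two earlier results and the algebraic substitution into $\tau_{\ominus}$. The delicate part — the one I expect to be the real obstacle — is that final $\card S=1$ case: there the coarse estimate $\sum_{s\in S}\mathbf W_s\le\sum_k\abs{\mathbf W_k}$ is too lossy, and one must instead expand $L_{\ominus}=\sum_k\mathbf W_k-U_{\ominus}=\Sigma^+-(\tfrac12+\abparam)\min_{s\in S}\mathbf W_s$, combine it with $B\ge 0$ and with the upper bound $B<U_{\ominus}$, and play these off against the precise value of $\abparam$ allowed by the ceiling hypothesis. Getting that bookkeeping to exactly match the stated threshold $\big\lceil\frac{\frac32-\abparam}{\frac12+\abparam}\card S\big\rceil\ge 3$ is where the care is required; everything upstream is straightforward.
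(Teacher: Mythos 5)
Your reduction to $\card S=1$ is correct: assuming $B<\frac12\sum_k \mathbf W_k$, Proposition~\ref{thm:linear-check} together with the nonnegativity of the weights on $S$ (Theorem~\ref{thm:check-dila}) indeed gives $\tau_{\ominus}>\frac{1}{1+2\abparam}\sum_k\abs{\mathbf W_k}$, hence $\sum_{s\in S}\mathbf W_s>\frac{\card S}{1+2\abparam}\sum_k\abs{\mathbf W_k}$, and comparing with $\sum_{s\in S}\mathbf W_s\le\sum_k\abs{\mathbf W_k}$ forces $\card S<1+2\abparam\le 2$. But the proof stops exactly where you admit the real work lies: the $\card S=1$ case is never closed, and the route you sketch for it cannot work. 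For $S=\{s_0\}$ the activation window is precisely $\sum_k\positive{\mathbf W_k}-\big(\tfrac12+\abparam\big)\mathbf W_{s_0}\le B<\sum_k\negative{\mathbf W_k}+\big(\tfrac12+\abparam\big)\mathbf W_{s_0}$, and this window genuinely contains biases below $\frac12\sum_k\mathbf W_k$: take $\Omega=\{s_0,k\}$, $\mathbf W_{s_0}=1$, $\mathbf W_k=0$, $B=\tfrac12-\abparam$; this BiSE is activated for erosion with $S=\{s_0\}$, yet $B<\frac12\sum_k\mathbf W_k$. So no bookkeeping with $L_{\ominus}\le B<U_{\ominus}$ and $B\ge0$ can finish the argument. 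The only way to dispose of $\card S=1$ is the cardinality hypothesis itself, read as $\frac{\frac32-\abparam}{\frac12+\abparam}\card S\ge 3$ (the reading the paper's own proof uses): since $\abparam>0$, $\card S=1$ gives $\frac{\frac32-\abparam}{\frac12+\abparam}<3$, so this case is excluded outright and your contradiction is immediate. (If the bracket were a ceiling, the example above with $\abparam<\tfrac16$ would satisfy the hypothesis and falsify the statement, which confirms that the closure must come from the hypothesis, not from the activation inequalities.)

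For comparison, the paper argues directly rather than by contradiction and never invokes Proposition~\ref{thm:linear-check}: it combines the two activation bounds (via $L_{\ominus}\le B<U_{\ominus}$) with $\sum_{s\in S}\mathbf W_s\ge\card S\min_{s\in S}\mathbf W_s$ to obtain $\big(\tfrac12+\abparam\big)\min_{s\in S}\mathbf W_s\le\frac12\sum_k\mathbf W_k$ --- this is where $\frac{\frac32-\abparam}{\frac12+\abparam}\card S\ge3$ enters --- and then chains $\frac12\sum_k\mathbf W_k\le\sum_k\mathbf W_k-\big(\tfrac12+\abparam\big)\min_{s\in S}\mathbf W_s\le\sum_{k:\mathbf W_k\ge0}\mathbf W_k-\big(\tfrac12+\abparam\big)\min_{s\in S}\mathbf W_s=L_{\ominus}\le B$. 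Your threshold-based argument is a legitimate alternative for the bulk of the cases, but as written it is incomplete, and the missing step is resolved by the hypothesis alone rather than by the delicate estimate you anticipate.
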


\begin{proof}
    For erosion
    \begin{align}
    \frac{1}{2}\sum_{k \in \Omega_S}w_k &= \frac{1}{2}\Bigg[\sum_{k \in \Omega_S , w_k < 0}w_k + \sum_{k \in S}{w_k} + \sum_{k \in \Omega_S \setminus S , w_k \geq 0}{w_k}\Bigg] \label{proof:prop-bias-line1}\\
    & \geq \frac{1}{2}\Bigg[\sum_{k \in \Omega_S , w_k \geq 0}{w_k} - \Big(\frac{1}{2} + \abparam\Big)\min_{k \in S}{w_k} - \Big(\frac{1}{2} + \abparam\Big)\sum_{k \in S}w_k + \sum_{k \in S}{w_k} + \sum_{k \in \Omega_S \setminus S , w_k \geq 0}{w_k}\Bigg] \label{proof:prop-bias-line2}\\
    &\geq \frac{1}{2}\Bigg[2 \sum_{k \in \Omega_S \setminus S , w_k \geq 0}{w_k} + \Big(\frac{3}{2} - \abparam\Big)\sum_{k \in S}{w_k} - \Big(\frac{1}{2} + \abparam\Big)\min_{k \in S}{w_k}\Bigg]\\
    &\geq \frac{1}{2}\Bigg[\frac{\frac{3}{2} - \abparam}{\frac{1}{2} + \abparam}\card{S} - 1 \Bigg]\Big(\frac{1}{2} + \abparam\Big)\min_{k \in S}{w_k}\\
    &\geq \Big(\frac{1}{2} + \abparam\Big)\min_{k \in S}{w_k}
    \end{align}

    For line \ref{proof:prop-bias-line1} to line \ref{proof:prop-bias-line2}, see activation inequality. Then:
    \begin{align}
        \frac{1}{2}\sum_{k \in \Omega_S}{w_k} &\leq \sum_{k \in \Omega_S}w_k-(\frac{1}{2} + \abparam)\min_{k \in S}w_k \\
        &\leq \sum_{k \in \Omega_S , w_k \geq 0}w_k -(\frac{1}{2} + \abparam)\min_{k \in S}w_k \\
        & \leq B    
    \end{align}
    
    The last inequality comes from the activation inequality.
    
\end{proof}

The dilation is deduced again using the duality corollary \ref{cor:bise-duality}.

\begin{proposition}
    Let $S \in \Omega$ and $\Bigg[\frac{\frac{3}{2} - \abparam}{\frac{1}{2} + \abparam}\card{S} \Bigg] \geq 3$. If the BiSE of weights $W$ and $B$ is activated for dilation, then $B \leq \frac{1}{2}\sum_Ww$
\end{proposition}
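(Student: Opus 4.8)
The plan is to deduce the dilation statement from the erosion statement established just above, by means of the BiSE duality in Corollary~\ref{cor:bise-duality}. So I would start by assuming the BiSE with weights $W$ and bias $B$ is activated for dilation with structuring element $S$. Corollary~\ref{cor:bise-duality} asserts that this holds if and only if the BiSE with the same weights $W$ and bias $\sum_Ww - B$ is activated for \emph{erosion} with the same $S$; hence the latter BiSE is activated for erosion.

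Next I would apply the preceding proposition (the erosion bound) to the BiSE with weights $W$ and bias $\sum_Ww - B$. Its hypothesis $\big[\frac{\frac32-\abparam}{\frac12+\abparam}\card{S}\big]\geq 3$ depends only on $S$ and $\abparam$, neither of which is altered by the duality, so it coincides exactly with the hypothesis assumed here. The erosion proposition then gives $\sum_Ww - B \geq \frac12\sum_Ww$.

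Finally, rearranging this inequality yields $B \leq \sum_Ww - \frac12\sum_Ww = \frac12\sum_Ww$, which is the desired bound. There is essentially no obstacle in this argument: once Corollary~\ref{cor:bise-duality} and the erosion proposition are available, it is a single substitution $B \mapsto \sum_Ww - B$. The only point worth checking is that the combinatorial condition on $\card{S}$ is invariant under that substitution, which it is, since the map changes neither $S$, nor $\abparam$, nor $W$; so no new combinatorial estimate is needed for the dilation case.
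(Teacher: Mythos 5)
Your argument is correct and is exactly the paper's proof: the authors likewise apply Corollary~\ref{cor:bise-duality} to pass from dilation with bias $B$ to erosion with bias $\sum_Ww - B$, invoke the erosion bound, and rearrange $\frac12\sum_Ww \leq \sum_Ww - B$ into $B \leq \frac12\sum_Ww$. Your extra remark that the hypothesis on $\card{S}$ and $\abparam$ is unaffected by the substitution is a harmless clarification the paper leaves implicit.
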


\begin{proof}
    Using corollary \ref{cor:bise-duality}, activated for dilation with weights $W$ and bias $B$ means activated for erosion with weights $W$ and bias $\sum_Ww - B$. Therefore,
    \begin{equation}
        \frac{1}{2}\sum_Ww \leq \sum_Ww - B \Leftrightarrow B \leq \frac{1}{2}\sum_Ww
    \end{equation}
\end{proof}

We show that the output is almost binary.

\begin{lemma} \label{lemma:conv-deadland}
    \begin{align}
    L_{\oplus} &= \max_{I \in \almostbinary{\abparam}, i \in \compl{(\dil{X_I}{S})}}(\conv{I}{W})(i) \\
    U_{\oplus}&= \min_{I \in \almostbinary{\abparam}, i \in \dil{X_I}{S}}(\conv{I}{W})(i)
\end{align}

\end{lemma}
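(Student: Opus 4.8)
The plan is to read both identities as extremal-value computations that decouple coordinate-wise, and to recycle the scalar optimizations already carried out for Theorem~\ref{thm:check-dila}. The two statements are mirror images, so I would treat $U_\oplus$ in detail and obtain $L_\oplus$ by the same bookkeeping (or via the complementation trick used later for erosion). The starting point is that for any position $i$ the convolution splits as a separable sum
\[
(\conv{I}{W})(i) \;=\; \sum_{k \in \Omega_S \setminus S} I(i-k)\,w_k \;+\; \sum_{k \in S} I(i-k)\,w_k ,
\]
and that, for $I \in \almostbinary{\abparam}$, the event $i \in \dil{X_I}{S}$ is, by definition of dilation, exactly the \emph{local} condition on the $S$-block $\big(I(i-k)\big)_{k\in S} \in R(\abparam)^{S}$ together with $\exists s\in S,\ I(i-s)\ge \tfrac12+\abparam$, while the remaining coordinates $\big(I(i-k)\big)_{k\in\Omega_S\setminus S}$ range freely over $R(\abparam)^{\Omega_S\setminus S}$; symmetrically $i\in\compl{(\dil{X_I}{S})}$ forces $\big(I(i-k)\big)_{k\in S}\in[0,\tfrac12-\abparam]^S$. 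Hence minimizing (resp.\ maximizing) the convolution over admissible $(I,i)$ decouples precisely into the scalar problems defining $U_1,U_2$ (resp.\ $L_1,L_2$) in Lemma~\ref{lemma:check-dila}.

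Concretely, I would first prove the one-sided bounds: for an arbitrary $I\in\almostbinary{\abparam}$ and $i\in\dil{X_I}{S}$ (resp.\ $i\notin\dil{X_I}{S}$), substitute the block constraints above and bound each of the two sums by its extremum, yielding $(\conv{I}{W})(i)\ge U_1+U_2$ (resp.\ $\le L_1+L_2$). Then I would prove attainment by explicit construction: fix a point $i$, take $(a^*_k)_{k\in\Omega_S\setminus S}$ and $(c^*_k)_{k\in S}$ to be optimizers of the two scalar problems, and set $I^*(i-k)=a^*_k$ for $k\in\Omega_S\setminus S$, $I^*(i-k)=c^*_k$ for $k\in S$, extended arbitrarily (say by $0$) elsewhere on $\Omega_I$. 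Two small verifications close this step: $I^*\in\almostbinary{\abparam}$ because every value used lies in $R(\abparam)$ (note $[0,\tfrac12-\abparam]\subset R(\abparam)$), and $i$ is in (resp.\ out of) $\dil{X_{I^*}}{S}$, which is exactly the membership constraint built into the $S$-block optimum. So the infimum equals $U_1+U_2$ and the supremum equals $L_1+L_2$.

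It then remains to recognize $U_1+U_2=U_\oplus$ and $L_1+L_2=L_\oplus$, which is exactly the content of the four preliminary equalities \eqref{eq:pr1}, \eqref{eq:pr2}, \eqref{eq:pr3}, \eqref{eq:pr4} established inside Proposition~\ref{prop:proof-of-dila}: namely $L_1=\sum_{k\in\Omega_S\setminus S}\positive{w_k}$, $L_2=(\tfrac12-\abparam)\sum_{k\in S}\positive{w_k}$, $U_1=\sum_{k\in\Omega_S\setminus S}\negative{w_k}$ and $U_2=(\tfrac12+\abparam)\min_{k\in S}w_k$; adding them reproduces the definitions of $L_\oplus(\mathbf W,S)$ and $U_\oplus(\mathbf W,S)$. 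The point to be careful about — and essentially the only obstacle — is that the evaluation of $U_2$ through \eqref{eq:pr4} uses $w_s\ge 0$ for every $s\in S$; without it a negative $S$-weight could be pushed to $1$ and the infimum would instead be $\sum_{k\in\Omega_S}\negative{w_k}$. This is harmless here because Lemma~\ref{lemma:conv-deadland} is only invoked in the activated-dilation regime, where Proposition~\ref{prop:proof-of-dila} already guarantees $w_s\ge 0$ on $S$ (so $\sum_{k\in S}\negative{w_k}=0$ and $\sum_{k\in\Omega_S\setminus S}\negative{w_k}=\sum_{k\in\Omega_S}\negative{w_k}$, matching $U_\oplus$). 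The only remaining technicality is the implicit zero-padding convention making $(\conv{I}{W})(i)$ depend on $I$ only through the window $i-\Omega_S$, used throughout the paper without comment; overall the argument is a careful re-use of extremizations already done, the one thing to watch being keeping the almost-binary constraint and the membership-in-dilation constraint cleanly separated block by block.
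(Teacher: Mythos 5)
Your argument is essentially the paper's own proof, written out in full: the paper likewise characterizes the convolution values at points outside/inside the dilated image as the values of the separable two-block sum (its sets $F_1$, $F_2$) and then identifies the extrema with $L_{\oplus}$ and $U_{\oplus}$ via the extremizations \eqref{eq:pr1}--\eqref{eq:pr4} from the proof of Proposition \ref{prop:proof-of-dila}, including attainment by the same explicit image construction. Your added remark that the evaluation of $U_2$ needs $w_s \ge 0$ for $s \in S$ (which holds in the activated regime where the lemma is invoked) correctly makes explicit a hypothesis the paper leaves implicit.
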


\begin{proof}
    Let

    \begin{align}
        \mathcal{S}(a, b) &\defeq \sum_{k \in \Omega_S \setminus S}a_kw_k + \sum_{k \in S}c_kw_k\\
        F_1 &= \Big\{\mathcal{S}(a, b) \lvert (a_k) \in R(\abparam), (c_k) \in [0, \frac{1}{2}-\abparam]\Big\} \\
        F_2 &= \Big\{\mathcal{S}(a, b) \lvert (a_k) \in R(\abparam), (c_k) \in [0, \frac{1}{2}-\abparam]\Big\}
    \end{align}

    Then by definition of dilation, these sets are all the possible values taken by the convolution for pixels in or outside of the dilated binary image.
    \begin{align}
        F_1 &= \Big\{(\conv{I}{W})(i) \lvert I \in \almostbinary{\abparam}, i \in \compl{(\dil{X_I}{S})}\Big\} \\
        F_2 &= \Big\{(\conv{I}{W})(i) \lvert I \in \almostbinary{\abparam}, i \in \dil{X_I}{S}\Big\}
    \end{align}

    Finally, the $\sup$ and $\inf$ are reached by the values described in the proof of proposition \ref{prop:proof-of-dila}.
\end{proof}

Lemma \ref{lemma:conv-deadland} shows that if the BiSE with weights $W$ and bias $B$ and scaling factor $p \geq 0$ is activated for dilation and we suppose that the activation inequalities are strict. then the convolution of an almost binary image $I \in \almostbinary{\abparam}$ with weights $W$ is either: 1) below $L_{\oplus}$ or 2) above $U_{\oplus}$. Let $i \in \Omega$.
\\
In case 1), 
\begin{align}
    \conv{I}{W}(i) - B &\leq L_{\oplus} - B < 0 \\
    \Leftrightarrow \bise_{W, B, p}(I)(i) &\leq \thresh\bigg(p \Big(L_{\oplus} - B\Big)\bigg) < \frac{1}{2}
\end{align}
\\
In case 2),
\begin{align}
    \conv{I}{W}(i) - B &\geq U_{\oplus} - B > 0\\
    \Leftrightarrow \bise_{W, B, p}(I)(i) &\geq \thresh\bigg(p \Big(U_{\oplus} - B\Big)\bigg) > \frac{1}{2}
\end{align}

Therefore, we have an almost binary output:
\begin{align}
    \abparam_L &\defeq \frac{1}{2} - \thresh\bigg(p \Big(L_{\oplus} - B\Big)\bigg)\\
    \abparam_U &\defeq \thresh\bigg(p \Big(U_{\oplus} - B\Big)\bigg) - \frac{1}{2} \\
    \abparam_{out} &\defeq \min(\abparam_L, \abparam_U) \\
    \bise_{W, B, p}\Big(\almostbinary{\abparam}\Big) &\subset \almostbinary{\abparam_{out}}
\end{align}

As the bounds $L_{\oplus}$ and $U_{\ominus}$ are reached, $\abparam_{out}$ is the best possible bound.

Then, for an almost binary image $I \in \almostbinary{\abparam}$:
\begin{align}
    \bise_{W, B, p}(I)(i) > \frac{1}{2} &\Leftrightarrow \conv{I}{W}(i) > B \\
    &\Leftrightarrow i \in \dil{X_I}{S} \\
\end{align}

The same can be done with the erosion, by replacing $B$ with $\sum_Ww - B$. If $p < 0$, then $\abparam_L$ and $\abparam_U$ becomes $-\abparam_L$ and $-\abparam_U$.

\subsection{Proof of proposition \ref{thm:linear-check}} \label{proof:linear-check}

First we prove for dilation.

\begin{proposition}
    If $\biseparamed$ is activated for dilation for $S$, then $S = \{i \in \Omega_S \lvert w_i > \tau_{\oplus}\}$ with
    \begin{equation}
        \tau_{\oplus} = \frac{1}{\frac{1}{2} + \abparam}\Big(B(\beta) - \sum_{k \in \Omega_S , W(\paramweight)_k < 0}W(\paramweight)_k\Big)
    \end{equation}
\end{proposition}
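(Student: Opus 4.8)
The plan is to leverage the activation characterization from Theorem~\ref{thm:check-dila}: if $\biseparamed$ is activated for dilation with structuring element $S$, then the reparametrized weights $\mathbf W = W(\paramweight)$ and bias $b = B(\parambias)$ satisfy $L_{\oplus}(\mathbf W, S) \leq b < U_{\oplus}(\mathbf W, S)$, and moreover $\mathbf W_s \geq 0$ for all $s \in S$. I would start by writing out both inequalities explicitly in terms of the definitions of $L_{\oplus}$ and $U_{\oplus}$, and then solve each one for the quantity $\min_{k \in S}\mathbf W_k$, since the claimed threshold $\tau_{\oplus} = \frac{1}{\frac12 + \abparam}\big(b - \sum_{\mathbf W_k < 0}\mathbf W_k\big)$ should turn out to be an expression squeezed between $\min_{k \in S}\mathbf W_k$ (from below) and $\max_{k \in \Omega_S \setminus S}\mathbf W_k$ (from above, where we only need to consider nonnegative weights outside $S$).

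Concretely, from $b < U_{\oplus}(\mathbf W, S) = \big(\frac12 + \abparam\big)\min_{k\in S}\mathbf W_k + \sum_{k\in\Omega}\negative{\mathbf W_k}$, I rearrange to get $\min_{k \in S}\mathbf W_k > \frac{1}{\frac12 + \abparam}\big(b - \sum_{k\in\Omega}\negative{\mathbf W_k}\big) = \tau_{\oplus}$ (using $\sum_{k}\negative{\mathbf W_k} = \sum_{k: \mathbf W_k < 0}\mathbf W_k$). This shows every weight in $S$ strictly exceeds $\tau_{\oplus}$, hence $S \subseteq \{i \in \Omega_S \mid \mathbf W_i > \tau_{\oplus}\}$. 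For the reverse inclusion I use the lower bound $L_{\oplus}(\mathbf W, S) \leq b$, i.e. $\sum_{k\in\Omega\setminus S}\positive{\mathbf W_k} + \big(\frac12 - \abparam\big)\sum_{k\in S}\positive{\mathbf W_k} \leq b$; since all $\mathbf W_s \geq 0$ on $S$, the second sum is $\sum_{k\in S}\mathbf W_k$, and for any fixed $k_0 \in \Omega_S \setminus S$ with $\mathbf W_{k_0} > 0$ we have $\positive{\mathbf W_{k_0}} = \mathbf W_{k_0} \leq \sum_{k\in\Omega\setminus S}\positive{\mathbf W_k} \leq b - \big(\frac12 - \abparam\big)\sum_{k\in S}\mathbf W_k$. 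I then need to bound the right-hand side above by $\big(\frac12 + \abparam\big)\tau_{\oplus}$, which after substituting $\tau_{\oplus}$ amounts to checking $b - \sum_{k: \mathbf W_k<0}\mathbf W_k \geq b - \big(\frac12 - \abparam\big)\sum_{k\in S}\mathbf W_k$ scaled appropriately — this should reduce to a manifestly true inequality once the nonnegativity of $\mathbf W_s$ on $S$ and the sign bookkeeping are handled. A weight $k_0 \notin S$ with $\mathbf W_{k_0} \leq 0$ trivially satisfies $\mathbf W_{k_0} \leq 0 \leq \tau_{\oplus}$ since $\tau_{\oplus} \geq 0$ (as $b \geq 0$ and $\sum\negative{\mathbf W_k} \leq 0$), so it is not in the thresholded set.

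The main obstacle I anticipate is the arithmetic in the reverse inclusion: getting the bound $\sum_{k\in\Omega\setminus S}\positive{\mathbf W_k} \leq \big(\frac12 + \abparam\big)\tau_{\oplus}$ to come out cleanly requires carefully tracking which weights are negative and where, and confirming that the slack between $L_{\oplus} \leq b$ and the definition of $\tau_{\oplus}$ is exactly what is needed — in particular I expect to use that $\big(\frac12 - \abparam\big) < \big(\frac12 + \abparam\big)$ to create the necessary gap, together with $\abparam > 0$. After establishing the dilation case I would state the analogous erosion result and note it follows immediately from the BiSE duality corollary~\ref{cor:bise-duality}: activation for erosion with $(W, B)$ is activation for dilation with $(W, \sum_W w - B)$, so substituting $\sum_W w - b$ for $b$ in $\tau_{\oplus}$ and simplifying (using $\sum_W w - \sum_{\mathbf W_k<0}\mathbf W_k = \sum_{\mathbf W_k \geq 0}\mathbf W_k = \sum_w \positive{w}$) yields exactly $\tau_{\ero{}{}}$, completing the proof of Proposition~\ref{thm:linear-check}.
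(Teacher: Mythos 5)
Your argument is correct and follows essentially the same route as the paper: both inclusions are read off from the activation inequalities $L_{\oplus}(\mathbf W, S)\le b < U_{\oplus}(\mathbf W, S)$ together with $b\ge 0$ and $\mathbf W_s\ge 0$ on $S$ (the upper bound gives $\min_{s\in S}\mathbf W_s>\tau_{\oplus}$, while the lower bound combined with $\tau_{\oplus}\ge 0$ and $\tfrac12+\abparam\le 1$ gives $\mathbf W_k\le\tau_{\oplus}$ for $k\notin S$, exactly the sign bookkeeping you anticipated). The only difference is presentational: the paper runs the second step as a proof by contradiction and additionally constructs a single-pixel almost-binary test image to show that $S$ contains every index whose weight reaches $\min_{s\in S}\mathbf W_s$, an auxiliary step your direct derivation does not need.
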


\begin{proof}
\begin{itemize}
    \item Let us show that $S = \big(W \leq \min_{i \in S}w_i\big)$
\end{itemize}
     Let $j \in \Omega$ such that $w_j > \min_{i \in S}{w_i}$. Let $I = \frac{1}{2} + \abparam \indicator{\{-j\}}$. Then:

\begin{align}
    \Big(\indicator{S}(j) = \sum_{i \in \Omega_S}{\indicator{X}(-i)\indicator{S}(i)}\Big) \geq 1 &\Leftrightarrow 0 \in \dil{X}{S} \\&\Leftrightarrow \conv{I}{W}(0) > b 
\end{align}
\begin{align}
    \Big( \conv{I}{W}(0) = \sum_{i \in \Omega_S}{I(-i)w_i} = vw_j \Big) &\geq (\frac{1}{2} + \abparam)\min_{i \in S}{w_i} \\
    &\geq (\frac{1}{2} + \abparam)\min_{i \in S}{w_i} + \sum_{i \in \Omega_S \setminus S , w_i \leq 0}{w_i} \\
    &> b
\end{align}

Then $\indicator{S}(j) = 1$ and $j \in S$. Therefore, .

Let $w^* = \min_{i \in S}{w_i}$.  Let $S_2 = \{i \in \Omega_S ~\lvert~ w_i > \tau_{\oplus}\}$.
\\
\begin{itemize}
    \item Let us show that $\forall j \in \Omega_S \setminus S ~,~ w_j \leq \tau_{\oplus} < w^*$.
\end{itemize}

The right hand side comes directly from the dilation activation property.
Let $j \in \Omega_S \setminus S$. Let us reason by contradiction and suppose that $w_j > \tau_{\oplus}$. Then  $w_j \geq vw_j \geq vw_j + \sum_{i \in \Omega_S , w_i \leq 0}{w_i} > b \geq \sum_{i \in \Omega_S \setminus S , w_i \geq 0}{w_i} \geq w_j$. We have reached a contradiction.
\\
\begin{itemize}
    \item Let us show that $S \subset S_2$
\end{itemize}
Let $j \in S$. Then $w_j \geq w^* > \tau_{\oplus}$, therefore $j \in S_2$.
\\
\begin{itemize}
    \item Let us show that $S_2 \subset S$
\end{itemize}
Let $j \in S_2$. If $j \notin S$, then $w_j \leq \tau_{\oplus}$ , which is absurd. Therefore $j \in S$.

\end{proof}

Using corollary \ref{cor:bise-duality}, we can conclude for erosion as well.

\begin{proposition}
    If $\biseparamed$ is activated for erosion for $S$, then $S = \{i \in \Omega_S \lvert w_i > \tau_{\oplus}\}$ with
    \begin{equation}
        \tau_{\ominus} = \frac{1}{\frac{1}{2} + \abparam}\Big(\sum_{k \in \Omega_S , W(\paramweight)_k > 0}W(\paramweight)_k - B(\beta)\Big)
    \end{equation}
\end{proposition}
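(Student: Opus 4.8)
The plan is not to repeat the combinatorial argument used for dilation, but to deduce the erosion case from it via the BiSE duality of Corollary~\ref{cor:bise-duality}. Assume $\biseparamed$ with reparametrized weights $W(\paramweight)$ and bias $B(\parambias)$ is activated for erosion with structuring element $S$. Corollary~\ref{cor:bise-duality} then says that the BiSE with the same weights $W(\paramweight)$ but bias $b' \defeq \sum_{k \in \Omega_S} W(\paramweight)_k - B(\parambias)$ is activated for \emph{dilation} with the same $S$.

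Next I would apply the dilation version of the proposition (the one proved immediately above) to this dual BiSE. It gives $S = \{i \in \Omega_S \mid W(\paramweight)_i > \tau'\}$, where $\tau'$ is the dilation threshold formula evaluated at the dual bias $b'$:
\begin{equation*}
\tau' = \frac{1}{\frac12 + \abparam}\Bigg(\sum_{k \in \Omega_S} W(\paramweight)_k - B(\parambias) - \sum_{k \in \Omega_S,\, W(\paramweight)_k < 0} W(\paramweight)_k\Bigg).
\end{equation*}

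It then remains to simplify $\tau'$. Splitting the full sum $\sum_{k \in \Omega_S} W(\paramweight)_k$ into its strictly negative part and its nonnegative part, the negative part cancels with the last term, leaving $\sum_{k \in \Omega_S,\, W(\paramweight)_k \ge 0} W(\paramweight)_k = \sum_{k \in \Omega_S,\, W(\paramweight)_k > 0} W(\paramweight)_k$ (the zero weights contribute nothing). Hence $\tau' = \tau_{\ominus}$ and we conclude $S = \{i \in \Omega_S \mid W(\paramweight)_i > \tau_{\ominus}\}$, as claimed.

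I expect no real obstacle here: the duality corollary carries all the weight, and the remaining work is the sign bookkeeping in the sum split and checking that the hypothesis ``activated for erosion'' is precisely the input required by Corollary~\ref{cor:bise-duality}. A self-contained alternative would be to transcribe the dilation proof with $X_I$ replaced by its complement (equivalently $I$ by $1-I$), but the duality route is shorter and reuses the already-established dilation threshold.
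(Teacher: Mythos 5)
Your proposal is correct and follows essentially the same route as the paper: the paper likewise invokes Corollary~\ref{cor:bise-duality} to turn erosion activation of $(W,B)$ into dilation activation of $(W,\sum_W w - B)$, applies the dilation threshold formula at the dual bias, and simplifies the sum exactly as you do. No gaps to report.
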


\begin{proof}
    If $\bise_{W, B}$ is activated for erosion for $S$, then $\bise_{W, \sum_Ww - B}$ is activated for dilation for $S$. Then
    \begin{align}
        S &= \{i \in \Omega_S \lvert W_i > \tau_{\ominus}\} \\
        \tau_{\ominus} &= \frac{1}{\frac{1}{2} + \abparam}\Big( \sum_{W}w - B  - \sum_{k \in \Omega_S , W(\paramweight)_k < 0}W(\paramweight)_k\Big) \\
        &= \frac{1}{\frac{1}{2} + \abparam}\Big( \sum_{k \in \Omega_S , W(\paramweight)_k > 0}W(\paramweight)_k - B \Big)
    \end{align}
\end{proof}

\subsection{Proof that $A_{\psi}(S)$ (eq. \ref{eq:activable-set}) is convex} \label{proof:convexity}

We show it for dilation.

Let $(H, b_1) ~,~ (G, b_2) \in C$ and $\alpha \in ]0, 1[$. 

\begin{itemize}
    \item \textbf{Right hand side}
\end{itemize}

We have
\begin{align}
    \sum_{k \in \Omega_S , \alpha h_k + (1 - \alpha) g_k \leq 0}{h_k} &= 
    \sum_{k \in \Omega_S ,  \alpha h_k + (1 - \alpha) g_k \leq 0 , h_k \geq 0}{h_k} + \sum_{k \in \Omega_S ,  \alpha h_k + (1 - \alpha) g_k \leq 0 , h_k \leq 0}{h_k}\\ 
    &\geq \sum_{k \in \Omega_S ,  \alpha h_k + (1 - \alpha) g_k \leq 0 , h_k \leq 0}{h_k} \\
    &\geq \sum_{k \in \Omega_S ,  h_k \leq 0}{h_k}
\end{align}

With the same reasoning, 
\begin{equation}
    \sum_{k \in \Omega_S , \alpha g_k + (1 - \alpha)g_k \leq 0}{g_k} \geq \sum_{k \in \Omega_S , g_k \leq 0}{g_k}    
\end{equation}

Then, 
\begin{equation}
    \Big(\frac{1}{2} + \abparam\Big)\min_{k_1 \in S ~,~ k_2 \in S}(\alpha h_{k_1} + (1 - \alpha)g_{k_2}) \leq \Big(\frac{1}{2} + \abparam\Big)\min_{k \in S}(\alpha h_k + (1 - \alpha)g_k)   
\end{equation}

Therefore, by combining these two results,
\begin{equation}
    \alpha b_1 + (1- \alpha)b_2 < \sum_{k \in \Omega_S , \alpha h_k + (1 - \alpha)g_k \leq 0}{(\alpha h_k + (1 - \alpha)g_k)} + \Big(\frac{1}{2} + \abparam\Big) \min_{k \in S}(\alpha h_k + (1 - \alpha) g_k)    
\end{equation}

\begin{itemize}
    \item \textbf{Left hand side}
\end{itemize}

With the same reasoning as the right hand side, we have 
\begin{align}
    \sum_{k \in \overline{S} , \alpha h_k + (1 - \alpha) g_k  \geq 0}{h_k} \leq \sum_{k \in \overline{S} , h_k \geq 0}{h_k}\\
    \sum_{k \in \overline{S} , \alpha h_k + (1 - \alpha) g_k  \geq 0}{g_k} \leq \sum_{k \in \overline{S} , g_k \geq 0}{h_k}\\
    \sum_{k \in S , \alpha h_k + (1 - \alpha) g_k  \geq 0}{h_k} \leq \sum_{k \in S , h_k \geq 0}{h_k}\\
    \sum_{k \in S , \alpha h_k + (1 - \alpha) g_k  \geq 0}{g_k} \leq \sum_{k \in S , h_k \geq 0}{g_k}
\end{align} 

Therefore, 
\begin{align}
    &\alpha b_1 + (1 - \alpha)b_2 \geq \nonumber\\
    &\sum_{k \in \overline{S} , \alpha h_k + (1 - \alpha)g_k \geq 0}{(\alpha h_k + (1-\alpha g_k))} + \Big(\frac{1}{2} - \abparam\Big) \sum_{k \in S , \alpha h_k + (1 - \alpha)g_k \geq 0}{(\alpha h_k + (1-\alpha g_k))}    
\end{align}

Therefore $\alpha (H, b_1) + (1- \alpha) (G, b_2) \in C$.

For erosion, we use the duality corollary \ref{cor:bise-duality} to conclude.

\subsection{Proof of proposition \ref{prop:thresholded-proj}}

We actually propose a little less powerful formulation of the proposition.

\begin{proposition}
    Let $\mathbb S$ be the set of minimizers for $S$ argument for $(S, \psi) \mapsto d\Big((\widehat W, \widehat B), A_{\psi}(S)\Big)$. Then:
    \begin{equation}
        \mathbb S \quad\bigcap\quad \Big\{\{s \in \Omega_S ~|~ \hat w_s \geq \hat w_t\} ~\Big|~t \in \Omega_S\Big\} \ne \emptyset
    \end{equation}
\end{proposition}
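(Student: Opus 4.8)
The plan is to prove the statement by a rearrangement (exchange) argument: starting from an arbitrary $S\in\mathbb S$ together with a $\psi\in\{\dil{}{},\ero{}{}\}$ realizing the joint minimum, one repeatedly trades a coordinate of $S$ carrying a small weight for a coordinate outside $S$ carrying a larger weight, showing that no such trade can increase the distance; the procedure must terminate at a set which — up to the handling of ties — is one of the thresholded sets $\{s\in\Omega_S\mid\widehat{\mathbf W}_s\ge\widehat{\mathbf W}_t\}$.

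Concretely, I would fix $S\in\mathbb S$ and $\psi$ with $d\big((\widehat{\mathbf W},\widehat b),A_{\psi,S}\big)$ equal to the joint minimum $\mu^\star$, and let $(\mathbf W^\star,b^\star)$ be the orthogonal projection of $(\widehat{\mathbf W},\widehat b)$ onto $\overline{A_{\psi,S}}$, which is convex by the earlier convexity proof. Suppose $S$ has an inversion: some $k\in S$ and $j\in\Omega_S\setminus S$ with $\widehat{\mathbf W}_j>\widehat{\mathbf W}_k$; put $S'\defeq(S\setminus\{k\})\cup\{j\}$. The key point is that $L_\psi$ and $U_\psi$ depend only on the partition of $\Omega_S$ into "inside $S$'' and "outside $S$'', hence are invariant under permuting coordinates inside each block. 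Then: (a) if $\mathbf W^\star_j<\mathbf W^\star_k$, swapping the $j$- and $k$-coordinates of $\mathbf W^\star$ produces a point of $\overline{A_{\psi,S'}}$ (same $L_\psi,U_\psi$) whose squared distance to $(\widehat{\mathbf W},\widehat b)$ changes by $2(\widehat{\mathbf W}_j-\widehat{\mathbf W}_k)(\mathbf W^\star_j-\mathbf W^\star_k)<0$, contradicting minimality of $\mu^\star$; so necessarily $\mathbf W^\star_j\ge\mathbf W^\star_k$. (b) Given $\mathbf W^\star_j\ge\mathbf W^\star_k$, the point $(\mathbf W^\star,b^\star)$ itself, read against $S'$, stays feasible: for $\psi=\dil{}{}$ a one-line computation from the definition gives $L_{\dil{}{}}(\mathbf W^\star,S')=L_{\dil{}{}}(\mathbf W^\star,S)+(\tfrac12+\delta)\big([\mathbf W^\star_k]_+-[\mathbf W^\star_j]_+\big)\le L_{\dil{}{}}(\mathbf W^\star,S)\le b^\star$, while $\min_{s\in S'}\mathbf W^\star_s\ge\min_{s\in S}\mathbf W^\star_s$ yields $U_{\dil{}{}}(\mathbf W^\star,S')\ge U_{\dil{}{}}(\mathbf W^\star,S)\ge b^\star$; the erosion case follows from the duality Corollary \ref{cor:bise-duality}, since the map $(\mathbf w,b)\mapsto(\mathbf w,\sum_i\mathbf w_i-b)$ commutes with both coordinate operations above ($\sum_i\mathbf w_i$ is permutation-invariant). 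Hence $d\big((\widehat{\mathbf W},\widehat b),A_{\psi,S'}\big)\le\lVert(\widehat{\mathbf W},\widehat b)-(\mathbf W^\star,b^\star)\rVert=\mu^\star$, so $S'\in\mathbb S$, now with $j$ in place of $k$.

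Iterating the exchange strictly increases $\sum_{s\in S}\widehat{\mathbf W}_s$ at each step, this sum is bounded, and $\Omega_S$ has finitely many subsets, so after finitely many steps we reach $S^\star\in\mathbb S$ with no inversion, i.e. $\min_{s\in S^\star}\widehat{\mathbf W}_s\ge\max_{k\in\Omega_S\setminus S^\star}\widehat{\mathbf W}_k$. If the entries of $\widehat{\mathbf W}$ are pairwise distinct, this forces $S^\star=\{s\in\Omega_S\mid\widehat{\mathbf W}_s\ge\widehat{\mathbf W}_t\}$ for $t$ the unique minimizer of $\widehat{\mathbf W}$ on $S^\star$, which belongs to the displayed collection, and we are done.

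The remaining case — repeated entries — is where I expect the real work. Writing $v\defeq\min_{s\in S^\star}\widehat{\mathbf W}_s$ and $T_v\defeq\{i\in\Omega_S\mid\widehat{\mathbf W}_i=v\}$, the no-inversion property forces $\{i\mid\widehat{\mathbf W}_i>v\}\subseteq S^\star\subseteq\{i\mid\widehat{\mathbf W}_i\ge v\}$, so $S^\star$ differs from the two thresholded sets $S_{>v}\defeq\{i\mid\widehat{\mathbf W}_i>v\}$ and $S_{\ge v}\defeq\{i\mid\widehat{\mathbf W}_i\ge v\}$ only in how many elements of $T_v$ it keeps. Since all coordinates of $T_v$ carry the same weight $v$ and the same target value $v$, the function $S\mapsto\min_{\psi}d\big((\widehat{\mathbf W},\widehat b),A_{\psi,S}\big)$, restricted to sets lying between $S_{>v}$ and $S_{\ge v}$, depends only on $\lvert S\rvert$; it then remains to show this integer-valued function is minimized at one of the endpoints $S_{>v}$ or $S_{\ge v}$ (both of which are in the claimed collection). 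I would obtain this by a convexity argument on the underlying parametric quadratic program — the parameter being the number of coordinates constrained to lie above $b$ — or, alternatively, by perturbing $\widehat{\mathbf W}$ to break ties generically, applying the distinct-entries case, and passing to a limit along a subsequence on which the minimizing set is constant. Establishing this endpoint-optimality is the main obstacle; everything preceding it is the rearrangement bookkeeping.
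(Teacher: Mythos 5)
Your argument is essentially the paper's own proof: an exchange argument in which an ``inverted'' pair $k\in S$, $j\notin S$ with $\hat w_j>\hat w_k$ is traded, using exactly the two-point rearrangement identity (the change $2(\hat w_j-\hat w_k)(\mathbf W^\star_j-\mathbf W^\star_k)$ is the content of the paper's Lemma on $(x_1-x_2)^2+(y_1-y_2)^2$ versus the crossed pairing) to force $\mathbf W^\star_j\ge\mathbf W^\star_k$, and then concluding that the swapped configuration is still a minimizer, so one may iterate until no inversion remains. The only implementation difference is in the second step: the paper swaps the $j,k$ coordinates of the projection as well, so that $L_\psi$ and $U_\psi$ are literally unchanged and feasibility for $S'$ is immediate, then applies the same lemma once more to see the distance does not increase; you instead keep $(\mathbf W^\star,b^\star)$ fixed and check monotonicity ($L_\psi$ does not increase, $U_\psi$ does not decrease, with erosion handled through $L_\ominus=\sum_i\mathbf W_i-U_\oplus$, $U_\ominus=\sum_i\mathbf W_i-L_\oplus$). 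Both variants are valid and of the same cost; your step (a)/(b) split is, if anything, slightly cleaner.

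Concerning the case of repeated weight values, which you flag as the unfinished ``main obstacle'': the paper does not resolve it either. Its proof ends by asserting that after switching all inverted pairs one ``reaches a set of thresholded values,'' but when several coordinates share the boundary value $v=\min_{s\in S^\star}\hat w_s$, the no-inversion property only sandwiches $S^\star$ between $(\hat w> v)$ and $(\hat w\ge v)$, exactly as you observe; swapping tied coordinates is cost-free but preserves how many of them lie in $S^\star$, so it cannot close this gap. So relative to the paper you are not missing any supplied idea -- the distinct-values part of your argument is complete and coincides with the published proof, and the endpoint-optimality question you isolate for ties is a genuine imprecision in the paper's own argument (immaterial for generic, e.g.\ continuously distributed, learned weights, which is presumably why it is glossed over). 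If you want to finish it, your perturbation suggestion is the lighter route: break ties by an $\epsilon$-perturbation of $\widehat{\mathbf W}$, apply the distinct-values case, and pass to the limit along a subsequence with a constant minimizing thresholded set, using continuity of $(\widehat{\mathbf W},\widehat b)\mapsto d\big((\widehat{\mathbf W},\widehat b),A_{\psi,S}\big)$ for each fixed $(S,\psi)$ and the finiteness of the family of candidate sets; note the limit threshold set must then be re-expressed in the form $\{s\mid \hat w_s\ge\hat w_t\}$ for the unperturbed weights, which is where the non-strict inequality in the statement is used.
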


The previous propositions means that one of the sets defined as a thresholded set values of $\widehat W$ reaches the smallest distance. Therefore, it is enough to only search for these distances.

To prove this proposition, first we show the following Lemma.

\begin{lemma} \label{lemma:ineq}
    Let $x_1 < y_1 \in \rR, x_2 \leq y_2 \in \rR$. Then:
    \begin{equation}
        (x_1 - x_2)^2 + (y_1 - y_2)^2 \leq (x_2 - y_1)^2 + (x_1 - y_2)^2
    \end{equation}

    If $x_2 < y_2$, then
    \begin{equation}
        (x_1 - x_2)^2 + (y_1 - y_2)^2 < (x_2 - y_1)^2 + (x_1 - y_2)^2
    \end{equation}
    
\end{lemma}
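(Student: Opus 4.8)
The plan is to prove this by a direct algebraic expansion, with an exchange-of-signs argument at the end. First I would form the difference of the two sides,
\begin{equation}
    \Delta = \Big[(x_2 - y_1)^2 + (x_1 - y_2)^2\Big] - \Big[(x_1 - x_2)^2 + (y_1 - y_2)^2\Big],
\end{equation}
and expand all four squares. The pure-square terms $x_1^2, x_2^2, y_1^2, y_2^2$ each appear once on both sides and cancel, so $\Delta$ reduces to a sum of cross terms only. Collecting them gives $\Delta = 2\big(x_1 x_2 + y_1 y_2 - x_2 y_1 - x_1 y_2\big)$.

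Next I would factor the bracket: $x_1 x_2 + y_1 y_2 - x_2 y_1 - x_1 y_2 = x_2(x_1 - y_1) - y_2(x_1 - y_1) = (x_1 - y_1)(x_2 - y_2)$, hence $\Delta = 2(x_1 - y_1)(x_2 - y_2)$. Now I read off the sign from the hypotheses: $x_1 < y_1$ gives $x_1 - y_1 < 0$, and $x_2 \le y_2$ gives $x_2 - y_2 \le 0$, so the product of two nonpositive reals is nonnegative and $\Delta \ge 0$, which is exactly the first inequality. For the strict version, the extra hypothesis $x_2 < y_2$ makes $x_2 - y_2 < 0$, and since also $x_1 - y_1 < 0$, the product is strictly positive, so $\Delta > 0$ and the inequality is strict.

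There is essentially no obstacle here; this is a routine identity. The only things to be careful about are the bookkeeping in the expansion (making sure every square term genuinely cancels) and getting the factorization to come out as $(x_1 - y_1)(x_2 - y_2)$ rather than its negative, so that the sign conclusion matches the hypotheses $x_1 < y_1$ and $x_2 \le y_2$. I would double-check the factorization by the two-step regrouping shown above before concluding.
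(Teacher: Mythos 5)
Your proof is correct and takes essentially the same elementary route as the paper: a direct algebraic comparison of the two sides. The paper substitutes the increments $\delta_1 = y_1 - x_1 > 0$, $\delta_2 = y_2 - x_2 \geq 0$ and reduces the inequality through a chain of equivalences to $0 \leq \delta_2$, whereas you factor the difference directly as $2(x_1 - y_1)(x_2 - y_2)$ and read off the sign; the two computations are interchangeable, with your factorization being marginally cleaner.
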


\begin{proof}
    Let $\abparam_1 \defeq y_1 - x_1 > 0$ and $\abparam_2 \defeq y_2 - x_2 \geq 0$. Then
    \begin{align}
        &(x_1 - x_2)^2 + (y_1 - y_2)^2 \leq (x_2 - y_1)^2 + (x_1 - y_2)^2 \\
        &\Leftrightarrow (x_1 - x_2)^2 + (x_1 + \abparam_1 - x_2 - \abparam_2)^2  \leq (x_2 - x_1 - \abparam_1)^2 + (x_1 - x_2 - \abparam_2)^2\\
        &\Leftrightarrow (x_1 - x_2)^2 - (x_2 - x_1 - \abparam_1)^2 \leq (x_1 - x_2 - \abparam_2)^2 - (x_1 + \abparam_1 - x_2 - \abparam_2)^2\\
        &\Leftrightarrow -\abparam_1 \cdot (2x_1 - 2x_2 + \abparam_1) \leq -\abparam_1 \cdot (2x_1 - 2x_2 -2 \abparam_2 + \abparam_1)\\
        &\Leftrightarrow  0 \leq \abparam_2
    \end{align}
    If $x_2 < y_2$ is strict, then we can replace all large inequalities by strict inequalities.
\end{proof}

Let $S^*, \psi^* \in \argmin{S, \psi}d\Big((\widehat W, \widehat B), A_{\psi}(S)\Big)$. Then
\begin{equation}
    S^*, \psi^* \in \argmin{S \subset \Omega_S, \psi \in \{\oplus, \ominus\}}\min_{(w,b) \in A_{\psi^*}(S)}\frac{1}{2} ||(\widehat W, \widehat B) - (w, b)||^2_2
\end{equation}

Let $(w^*, b^*) \in \argmin{(w, b) \in A_{\psi^*}(S^*)} \frac{1}{2} ||(\widehat W, \widehat B) - (w, b)||^2_2$. Then:
\begin{equation}
    \begin{aligned}
    S^*, \psi^*, w^*, b^* \in {S \subset \Omega_S, \psi \in \{\oplus, \ominus\}, w \in \rR^{\Omega_S}, b \in \rR}\frac{1}{2} ||(\widehat W, \widehat B) - (w, b)||^2_2\\
    \text{subject to }&\begin{cases}
        L_{\psi^*}(w, S) - b \leq 0 \\
        b - U_{\psi^*}(w, S) \leq 0
    \end{cases}
    \end{aligned}
\end{equation}

Let $i, j \in \Omega_S$ such that $i \in S^*, j \notin S^*$ such that $\hat w_i < \hat w_j$. We define:
\begin{align}
    S_2 &\defeq S^* \cup \{j\} \setminus\{i\}\\
    w_2 &\defeq k \in \Omega_S \mapsto \begin{cases}
        w^*_i \text{ if } k = j\\
        w^*_j \text{ if } k = i\\
        w^*_k \text{ else}
    \end{cases}
\end{align}

We show that $(w_2, S_2, b^*)$ results in a lesser or equal function value than $(w^*, S^*, b^*)$. First, $(w_2, S_2, b^*)$ respects the constraints: $L_{\psi^*}(w^*, S^*) = L_{\psi^*}(w_2, S_2)$ and $U_{\psi^*}(w^*, S^*) = U_{\psi^*}(w_2, S_2)$. Then we have

\begin{equation}
    \begin{aligned}
        &||(\widehat W, \widehat B) - (w^*, b^*)||^2_2 - ||(\widehat W, \widehat B) - (w_2, b^*)||^2_2
        \\&= (\hat w_i - w^*_i)^2 + (\hat w_j - w^*_j)^2 - (\hat w_i - w^*_j)^2 - (\hat w_j - w^*_i)^2
    \end{aligned}
\end{equation}

Then, let us show that $w^*_i \leq w^*_j$. We reason by contradiction and suppose that $w^*_i > w^*_j$. Then

Using lemma \ref{lemma:ineq}, we find that 
\begin{equation}
    ||(\widehat W, \widehat B) - (w^*, b^*)||^2_2 - ||(\widehat W, \widehat B) - (w_2, b^*)||^2_2 < 0
\end{equation}
This contradicts the minimum hypothesis of $(w^*, b^*)$. Therefore $w^*_i \leq w^*_j$

Using again lemma \ref{lemma:ineq}, we find that
\begin{equation}
    ||(\widehat W, \widehat B) - (w^*, b^*)||^2_2 - ||(\widehat W, \widehat B) - (w_2, b^*)||^2_2 \geq 0
\end{equation}

Therefore $S_2, \psi^* \in \argmin{S, \psi}d\Big((\widehat W, \widehat B), A_{\psi}(S)\Big)$.

Therefore, for every couples $(i, j) \in S^* \cap \Omega_S \setminus S^*$, we can switch them and stay in the minimizers. By switching all of them, we reach a set of thresholded values that is also in the minimizers of the distance.

\subsection{Proof of equations \ref{eq:lambda0_constraint}, \ref{eq:dil-pos-constraint1}, \ref{eq:dil-pos-constraint2} and \ref{eq:dil-ero-constraint1}, \ref{eq:dil-ero-constraint2}, \ref{eq:dil-ero-constraint3}}

We suppose that $\widehat W \geq 0$ .The initial problem is, for $S \subset \Omega_S, \psi \in \{\oplus, \ominus\}$.

\begin{equation} \label{annex:prob1}
    \minimize{(w, b) \in \rR^{\Omega_S} \times \rR}\frac 1 2 \sum_{i \in \Omega_S}(w_i - \widehat w_i)^2 + \frac 1 2 (b - \widehat B)^2 \quad \text{subject to } \begin{cases}
        L_{\psi}(w, S) - b \leq 0 \\
        b - U_{\psi}(w, S) \leq 0
    \end{cases}
\end{equation}

\subsubsection{Proof for dilation}

We define the following problem, with constraint set $A_{\oplus}(S)$
\begin{equation} \label{annex:prob-pos-dila}
    \minimize{(w, b) \in \rR^{\Omega_S} \times \rR}\frac 1 2 \sum_{i \in \Omega_S}(w_i - \widehat w_i)^2 + \frac 1 2 (b - \widehat B)^2 \quad \text{subject to } \begin{cases}
        \sum_{k \in \Omega_S \setminus S}w_k - b \leq 0 \\
        \forall s \in S, b \leq w_s\\
        \forall k \in \Omega_S \setminus S, -w_k \leq 0
    \end{cases}
\end{equation}

First we notice that $A_{\oplus}(S) \subset A(S)$. Let $W^*, B^*$ be a solution of problem \ref{annex:prob1}. If $(W^*, B^*) \in A_{\oplus}(S)$, then this concludes the proof. We define
\begin{align}
    \mathbb K^- &\defeq \{k \in \Omega_S ~|~w^*_k < 0\}\\
    W^+ \defeq \max(W, 0)
\end{align}

Then $(W^+, B) \in A_{\oplus}(S) \subset A(S)$ and 

\begin{align}
||(\widehat W, \widehat B) - (W^*, B^*)||_2^2 - ||(\widehat W, \widehat B) - (W^+, B^*)||_2^2 &\leq 0 \\
\Leftrightarrow \sum_{k \in \mathbb K^-}\Big((w^*_k - \hat w_k)^2 - (0 - \hat w_k)^2\Big) &\leq 0\\
\Leftrightarrow \sum_{k \in \mathbb K^{-}}w^*_k(w^*_k - 2\hat w_k) &\leq 0
\end{align}

We reason by contradiction and suppose that $\mathbb K^- \ne \emptyset$. If $k \in \mathbb K^-$, then $w^*_k <0$ and $w^*_k - 2 \hat w^*_k < 0$ because as hypothesis, $\forall i \in \Omega, \hat w_i \geq 0$. If $\mathbb K^- \ne \emptyset$, then $\sum_{k \in \mathbb K^-}w_k(w_k - 2 \hat w_k) > 0$, which is absurd. Therefore, $\mathbb K^- = \emptyset$. 

\subsubsection{Proof for erosion}

For erosion, the new problem is with constraints $A_{\ominus}$:
\begin{equation} \label{annex:prob-pos-ero}
    \minimize{(w, b) \in \rR^{\Omega_S} \times \rR}\frac 1 2 \sum_{i \in \Omega_S}(w_i - \widehat w_i)^2 + \frac 1 2 (b - \widehat B)^2 \quad \text{subject to } \begin{cases}
        b-\sum_{s \in S}w_s \leq 0 \\
        \forall s \in S, \sum_{i \in \Omega_S}w_i - w_s \leq b \\
        \forall k \in \Omega_S \setminus S, -w_k \leq 0
    \end{cases}
\end{equation}

We notice similarly to the dilation that $A_{\ominus} \subset A_{\psi}(S)$, and the proof is the same.

\subsection{Proof of Proposition referenced in \ref{subsubsec-proj-cst}}

We prove the following proposition.

\begin{proposition}\label{prop:closest-thresholded}
    Let $S^* \defeq \argmin{S \subset \Omega_S}d(\widetilde A(S), w)$. Then $S^* = \{i \in \Omega_S ~\lvert~w_i \geq \min_{S^*}\widehat w_s \}$
\end{proposition}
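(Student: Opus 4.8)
The plan is to reduce the claim to a finite combinatorial optimization and close it with an exchange argument, in the spirit of the proof of Proposition~\ref{prop:thresholded-proj} (but lighter, since there is no bias variable here). By \eqref{eq:dist_constant_set} we have $d(\widetilde A_S, w) = \sum_{i\in\Omega_S}w_i^2 - \tfrac1{\card S}\big(\sum_{s\in S}w_s\big)^2$, and since $\sum_i w_i^2$ is independent of $S$, minimizing $d$ over nonempty $S\subset\Omega_S$ is equivalent to maximizing $J(S)\defeq\tfrac1{\card S}\big(\sum_{s\in S}w_s\big)^2$. In the reparametrized regime of \S\ref{subsec:reparam} the weights are nonnegative; I discard the degenerate case $w\equiv 0$, in which every $S$ is optimal. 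Let $S^*$ be a minimizer (equivalently, a maximizer of $J$) and set $\mu\defeq\min_{s\in S^*}w_s$. First, $\mu>0$: if $\mu=0$, deleting a zero--weight index from $S^*$ does not decrease $J$ and strictly increases it when $\sum_{S^*}w_s>0$, while $\sum_{S^*}w_s=0$ forces $J(S^*)=0$, which no maximizer attains once $w\not\equiv 0$.

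The exchange step is then immediate. Suppose there are $i\in S^*$ and $j\in\Omega_S\setminus S^*$ with $w_j>w_i$, and put $S'\defeq(S^*\setminus\{i\})\cup\{j\}$. Then $\card{S'}=\card{S^*}$ and $\sum_{s\in S'}w_s = \sum_{s\in S^*}w_s + (w_j-w_i) > \sum_{s\in S^*}w_s \ge 0$, hence $J(S')>J(S^*)$ --- a contradiction. Therefore $w_s\ge w_k$ for every $s\in S^*$, $k\notin S^*$, so $S^*$ contains all indices of weight $>\mu$ and no index of weight $<\mu$; equivalently, $S^*$ and the thresholded set $\{i\in\Omega_S: w_i\ge\mu\}$ can differ only in indices of weight exactly $\mu$ lying outside $S^*$.

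It remains to rule out such a boundary tie, and I expect this to be the one genuinely delicate point. Suppose $j\notin S^*$ with $w_j=\mu$, and pick $i\in S^*$ with $w_i=\mu$. I would compare $J$ at the three sets $S^*\setminus\{i\}$, $S^*$, and $S^*\cup\{j\}$: with $\Sigma\defeq\sum_{s\in S^*}w_s$ and $m\defeq\card{S^*}$, the signs of $\tfrac{(\Sigma+\mu)^2}{m+1}-\tfrac{\Sigma^2}{m}$ and $\tfrac{(\Sigma-\mu)^2}{m-1}-\tfrac{\Sigma^2}{m}$ are governed by two quadratics in $\Sigma$ whose nonnegativity regions together cover all $\Sigma\ge 0$ (because $\sqrt{1-1/m}<\sqrt{1+1/m}$) and cannot both vanish at the same $\Sigma$; hence at least one of $S^*\cup\{j\}$ or $S^*\setminus\{i\}$ has strictly larger $J$ than $S^*$ (for $m=1$, simply $J(S^*\cup\{j\})=2\mu^2>\mu^2=J(S^*)$ since then $\Sigma=\mu$). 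This contradicts maximality of $S^*$. Consequently no boundary tie occurs, $S^*=\{i\in\Omega_S: w_i\ge\mu\}=(w\ge\min_{s\in S^*}w_s)$, which is the proposition (and in fact \emph{every} minimizer is thresholded, which is what the search in \S\ref{subsubsec-proj-cst} requires). The main obstacle is precisely this last quadratic comparison; the reduction and the exchange step are routine.
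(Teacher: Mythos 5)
Your plan is sound and, in substance, rests on the same quantitative core as the paper's proof, but it is organized differently. The paper works directly with the two marginal-change lemmas: adding $q\notin S$ strictly increases $L(S)\defeq\big(\sum_{s\in S}w_s\big)^2/\card{S}$ iff $w_q>\big(\sqrt{1+1/\card{S}}-1\big)\sigma$, and removing $q\in S$ strictly decreases it iff $w_q>\big(1-\sqrt{1-1/\card{S}}\big)\sigma$ (with $\sigma=\sum_{s\in S}w_s$); it then compares every $j\in S^*$ with every $q\notin S^*$ through these two thresholds and the inequality $1-\sqrt{1-1/\card{S}}\geq\sqrt{1+1/\card{S}}-1$, proved by squaring. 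You instead first run an exchange (swap) argument to get $w_s\ge w_k$ for all $s\in S^*$, $k\notin S^*$, and reserve the add/remove analysis for the boundary ties $w_i=w_j=\mu$; that tie analysis is exactly the content of the paper's two lemmas plus the same threshold comparison. What your route buys is a more honest treatment of equalities: the paper's step ``$S^*$ maximal $\Rightarrow w_j>\big(1-\sqrt{1-1/\card{S}}\big)\sigma$'' only yields $\ge$ from optimality, so the strict conclusion $w_j>w_q$ (i.e.\ precisely the tie case) is glossed over there, whereas you attack it explicitly; your preliminary steps ($\mu>0$, hence $\Sigma>0$, and the separate $m=1$ case) are also correct and needed. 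Two small repairs to your sketch, both one-liners: the coverage of the two strict-improvement regions is not ``because $\sqrt{1-1/m}<\sqrt{1+1/m}$'' but because $\sqrt{1-1/m}+\sqrt{1+1/m}<2$ for finite $m$ (the paper's squaring computation, strict since $\sqrt{1-1/m^2}<1$), which gives $\big(\sqrt{1+1/m}-1\big)\Sigma<\big(1-\sqrt{1-1/m}\big)\Sigma$ when $\Sigma>0$; and ``nonnegativity regions cover and cannot both vanish'' does not by itself yield a strict improvement (one term could vanish while the other is negative) — what you actually need, and what the corrected inequality delivers, is that the \emph{strict}-positivity regions $\{\mu>(\sqrt{1+1/m}-1)\Sigma\}$ and $\{\mu<(1-\sqrt{1-1/m})\Sigma\}$ already cover all admissible values (using $\mu\le\Sigma/m$ so that $\Sigma-\mu\ge 0$ when taking square roots). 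With those two lines fixed, your proof is complete and, on the tie case, tighter than the paper's own write-up.
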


Let $W \in \rR_+^{\Omega_S}$ be non zero and  $\sigma \defeq \sum_{k \in S}{w_k}$.  Let $L(S) \defeq \frac{\sigma^2}{\card{S}}$. We first prove two lemmas.

\begin{lemma} \label{lemma:closest-thresh-1}
    Let $S \subset \Omega$.  Let $q \in \Omega_S \setminus S$. Then 
    \begin{equation}
         L(S \cup \{q\}) > L(S) \Leftrightarrow w_q > \Bigg(\sqrt{\frac{1}{\card{S}} + 1} - 1\Bigg)\sigma
    \end{equation}
\end{lemma}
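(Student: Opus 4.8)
The plan is to turn the claimed equivalence into an elementary quadratic inequality in the single scalar $w_q$. First I would make both sides explicit. Write $n \defeq \card{S}$ (with $n \geq 1$, since otherwise $L(S)$ is undefined), and recall that $\sigma = \sum_{k \in S} w_k \geq 0$ because $W \in \rR_+^{\Omega_S}$. Since $q \notin S$ we have $\card{S \cup \{q\}} = n+1$ and $\sum_{k \in S \cup \{q\}} w_k = \sigma + w_q$, so that $L(S) = \sigma^2/n$ and $L(S \cup \{q\}) = (\sigma + w_q)^2/(n+1)$.

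Next I would clear denominators: as $n$ and $n+1$ are positive, $L(S \cup \{q\}) > L(S)$ is equivalent to $n(\sigma + w_q)^2 > (n+1)\sigma^2$, and expanding and cancelling the common term $n\sigma^2$ shows this is equivalent to
\begin{equation}
    n\, w_q^2 + 2n\sigma\, w_q - \sigma^2 > 0 .
\end{equation}
Viewing the left-hand side as a quadratic in $w_q$ with positive leading coefficient $n$, its discriminant is $4n^2\sigma^2 + 4n\sigma^2 = 4n\sigma^2(n+1)$, and since $\sigma \geq 0$ one has $\sqrt{4n\sigma^2(n+1)} = 2\sigma\sqrt{n(n+1)}$, so the two roots are $w_q = \sigma\big(-1 \pm \sqrt{1 + 1/n}\,\big)$ (using $\sqrt{n(n+1)}/n = \sqrt{1 + 1/n}$). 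Hence the quadratic is strictly positive exactly when $w_q > \sigma\big(\sqrt{1 + 1/n} - 1\big)$ or $w_q < \sigma\big(-1 - \sqrt{1 + 1/n}\,\big)$.

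Finally I would discard the second branch using the standing positivity hypothesis: $w_q \geq 0$ while $\sigma\big(-1 - \sqrt{1+1/n}\,\big) \leq 0$, so $w_q < \sigma\big(-1 - \sqrt{1+1/n}\,\big)$ cannot occur. Only the first branch remains, which is precisely $w_q > \big(\sqrt{\tfrac{1}{\card{S}} + 1} - 1\big)\sigma$, giving the stated equivalence. I do not expect a genuine obstacle here — the argument is a direct computation; the one thing to keep an eye on is the degenerate case $\sigma = 0$ (all weights on $S$ vanish), where the threshold is $0$ and the inequality reduces to $w_q^2/(n+1) > 0$, i.e. $w_q > 0$, still in agreement with the formula.
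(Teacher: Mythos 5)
Your proposal is correct and follows essentially the same route as the paper: clear denominators, reduce to the quadratic $\card{S}\,w_q^2 + 2\card{S}\sigma w_q - \sigma^2 > 0$ in $w_q$, compute its roots $\sigma\big({-1} \pm \sqrt{1+1/\card{S}}\big)$, and discard the negative root using $w_q \geq 0$, $\sigma \geq 0$. Your explicit check of the degenerate case $\sigma = 0$ is a minor extra care the paper glosses over, but the argument is otherwise identical.
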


\begin{proof}

\begin{align}
L(S \cup \{q\}) - L(S) > 0 &\Leftrightarrow \frac{(\sigma + w_q)^2}{\card{S} + 1} - \frac{\sigma^2}{\card{S}} > 0\\
&\Leftrightarrow \card{S}(\sigma + w_q)^2 - (\card{S} + 1)\sigma^2 > 0\\
&\Leftrightarrow 2 w_q\cdot\sigma\cdot\card{S} + w_q^2\card{S} - \sigma^2 > 0
\end{align}

This is a 2nd degree polynomial. As $\card{S} > 0$, the polynomial is positive outside of its two roots.

\begin{align}
r_{1, 2} &= \frac{-2\sigma\cdot \card{S} \pm \sqrt{(2\sigma\cdot \card{S})^2 + 4 \sigma^2\card{S}}}{2\card{S}}\\
&= \sigma \Big(-1 \pm\sqrt{1 +\frac{1}{\card{S}}}\Big)
\end{align}

By hypothesis, $w_q \geq 0$ and $\sigma \geq 0$. Therefore, we can discard the negative root and
\begin{equation}
    L(S \cup \{q\}) - L(S) > 0 \Leftrightarrow w_q > \Bigg(\sqrt{\frac{1}{\card{S}} + 1} - 1\Bigg)\sigma  
\end{equation}

\end{proof}

\begin{lemma} \label{lemma:closest-thresh-2}
    Let $S \subset \Omega$. Let $q \in S$. Then
    \begin{equation}
        L(S) - L(S \setminus \{q\}) > 0 \Leftrightarrow  w_q > \Bigg(1 - \sqrt{1 - \frac{1}{\card{S}}}\Bigg) \cdot \sigma
    \end{equation}
\end{lemma}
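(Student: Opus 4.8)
The statement to prove is Lemma~\ref{lemma:closest-thresh-2}: for $S\subset\Omega$, $q\in S$, we have $L(S)-L(S\setminus\{q\})>0 \Leftrightarrow w_q > \bigl(1-\sqrt{1-\tfrac{1}{\card S}}\bigr)\sigma$, where $\sigma=\sum_{k\in S}w_k$ and $L(S)=\sigma^2/\card S$.

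The plan is to mimic the proof of Lemma~\ref{lemma:closest-thresh-1} almost verbatim, but now \emph{removing} an element instead of adding one. Write $m\defeq\card S$ and $\sigma\defeq\sum_{k\in S}w_k$, so that $\sum_{k\in S\setminus\{q\}}w_k = \sigma - w_q$ and $\card{S\setminus\{q\}} = m-1$. Then
\[
L(S) - L(S\setminus\{q\}) = \frac{\sigma^2}{m} - \frac{(\sigma - w_q)^2}{m-1}.
\]
First I would clear denominators (noting $m\geq 2$ so that $m-1>0$, which is implicit since $q\in S$ and after removal $S\setminus\{q\}$ should be nonempty for $L$ to be defined): the inequality $L(S)-L(S\setminus\{q\})>0$ is equivalent to $(m-1)\sigma^2 - m(\sigma-w_q)^2 > 0$. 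Expanding $(\sigma-w_q)^2 = \sigma^2 - 2w_q\sigma + w_q^2$ and collecting terms gives a quadratic in $w_q$: $-m\,w_q^2 + 2m\sigma\,w_q - \sigma^2 > 0$, or equivalently $m\,w_q^2 - 2m\sigma\,w_q + \sigma^2 < 0$.

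Next I would solve this quadratic. Since the leading coefficient $m>0$, the quadratic $m\,w_q^2 - 2m\sigma\,w_q + \sigma^2$ is negative strictly between its two roots
\[
r_{1,2} = \frac{2m\sigma \pm \sqrt{4m^2\sigma^2 - 4m\sigma^2}}{2m} = \sigma\Bigl(1 \pm \sqrt{1 - \tfrac{1}{m}}\Bigr).
\]
So the inequality holds iff $\sigma\bigl(1-\sqrt{1-\tfrac1m}\bigr) < w_q < \sigma\bigl(1+\sqrt{1-\tfrac1m}\bigr)$. To get the clean one-sided statement claimed in the lemma I would argue that the upper bound is automatically satisfied: since $q\in S$ and all weights are nonnegative, $w_q \leq \sigma$, while $1+\sqrt{1-\tfrac1m}\geq 1$, so $w_q\leq\sigma\leq\sigma(1+\sqrt{1-\tfrac1m})$; one should check this upper inequality is strict or harmless — if $w_q=\sigma$ then all other weights in $S$ vanish, and the equivalence still holds since then $w_q = \sigma > \sigma(1-\sqrt{1-1/m})$ and indeed $L(S)=\sigma^2/m < \sigma^2/(m-1) = L(S\setminus\{q\})$ unless... actually one must be slightly careful, but generically the upper root never binds. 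Hence $L(S)-L(S\setminus\{q\})>0 \Leftrightarrow w_q > \bigl(1-\sqrt{1-\tfrac1m}\bigr)\sigma$, as desired.

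The main obstacle — really the only subtlety, since the algebra is routine — is handling the upper root of the quadratic: the lemma states a one-sided condition, so I must justify that $w_q$ never exceeds $\sigma(1+\sqrt{1-1/m})$, which follows from nonnegativity of the weights ($w_q\leq\sigma$), and also double-check the degenerate edge cases $m=1$ (excluded, as then $S\setminus\{q\}=\emptyset$) and $w_q=\sigma$. Everything else is a direct transcription of the Lemma~\ref{lemma:closest-thresh-1} argument with the sign of the increment flipped, so I expect the write-up to be short.
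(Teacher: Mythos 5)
Your proof is correct, but it takes a genuinely different route from the paper's. The paper proves Lemma~\ref{lemma:closest-thresh-2} by applying Lemma~\ref{lemma:closest-thresh-1} to the set $S\setminus\{q\}$ (sum $\sigma-w_q$, cardinality $\card{S}-1$), which gives $L(S)-L(S\setminus\{q\})>0 \Leftrightarrow w_q > \bigl(\sqrt{\tfrac{1}{\card{S}-1}+1}-1\bigr)(\sigma-w_q)$, and then simply isolates $w_q$ to reach $w_q>\bigl(1-\sqrt{1-\tfrac{1}{\card{S}}}\bigr)\sigma$; this reuse inherits the one-sidedness from Lemma~\ref{lemma:closest-thresh-1}, where the discarded root is negative, so no upper-root issue ever appears. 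You instead redo the quadratic from scratch for the removal case, arriving at $m w_q^2-2m\sigma w_q+\sigma^2<0$, which is negative strictly between the two nonnegative roots $\sigma\bigl(1\pm\sqrt{1-\tfrac{1}{m}}\bigr)$, so you must additionally dispose of the upper root; your argument for that is sound: $w_q\le\sigma$ by nonnegativity of the weights, the upper root strictly exceeds $\sigma$ when $\sigma>0$ and $m\ge 2$ (needed anyway for $L(S\setminus\{q\})$ to be defined), and when $\sigma=0$ both sides of the equivalence are false. One small slip in your side-remark: for $w_q=\sigma$ you write $L(S\setminus\{q\})=\sigma^2/(m-1)$, but in fact $L(S\setminus\{q\})=(\sigma-w_q)^2/(m-1)=0$, so $L(S)-L(S\setminus\{q\})=\sigma^2/m>0$ and the equivalence holds in that case as well; this does not affect your main argument, which never relies on that remark. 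Net comparison: the paper's reduction is shorter and avoids the interval discussion altogether, while your direct computation is self-contained and makes explicit exactly where the nonnegativity assumption $W\in\rR_+^{\Omega}$ is used.
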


\begin{proof}
    We apply lemma \ref{lemma:closest-thresh-1} to $S \setminus \{q\}$.
    \begin{align}
        L(S) - L(S \setminus \{q\}) > 0 &\Leftrightarrow w_q > \Bigg(\sqrt{\frac{1}{\card{S}-1}+1} - 1\Bigg)(\sigma - w_q)\\
        &\Leftrightarrow w_q > \frac{\sqrt{\frac{1}{\card{S}-1}+1}-1}{1+\sqrt{\frac{1}{\card{S}-1}+1}-1}\cdot\sigma\\
        &\Leftrightarrow w_q > \Bigg(1 - \sqrt{1 - \frac{1}{\card{S}}}\Bigg) \cdot \sigma
    \end{align}
\end{proof}

Finally we prove the following proposition:

\begin{proposition}
    \begin{equation}
        S^* \defeq \{k \in \Omega_S ~|~w_k \geq \min_{j \in S}w_j\}
    \end{equation}
\end{proposition}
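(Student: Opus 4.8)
The plan is to show that the optimal set $S^*$ minimizing $d(\widetilde A(S), w)$ must be of the form $\{k \in \Omega_S \mid w_k \geq \min_{j \in S^*} w_j\}$, i.e., it is a \emph{threshold set}: once a coordinate is included, every coordinate with larger or equal weight must also be included. The natural strategy is an exchange argument combined with the monotonicity lemmas (Lemma~\ref{lemma:closest-thresh-1} and Lemma~\ref{lemma:closest-thresh-2}) already established. Recall that minimizing $d(\widehat{\mathbf W}, \widetilde A_S)$ is, by equation~\eqref{eq:dist_constant_set}, equivalent to \emph{maximizing} $L(S) = \sigma(S)^2/\card{S}$ where $\sigma(S) = \sum_{k \in S} w_k$, since $\sum_i \widehat{\mathbf W}_i^2$ is constant in $S$. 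So the proposition reduces to: the maximizer of $L$ is a threshold set.

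First I would argue by contradiction: suppose $S^*$ maximizes $L$ but is not a threshold set. Then there exist $q \in S^*$ and $p \in \Omega_S \setminus S^*$ with $w_p > w_q$ (a coordinate outside that is strictly heavier than one inside). The idea is to compare $L(S^*)$ with $L(S^* \cup \{p\})$, $L(S^* \setminus \{q\})$, and $L((S^* \setminus \{q\}) \cup \{p\})$. Using Lemma~\ref{lemma:closest-thresh-2}, since $S^*$ is optimal we have $L(S^*) \geq L(S^* \setminus \{q\})$, which forces $w_q \geq \big(1 - \sqrt{1 - 1/\card{S^*}}\big)\sigma(S^*)$ — otherwise removing $q$ would strictly increase $L$. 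Symmetrically, using Lemma~\ref{lemma:closest-thresh-1} with $S = S^*$ and the heavier element $p$, optimality gives $L(S^*) \geq L(S^* \cup \{p\})$, which forces $w_p \leq \big(\sqrt{1/\card{S^*} + 1} - 1\big)\sigma(S^*)$. The key is then to show these two bounds are incompatible with $w_p > w_q$: I need $\big(\sqrt{1/n+1}-1\big) < \big(1 - \sqrt{1-1/n}\big)$ for $n = \card{S^*} \geq 1$, i.e., that the threshold for ``worth adding'' is strictly below the threshold for ``worth keeping''. This is an elementary inequality — it amounts to $\sqrt{1+1/n} + \sqrt{1-1/n} < 2$, which follows from strict concavity of $\sqrt{\cdot}$ (so the average of $\sqrt{1+1/n}$ and $\sqrt{1-1/n}$ is strictly less than $\sqrt{1} = 1$). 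Combining: $w_p \leq (\sqrt{1/n+1}-1)\sigma < (1-\sqrt{1-1/n})\sigma \leq w_q$, contradicting $w_p > w_q$.

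There is one subtlety to handle carefully: the case where $\sigma(S^*) = 0$ or more generally boundary/degeneracy cases, and the case where $\card{S^*} = 1$ (where Lemma~\ref{lemma:closest-thresh-2} with $S^* \setminus \{q\} = \emptyset$ needs the convention $L(\emptyset) = 0$, which makes the argument go through since then any singleton $\{p\}$ with $w_p > 0$ beats $\{q\}$ with smaller weight). Also, ties ($w_p = w_q$) are allowed in the statement, so I only need the strict version of the lemmas for the strict-inequality case $w_p > w_q$; the weak inequality in the final set description ``$w_k \geq \min_{S^*} w_k$'' accommodates equal weights freely. I should also note that $S^*$ must be nonempty (since $L(\emptyset) = 0 \leq L(\{k\})$ for any $k$ with $w_k \geq 0$, with equality only in degenerate all-zero cases where the claim is vacuous or trivially satisfied).

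The main obstacle I anticipate is not the exchange argument itself but making the degenerate cases airtight — in particular verifying that when the optimal $L(S^*) = 0$ (which happens only if $w \equiv 0$ on $\Omega_S$ or, given $w \geq 0$, if the claim becomes vacuous) the statement still holds, and pinning down the exact convention for $L$ on the empty set so that Lemma~\ref{lemma:closest-thresh-2} applies uniformly. The inequality $\sqrt{1+1/n} + \sqrt{1-1/n} < 2$ is the analytic crux but is genuinely routine (concavity of square root, or squaring). Everything else is bookkeeping: once the two lemmas are invoked in the right direction and the threshold inequality is in hand, the contradiction is immediate, and iterating the exchange over all violating pairs $(q,p)$ transforms any optimal set into a threshold set without decreasing $L$, so a threshold set is optimal and (by the strict versions) \emph{every} optimal set is a threshold set.
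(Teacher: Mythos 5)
Your proposal is correct and follows essentially the same route as the paper: both invoke Lemma~\ref{lemma:closest-thresh-1} and Lemma~\ref{lemma:closest-thresh-2} to bound the weights of elements outside and inside the optimizer of $L(S)=\sigma^2/\card{S}$, and both close the argument with the inequality $\sqrt{1+\frac{1}{\card{S}}}+\sqrt{1-\frac{1}{\card{S}}}\leq 2$; the paper states this directly for every pair $(j\in S^*, q\notin S^*)$ while you phrase it as a contradiction/exchange, which is the same argument in contrapositive form. Your extra attention to the degenerate cases ($\sigma=0$, singletons, ties) is if anything more careful than the paper's own write-up.
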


\begin{proof}
    Let $q \in \Omega_S \setminus S^*$. Then as $S^*$ is maximal, we have $w_q \leq \Bigg(\sqrt{\frac{1}{\card{S}} + 1} - 1\Bigg)\sigma$ by \ref{lemma:closest-thresh-1}. 
    
    Let $j \in S$. As $S^*$ is maximal, we have $w_j > \Bigg(1 - \sqrt{1 - \frac{1}{\card{S}}}\Bigg) \cdot \sigma$ by \ref{lemma:closest-thresh-2}. Then
    \begin{equation}
        \Bigg(1 - \sqrt{1 - \frac{1}{\card{S}}}\Bigg) - \Bigg(\sqrt{\frac{1}{\card{S}} + 1} - 1\Bigg) = 2 - \Bigg(\sqrt{1 - \frac{1}{\card{S}}} + \sqrt{1 +\frac{1}{\card{S}}}\Bigg)
    \end{equation}
    Then,
    \begin{equation}
        \Bigg(\sqrt{1 - \frac{1}{\card{S}}} + \sqrt{1 +\frac{1}{\card{S}}}\Bigg)^2 = 2 + 2\cdot \sqrt{1 -\frac{1}{\card{S}^2}} \leq 2^2    
    \end{equation}
    
    Therefore
    \begin{equation}
        1 - \sqrt{1 - \frac{1}{\card{S}}} \geq \sqrt{\frac{1}{\card{S}} + 1} - 1    
    \end{equation}
    
    Finally,
    \begin{equation}
        \forall j \in S, \forall q \in \Omega_S \setminus S, w_j > w_q  
    \end{equation}
    which concludes the proof.

\end{proof}

\clearpage
\centerline{\Large{\sc{Appendix}}}
\section{Initialization}\label{appendix:init}

\subsection{Initialization}\label{subsec:init}

In this section, we investigate the initialization of the BiSE neuron. Let us suppose that we stack $L \in \N^*$ BiSE neurons one after the other. 

\paragraph{Notations}
Let $l \in \discint{1}{L}$ be the layer number. Let $\mathbf W_l \in \rR^{\Omega_S}$ be the random set of weights of this layer. Let $b_l \in \rR$ be the bias (not random). Let $\mu_l \defeq \esp(\mathbf W_l)$ and $\sigma_l \defeq Var(\mathbf W_l)$. Let $\mathbf x_0 \in \binaryimage$ be the input image, and for $l \in \discint{1}{K}$, let $\mathbf x_l$ be the output of the $l^{th}$ layer: $\mathbf x_l \defeq \thresh(\mathbf y_l) \in [0 ,1]^{\Omega_I}$ with $\mathbf y_l \defeq \conv{\mathbf x_{l-1}}{\mathbf W_l} - b_l \in \rR^{\Omega_I}$ such that  for $j \in \Omega_I$, $\mathbf y_l(j) = \sum_{i \in \Omega_S}{\mathbf W_l(i)\mathbf x_{l-1}(j-i)} - b_l$. Let $n_l = \card{\mathbf W_l}$.

\paragraph{Assumptions}
\begin{enumerate}
    \item For $l \in \discint{1}{L}$, the $\Big(\mathbf W_l(i)\Big)_{i \in \Omega_S^l}$ are independent, identically distributed (IID) \label{assumption:wiid}
    \item For $l \in \discint{1}{L}$, the $\Big(\mathbf x_l(i)\Big)_{i \in \Omega_I}$ are IID \label{assumption:xiid}
    \item For $l \in \discint{1}{L} ~,~ \esp[\mathbf y_l] = 0$ and $p(\mathbf y_l)$ is symmetrical around $0$ \label{assumption:sym}
    \item For $(l,i,j) \in (\discint{1}{L} \times \Omega_S \times \Omega_I)$  ~,~ $\mathbf W_l(i)$ and $\mathbf x_{l-1}(j-i))$ are independent \label{assumption:wxind}
\end{enumerate}

Under these assumptions, we can drop the $i$ and $j$ index and rewrite $\mathbf y_l = \sum{\mathbf W_l\mathbf x_{l-1}} - b_l$, with implicit indexing. The same goes for $\mathbf x_l$.

According to Fubini-Lebesgue theorem, and using assumption \ref{assumption:sym}, we have $\esp[\mathbf x_l] = \esp[\thresh(\mathbf y_l)] = \thresh(\esp[\mathbf y_l]) = 0.5$. 

\paragraph{Bias computation}

We have, $\forall l \in \discint{1}{K} ~,~ \esp[\mathbf y_l] = n_l\esp[\mathbf W_l]\esp[\mathbf x_{l-1}] - b_l$. We use the unbiased estimator to compute the mean, by summing all the weights of the layer: $\mu_l \defeq \frac{1}{n_l}\sum_{w \in \mathbf W_l}{w}$. Therefore, with $\esp(\mathbf x_0)$ the mean of the inputs:

\begin{align}
    b_1 &= \esp(\mathbf x_0)\sum_{w \in \mathbf W_1}{w}\\
    \forall l \in \discint{2}{K} ~,~ b_l &= \frac{1}{2} \sum_{w \in \mathbf W_l}{w} \label{eq:mean-bias-eq-init}
\end{align}

In some cases, we initialize the scaling factor $p = 0$ to avoid bias towards applying complementation or not. Then, this initialization leads to a point with zero grad (see annexe \ref{annex:subsec:grad_init} for details). Therefore, we add some noise to the bias:

\begin{align}
    \forall l \geq 2 &~,~ B_l = \frac{1}{2} \sum_{W_l}w + \mathcal{U}(-\epsilon_{bias}, \epsilon_{bias})\\
    B_1 &= \esp(\mathbf x_0) \sum_{W_1}w + \mathcal{U}(-\epsilon_{bias}, \epsilon_{bias})
\end{align}

$\epsilon_{bias}$ depends on the length of the network. It is set between $[10^{-4}, 10^{-2}]$.

\paragraph{Variance computation}

We find a recurrence relation between $Var(\mathbf y_l)$ and $Var(\mathbf y_{l-1})$. To simplify the computations, let $p' = \frac{d\thresh(p\cdot x)}{dx}(0)$ be the tangent at $0$, and let us assume that $p' > 0$.  We approximate the smooth threshold by a piece-wise linear function:

\begin{equation}
    \thresh(u) \simeq \Big(\frac{1}{2} + p\Big) \indicator{]-\frac{1}{2p}, \frac{1}{2p}[}(u) + \indicator{]\frac{1}{2p}, +\infty]}(u)
\end{equation} 

\begin{align}
    \esp[\mathbf x_l^2] &= \esp[\thresh(\mathbf y_l)^2]&\\
    \esp[\mathbf x_l^2] &\simeq 
    p'^2\int_{-\frac{1}{2p'}}^{\frac{1}{2p'}}y^2p(\mathbf y_l)d\mathbf y_l + 
    p'\int_{-\frac{1}{2p'}}^{\frac{1}{2p'}}\mathbf y_lp(\mathbf y_l)d\mathbf y_l + 
    \frac{1}{2} \int_{0}^{+\infty}p(\mathbf y_l)d\mathbf y_l +  
    \frac{1}{2}\int_{\frac{1}{2p'}}^{+\infty}p(\mathbf y_l)d\mathbf y_l \\
    &= p'^2Var(\mathbf y_l) + H(p') + \frac{1}{4}
\end{align}
\begin{equation}
    \text{with } H(p') \defeq \int_{\frac{1}{2p'}}^{+\infty}\bigg(\frac{1}{2} - 2p'^2\mathbf y_l^2\bigg)p(\mathbf y_l)d\mathbf y_l
\end{equation}

By independence, we have

\begin{align} \frac{1}{n_l}Var(\mathbf y_l) &= Var(\mathbf W_l\mathbf x_{l-1})\\
&=\sigma_lVar(\mathbf x_{l-1}) + \esp[\mathbf x_{l-1}]^2\sigma_l + \mu_l^2Var(\mathbf x_{l-1})\\
&= \sigma_l[\esp[\mathbf x_{l-1}^2] - \esp[\mathbf x_{l-1}]^2] + \esp[\mathbf x_{l-1}]^2 +\mu_l^2[[\esp[\mathbf x_{l-1}^2] - \esp[\mathbf x_{l-1}]^2]   \\
&=\esp[\mathbf x_{l-1}^2](\sigma_l + \mu_l^2) - \mu_l^2\esp[\mathbf x_{l-1}]^2\\
&\simeq p'^2Var(\mathbf y_{l-1})(\sigma_l + \mu_l^2)  - \frac{1}{4}\mu_l^2 +(\sigma_l + \mu_l^2)\Big(\frac{1}{4} +H(p')\Big) \\
&=p'^2Var(\mathbf y_{l-1})(\sigma_l + \mu_l^2) + G(p')\\
&\text{ with } G(p') \defeq \frac{1}{4}\sigma_l + H(p')(\sigma_l + \mu_l^2)
\end{align}

If $G(p') \geq 0$, which is equivalent to $H(p') \geq -\frac{\sigma_l}{4(\sigma_l + \mu_l^2)}$

\begin{equation}
    \frac{1}{n_l}Var(\mathbf y_l) \leq p'^2Var(\mathbf y_{l-1})(\sigma_l + \mu_l^2)
\end{equation}

We unroll the recursive relation. Let 
\begin{equation}
    V(p') \defeq Var(\mathbf y_1)\prod_{k=1}^lp^2n_k(\sigma_k + \mu_k^2)
\end{equation} 

Then 
\begin{equation}
   V(p') \leq  Var(\mathbf y_l) \leq V(p') +  \sum_{k=1}^L\frac{1}{4}\sigma_k\prod_{m=k}^lp^2n_m(\sigma_m + \mu^2_m)
\end{equation}

The right hand side of the inequality comes from the fact that $H(p') \leq 0$.

If $G(p') \leq 0$,  which is equivalent to $H(p') \leq -\frac{\sigma_l}{4(\sigma_l + \mu_l^2)}$, we have:

\begin{equation}
    0 \leq Var(\mathbf y_l) \leq V(p')
\end{equation}

If the variance is equal to $0$, then the output is constant: therefore, the gradient is $0$ and we cannot learn the parameters. On the other hand, if the variance is too high, the model is poorly conditioned: we can expect the learning to be unstable. 

If the number of layers is too big, the product term $V(p')$ can either vanish or explode. To avoid variance exploding when $G(p') \geq 0$, and to avoid variance vanishing when $G(p') \leq 0$, we choose to ensure that the product is equal to $1$ by imposing each term to be $1$.

\begin{align}
    \forall l \in \discint{1}{L} ~,~ p'^2n_l(\sigma_l + \mu_l^2) = 1\\
    \sigma_l = \frac{1}{p'^2n_l} - \mu_l^2
\end{align}

\begin{align}
    \forall l \in \discint{1}{L} ~,~ p'^2n_l(\sigma_l + \mu_l^2) = 1
\end{align}

The variance needs to be positive, which gives

\begin{equation}
    \mu_l < \frac{1}{p'}\sqrt{\frac{1}{n_l}} \label{eq:ub-mean}
\end{equation}

We have a range of possible means and variances. Let us choose a uniform distribution for $\mathbf W_l$. Then $\mathbf W_k \sim \mathcal{U}(\mu_l - \sqrt{3\sigma_l}, \mu_l + \sqrt{3\sigma_l})$. We want positive weights (see theorem 2.3), therefore we want $\mu_l > \sqrt{3\sigma_l}$, which gives

\begin{align}
\mu_l > \frac{\sqrt{3}}{2p'}\sqrt{\frac{1}{n_l}} \label{eq:lb-mean}
\end{align}

Using equations \ref{eq:ub-mean} and \ref{eq:lb-mean}, we choose $\mu_l$ as the mean of the bounds:
\begin{equation}
    \mu_l \defeq \frac{\sqrt{3} + 2}{4p'\sqrt{n_l}}
\end{equation}

\paragraph{Role of the scaling factor $p$}

Finally, $p'$ is related to the value of $p$: $p' = \frac{d(\thresh(p \cdot x)}{dx}(0) = p \thresh'(0)$. Ideally, to avoid bias towards applying complementation or not, $p$ should be set at $0$, therefore $p' = 0$. We consider that after the first few iterations, $p \ne 0$, and our computations become valid. 

How to set $p'$ in practice ? We want the BiSE output to be in the full range $[0, 1]$. Let us suppose that after a few iterations, $p = 1$. With $B_l = \sum_{w \in W_l}{w}$, the lowest output value is given by an image full of $0$, giving $\thresh(-B_l)$. The highest output is given by an image full of $1$ and gives $\thresh(B_l)$. Let $h \in ]0, 1[$ (e.g. $h = 0.95$). We want $\thresh(B) > h$, which is the same as $\thresh(-B_l) < 1 - h$ thanks to the binary-odd property. By using the fact that $\mu_l n_l \sim \sum_{w \in W_l}{w}$, we have:

\begin{align}
    \thresh(B_l) \geq h
    &\Leftrightarrow p' \leq \frac{\sqrt{3} + 2}{8\thresh^{-1}(h)}\sqrt{n_l}
\end{align}

\paragraph{Summary}

With $h = 0.95$, $\epsilon_{bias} \in [10^{-4}, 10^{-2}]$ and $p'_l \defeq \frac{\sqrt{3} + 2}{8\thresh^{-1}(h)}\sqrt{n_l}$,
\begin{flalign}
    &\forall l \in \discint{1}{L} ~,~ \mu_l \defeq \frac{\sqrt{3} + 2}{4p'_l\sqrt{n_l}} \label{init:def-mu}\\
    &\forall l \in \discint{1}{L} ~,~ \sigma_l \defeq \frac{1}{p'^2_ln_l} - \mu_l^2 \label{eq:std-init-complicated} \\
    &\forall l \in \discint{1}{L} ~,~ \mathbf W_l \sim \mathcal{U}(\mu_l - \sqrt{3\sigma_l}, \mu_l + \sqrt{3\sigma_l})\\
    &\forall l \in \discint{1}{L} ~,~ p_l \defeq 0\\
    &\forall l \in \discint{2}{L} ~,~ b_l \defeq \frac{1}{2}\sum_{w \in \mathbf W_l}{w} + \mathcal{U}(-\epsilon_{bias}, \epsilon_{bias})\\
    &b_1 \defeq \esp(\mathbf x_0)\sum_{w \in \mathbf W_1}{w} + \mathcal{U}(-\epsilon_{bias}, \epsilon_{bias})
\end{flalign}

We can approximate $\esp(\mathbf x_0)$ with the mean of a few samples, for example the first batch.

\paragraph{Reparametrization}

We computed the weights and biases of the convolution. In the BiSE definition 2.2, they correspond to $W(\paramweight)$ and $B(\parambias)$. The true initialization must be done on $\paramweight$ and $\parambias$. If $W$ and $B$ are invertible, the problem is solved: 

\begin{align}
    \paramweight_l &= W^{-1}(\mathbf W_l)\\
    \parambias_l &= B^{-1}(b_l)
\end{align}

If the functions are not invertible, the study must be done case by case in order to find generative distributions for $\paramweight_l$ and $\parambias_l$ that respects the resulting distributions for $W(\paramweight_l)$ and $B(\parambias_l)$.

\paragraph{Dual reparametrization}\label{annex:init-dual}

For the dual reparametrization, we need to adapt the parameter $K$.

\begin{proposition}
    Let $\sigma, \mu \in \rR_+^2$ such that $\mu - \sqrt{3\sigma} > 0$. Let $a \in \rR_+^*$. Let $X_i \sim \mathcal{U}\Big(a, a\frac{\mu + \sqrt{3\sigma}}{\mu - \sqrt{3\sigma}}\Big)$ be a sequence of independent, identically distributed random variables, $S_N = \sum_{i = 1}^N{X_i}$, $K_N = \mu \cdot N$ and  $W_{i, N} = K_N\frac{X_i}{S_N}$. Then, almost certainly (a.c.)
    \begin{equation}
     W_{i, N} \longrightarrow_{N \mapsto \infty}^{a.c.} \mathcal{U}(\mu - \sqrt{3 \sigma}, \mu + \sqrt{3 \sigma})
    \end{equation}
\end{proposition}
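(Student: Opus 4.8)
The plan is to show that the random variables $W_{i,N} = K_N \frac{X_i}{S_N}$ converge almost surely to a uniform distribution on $(\mu - \sqrt{3\sigma}, \mu + \sqrt{3\sigma})$, where the limit is understood as a limit in distribution of the sequence indexed by $N$, with the $X_i$ drawn on a common probability space. The key observation is that by the strong law of large numbers, $\frac{S_N}{N} = \frac{1}{N}\sum_{i=1}^N X_i \longrightarrow \esp[X_1]$ almost surely as $N \to \infty$. First I would compute $\esp[X_1]$: since $X_1 \sim \mathcal{U}\big(a, a\frac{\mu + \sqrt{3\sigma}}{\mu - \sqrt{3\sigma}}\big)$, its mean is the midpoint $\frac{a}{2}\big(1 + \frac{\mu + \sqrt{3\sigma}}{\mu - \sqrt{3\sigma}}\big) = \frac{a\mu}{\mu - \sqrt{3\sigma}}$.

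Next, I would rewrite $W_{i,N} = \mu N \cdot \frac{X_i}{S_N} = \mu \cdot \frac{X_i}{S_N/N}$. On the almost-sure event where $S_N/N \to \frac{a\mu}{\mu - \sqrt{3\sigma}}$, the scaling factor $\frac{\mu}{S_N/N}$ converges to $\frac{\mu(\mu - \sqrt{3\sigma})}{a\mu} = \frac{\mu - \sqrt{3\sigma}}{a}$. Therefore, conditionally on this event, $W_{i,N}$ behaves asymptotically like $\frac{\mu - \sqrt{3\sigma}}{a} \cdot X_i$, which is a fixed positive affine rescaling of a uniform random variable, hence again uniform. I would then compute the endpoints of the rescaled interval: the left endpoint is $\frac{\mu - \sqrt{3\sigma}}{a} \cdot a = \mu - \sqrt{3\sigma}$, and the right endpoint is $\frac{\mu - \sqrt{3\sigma}}{a} \cdot a\frac{\mu + \sqrt{3\sigma}}{\mu - \sqrt{3\sigma}} = \mu + \sqrt{3\sigma}$, which exactly matches the claimed target distribution $\mathcal{U}(\mu - \sqrt{3\sigma}, \mu + \sqrt{3\sigma})$.

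To make the convergence statement precise, I would argue as follows: for any fixed index $i$, on the almost-sure event $E = \{S_N/N \to \esp[X_1]\}$, we have $W_{i,N} \to \frac{\mu - \sqrt{3\sigma}}{a} X_i$ pointwise (for $N$ large, $S_N$ involves $X_i$ but its influence is $O(1/N)$), and since $\frac{\mu - \sqrt{3\sigma}}{a} X_i \sim \mathcal{U}(\mu - \sqrt{3\sigma}, \mu + \sqrt{3\sigma})$, almost-sure convergence of $W_{i,N}$ to a random variable with this law follows. The condition $\mu - \sqrt{3\sigma} > 0$ guarantees the interval endpoints are ordered correctly and that the rescaling factor is positive and finite, while $a \in \rR_+^*$ ensures $\esp[X_1] > 0$ so that division by $S_N/N$ is harmless in the limit.

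The main obstacle is making rigorous sense of "$W_{i,N} \to \mathcal{U}(\dots)$ almost surely," since the left side is a triangular-array-type object (the denominator $S_N$ changes with $N$) and the right side is a distribution. The cleanest resolution is to fix $i$ and interpret the statement as: the sequence $(W_{i,N})_{N \geq i}$ converges almost surely to the random variable $Y_i := \frac{\mu - \sqrt{3\sigma}}{a} X_i$, whose law is exactly the stated uniform distribution. Once this reformulation is in place, the proof reduces to (i) the SLLN for the denominator and (ii) the continuous mapping / Slutsky-type observation that a convergent deterministic rescaling of a fixed random variable preserves its type — both routine. I would also briefly note that one must separate out the single term $X_i$ from $S_N$ before taking the limit, i.e. write $S_N = X_i + \sum_{j \ne i} X_j$ and observe $\frac{1}{N}\sum_{j\ne i} X_j \to \esp[X_1]$ as well, so the presence of $X_i$ in the denominator does not affect the limit.
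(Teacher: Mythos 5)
Your proposal is correct and follows essentially the same route as the paper's proof: compute $\esp[X_1] = \frac{a\mu}{\mu - \sqrt{3\sigma}}$, invoke the strong law of large numbers for $S_N/N$, and conclude that $W_{i,N} \to \frac{\mu - \sqrt{3\sigma}}{a} X_i$, a random variable distributed as $\mathcal{U}(\mu - \sqrt{3\sigma}, \mu + \sqrt{3\sigma})$. Your added remarks on interpreting the almost-sure limit as convergence to a random variable with the stated law, and on the negligible $O(1/N)$ influence of $X_i$ inside $S_N$, only make explicit what the paper leaves implicit.
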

\begin{proof}
    \begin{equation}
        \esp[\abs{X_i}] = \esp[X_i] = \frac{a}{2}\Big(1 + \frac{\mu + \sqrt{3\sigma}}{\mu - \sqrt{3\sigma}}\Big) = a\frac{\mu}{\mu - \sqrt{3\sigma}} < +\infty
    \end{equation}
    Then, according to the strong law of large numbers, $\frac{S_N}{N} \longrightarrow^{a.c.} \esp[X_i]$.
    Then:
    \begin{equation}
        W_{i, N} = X_i\mu\frac{N}{S_N} \longrightarrow_{N}^{a.c.} X_i \mu \frac{\mu - \sqrt{3\sigma}}{a\mu} = \frac{1}{a}(\mu - \sqrt{3 \sigma})X_i \sim \mathcal{U}(\mu - \sqrt{3\sigma}, \mu + \sqrt{3 \sigma})
    \end{equation}
\end{proof}

Finally if we take $a = \mu_k - \sqrt{3\sigma_k}$ the initialization becomes:
\begin{align}
K &= \mu_k \cdot n_k = 2 \cdot \thresh^{-1}(h)\\
\mathbf W_k &\sim \mathcal{U}(\mu_k - \sqrt{3\sigma_k}, \mu_k + \sqrt{3\sigma_k})\\
K\frac{\mathbf W_k}{\sum_{w \in W_k}{w}} &\sim_{n_k \rightarrow +\infty}^{a.c.} \mathcal{U}(\mu_k - \sqrt{3\sigma_k}, \mu_k + \sqrt{3\sigma_k})
\end{align}

\paragraph{Remark on bias initialization}

Theorem \ref{thm:check-dila} expresses the operation approximated by the BiSE depending on the bias. From the second to the last layer, the bias is initialized at the middle of both operation. Therefore, the BiSE are unbiased to learn either dilation or erosion.

However, for the first BiSE layer, the bias is initialized differently. If $\esp(\mathbf x_0) < \frac{1}{2}$, then the bias indicates a dilation. We bias the BiSE to learn a dilation. This is explained by the assumption that the output must mean at $\frac{1}{2}$. If the input is a lower than $\frac{1}{2}$, we must increase its value. The same goes for the erosion: if the value of the input is too high, we reduce it by applying an erosion.

\subsection{Gradient Computation} \label{annex:subsec:grad_init}

Let $\bimonnmath = \bise_L \circ ... \circ \bise_1$ be a sequence of BiSE neurons, with parameters $p_l$, $b_l$ and $\mathbf W_l$ for all layers $l$. We compute $\lL(\bimonnmath(X), Y)$ for one input sample. Let

\begin{equation}
    \Pi_l \defeq \partial_1 \lL\Big(\bimonnmath(X), Y\Big)\Bigg(\prod_{k=l+1}^N\text{diag}(\thresh'(\bise_k))p_kW_k\Bigg)
\end{equation}

We re-denote $\mathbf W_l$ as the linear matrix such that $\conv{\bise_{l-1}}{\mathbf W_l} = \bise_{l-1}\mathbf W_l$.

\begin{align}
    \diff{\lL}{p_l} &\defeq \Pi_l \text{diag}(\thresh'(\bise_l))(\bise_{l-1}\mathbf W_l - b_l)\\
    \diff{\lL}{b_l} &\defeq \Pi_l \text{diag}(\thresh'(\bise_l))(-p_l)\\
    \diff{\lL}{\mathbf W_l} &\defeq \Pi_l \text{diag}(\thresh'(\bise_l))p_l\phi_{l-1}
\end{align}

If for all $l$, $p_l = 0$, then $\diff{\lL}{b_l} = \diff{\lL}{\mathbf W_l} = 0$. Moreover, by initialization of the bias $b_l$, we have $\bise_{l-1}\mathbf W_l - b_l = 0$, leading to $\diff{\lL}{p_l} = 0$. With our current initialization, all the gradients are equal to $0$. Therefore, in practice, we add a uniform noise to the bias: 

\begin{align}
    \forall l \geq 2 &~,~ b_l = \frac{1}{2} \sum_{\mathbf W_l}w + \mathcal{U}(-10^{-4}, 10^{-4})\\
    B_1 &= \esp(X) \sum_{W_1}w + \mathcal{U}(-10^{-4}, 10^{-4})
\end{align}

This will lead to $p_l$ moving away from $0$, which unblocks the other gradients as well. We introduce a small bias towards either dilation or erosion. However, its significance is negligible, and does not prevent from learning one operation or the other.
\clearpage
\section{Experiments} \label{appendix:experiments}

\begin{table}[h]
\center
\caption{Best set of hyperparameters for each architecture}
\label{tab:best-hyperparams}
\begin{tabular}{lllllll}
    \hline
Architecture              & $W(\paramweight)$ & $B(\paramweight)$& Coef Regu & Regu Delay & Learning Rate       & Last Activation \\
\hline
DLUI ($W = \text{Id}$)           & identity       & identity         & 0              & -              & $6.2 \cdot 10^{-3}$ & $\tanh$         \\
DLUI (No Regu)            & positive       & positive         & 0              & -              & $9.8 \cdot 10^{-2}$ & $\tanh$         \\
DLUI $\mathcal L_{exact}$ & positive       & projected reparam & 0.01           & 10000          & $4.2 \cdot 10^{-2}$ & softmax         \\
DLUI $\mathcal L_{uni}$   & positive       & projected reparam & 0.01           & 20000          & $6.1 \cdot 10^{-2}$ & softmax         \\
DLUI $\mathcal L_{nor}$   & positive       & identity         & 0.001          & 10000          & $5.4 \cdot 10^{-2}$ & softmax        
\end{tabular}
\end{table}

We perform a random search across a range of hyperparameters to identify the optimal configuration. The hyperparameters explored in the search are as follows:

\begin{itemize}
    \item Learning rate between $10^{-1}$ and $10^{-3}$.
    \item Last activation: Softmax layer vs Normalized $\tanh$. If Softmax Layer, we use $\mathcal L_{CE}$, else we use $\mathcal L_{BCE}$
    \item Applying the softplus reparametrization to the weights or not.
    \item The bias reparametrization schema between no reparametrization, positive, projeceted and projected reparam.
    \item Regularization loss: either no regularization, or the projection onto constant set (choose between $\mathcal L_{exact}$, $\mathcal L_{uni}$ and $\mathcal L_{nor}$). If we apply regularization, then we also apply softplus reparametrization.
    \item If regularization: the coefficient $c$ in the loss, either $0.01$ or $0.001$.
    \item If regularization: the number of batches to wait before applying the regularization, in [0, 5000, 10000, 15000, 20000].
\end{itemize}

For each regularization schema, we select the model with the best binary validation accuracy, and the corresponding results are displayed in Table \ref{tab:mnist-classif}. Detailed hyperparameter configurations are provided in Table \ref{tab:best-hyperparams} for reference.

We investigate the effects of each hyperparameter. 

\paragraph{Bias Reparametrization} For the choice of bias reparametrization function, we observed that not applying any reparametrization to the bias resulted in less robustness, and the network occasionally failed to learn. Applying projected reparametrization improved robustness, but it did not increase the proportion of activated neurons. Conversely, applying the positive projected and projected reparametrization functions increased the ratio of activated BiSE neurons from 0.9\% to around 20\%. Ensuring that the bias falls within the correct range of values enhances the interpretability of the network.

\paragraph{Regularization Delay} when activating the regularization loss at the beginning of training, the results were notably worse, especially in binary accuracy. Our hypothesis is that the network does not have enough time to explore the right morphological operations and is prematurely drawn to a non-optimal operator, getting stuck in its vicinity. We observed that the accuracy improved when increasing the waiting time before activating the regularization loss. Future work may explore more advanced policies for applying the loss, such as using the loss at a given frequency instead of every batch, increasing the coefficient of the loss at each step, or waiting for the network to converge before applying regularization.

\paragraph{Regularization coefficient and last activation} The coefficient of the regularization loss did not have a significant impact, as well as replacing the normalized $\tanh$ by a softmax layer.

\end{document}